\newtheorem{proposition}{Proposition}[section]
\newtheoremstyle{exampleitalic}
  {3pt}   
  {3pt}   
  {\normalfont} 
  {}      
  {\itshape} 
  {.}     
  {0.5em} 
  {}      
\theoremstyle{exampleitalic}
\newtheorem{innerexample}{Example}[section]
\newenvironment{example}
  {%
   \begin{innerexample}}
  {\end{innerexample}}
\journal{Information Sciences}
\newcounter{promptbox}
\definecolor{boxblue}{RGB}{0,126,214}
\definecolor{boxbg}{RGB}{235,245,255}
\lstdefinestyle{prompttiny}{
  basicstyle=\scriptsize\ttfamily\selectfont,  
  breaklines=true,
  breakatwhitespace=true,
  keepspaces=true,
  columns=fullflexible
}
\newtcolorbox{promptmini}[2][]{%
  enhanced, 
  colback=boxbg, 
  colframe=boxblue,
  coltitle=white, 
  colbacktitle=boxblue,
  fonttitle=\bfseries\footnotesize,
  attach boxed title to top left={xshift=4mm,yshift*=-1mm},
  arc=1mm, boxed title style={arc=1mm},
  left=2mm, right=2mm, top=1.5mm, bottom=1.5mm,
  boxrule=0.6pt,
  listing only,                         
  listing options={style=prompttiny},   
  width=\columnwidth,
    before upper = {\refstepcounter{promptbox}},
  #1}
\begin{document}

\begin{frontmatter}

\title{Large Language Model as Meta-Surrogate for Offline Data-Driven Many-Task Optimization: A Proof-of-Principle Study}

\tnotetext[mytitlenote]{
    This work was supported by the Guangdong Natural Science Funds for Distinguished Young Scholars under Grant 2022B1515020049; by the National Natural Science Foundation of China under Grant 62276100; and by the National Research Foundation of Korea under Grant RS-2025-00555463. 
}
\author[aff1]{Xian-Rong Zhang}
\author[aff1]{Yue-Jiao Gong\corref{mycorrespondingauthor}}
\ead{gongyuejiao@gmail.com}
\author[aff1]{Yuan-Ting Zhong}
\author[aff2]{Ting Huang}
\cortext[mycorrespondingauthor]{Corresponding author}
\author[aff3,aff4]{Jun Zhang}

\address[aff1]{School of Computer Science and Engineering, South China University of Technology, Guangzhou, China.}
\address[aff2]{Guangzhou Institute of Technology, Xidian University, China.}
\address[aff3]{College of Artificial Intelligence, Nankai University, Tianjin, China.}
\address[aff4]{Hanyang University, 15588 Ansan, South Korea.}

\begin{abstract}
In offline data-driven optimization scenarios, where new evaluation data cannot be obtained in real time and each ground-truth evaluation is often costly, surrogate models become a key technology for reducing simulation or experimental overhead. This study proposes a novel meta-surrogate framework to assist many-task offline optimization, by leveraging the knowledge transfer strengths and emergent capabilities of large language models (LLMs). We formulate a unified framework for many-task fitness prediction, by defining a universal model with metadata to fit a group of problems. Fitness prediction is performed on metadata and decision variables, enabling efficient knowledge sharing across tasks and adaptability to new tasks. The LLM-based meta-surrogate treats fitness prediction as conditional probability estimation, employing a unified token sequence representation for task metadata, inputs, and outputs.  This approach facilitates efficient inter-task knowledge sharing through shared token embeddings and captures complex task dependencies via many-task model training. Experimental results demonstrate the model's emergent generalization ability, including zero-shot performance on problems with unseen dimensions. When integrated into evolutionary transfer optimization (ETO), our framework supports dual-level knowledge transfer---at both the surrogate and individual levels---enhancing optimization efficiency and robustness. This work establishes a novel foundation for applying LLMs in surrogate modeling, offering a versatile solution for many-task optimization.
\end{abstract}

\begin{keyword}
Many-task optimization \sep Surrogate-assisted evolutionary algorithms \sep Large language models
\end{keyword}

\end{frontmatter}


\section{Introduction}
{Evolutionary} Algorithms (EAs) have been successfully applied to a wide range of complex optimization problems. However, in expensive optimization scenarios, EAs may perform poorly because they often require a large number of { real} function evaluations (FEs). To address this issue, existing studies incorporate surrogates into the {EA} framework~\cite{Expensive_Optimization_overview} to reduce the reliance on real FEs. This methodology is commonly referred to as Surrogate-Assisted Evolutionary Algorithms (SAEAs). Moreover, many expensive optimization problems do not allow direct computation of objective or constraint function values. Instead, they rely only on data collected from physical experiments, real-world events, or complex numerical simulations. This paradigm, which relies entirely on empirical data, is known as Data-Driven Evolutionary Algorithms {(DDEA)}~\cite{DDEA_overview}. 
Furthermore, problems that permit active sampling of individuals (data) through FEs are referred to as Online DDEAs~\cite{online_ddea}. {Conversely}, problems that only have access to historical statistical data 
are known as Offline DDEAs~\cite{offline_ddea}. 
Examples of Offline DDEA applications include ceramic formula design~\cite{Ceramic_Formula}, and hardware accelerator design~\cite{Hardware_Accelerators}.
Recent advances in offline DDEA research have addressed challenges such as data sparsity, surrogate model bias, and overfitting to noisy historical samples. For instance, 
Zhen \emph{et al.}~\cite{Zhen2025PDO} proposed a problem-driven model selection strategy that adaptively chooses the most appropriate surrogate according to the characteristics of the industrial dataset. { while Zhong \emph{et al.}~\cite{zhong2025data} applied a drift-aware streaming evolutionary algorithm to enable rapid adaptation in continuously changing dynamic offline environments.}
Despite these advances, most existing offline DDEAs focus on single-task settings, leaving multi-task or many-task offline optimization largely unexplored.

Evolutionary Transfer Optimization (ETO)~\cite{ETO} has recently emerged as a rapidly growing research topic that has attracted significant attention. {Its core idea is to leverage optimization experience or domain knowledge acquired from solving certain source tasks to enhance the search performance on other, related target tasks}. 
The technique of reusing information from source tasks to facilitate solving target tasks is known as knowledge transfer~\cite{OTMO}. Among the optimization problems addressed by ETO, {Multi-task Optimization Problems (MTOPs)} involve the simultaneous optimization of multiple tasks. A more challenging subclass, known as Many-task Optimization Problems (MaTOPs), concerns the simultaneous optimization of three or more tasks~\cite{thanh2022ensemble}. 
These problems are typically addressed under the assumption that some tasks share certain degree of similarity. 
Utilizing the EAs to optimize a set of problems concurrently through knowledge transfer has been shown to be more efficient than addressing each problem separately~\cite{TRADE}.
Nonetheless, in the context of ETO, the high costs associated with FEs continue to pose challenges in some real-world situations, presenting a major barrier to effective optimization. 
Consequently, recent studies of ETO~\cite{MaMPSO} have investigated surrogates to mitigate the high computational cost of FEs, demonstrating  their effectiveness in addressing the MaTOPs. 
However, these methods rely on actively sampling online data, which is infeasible in complex real-world applications where online FEs are not permitted.

{Traditional multi-task surrogate models, most notably multi-task Gaussian processes (MTGPs) and their neural counterparts, enhance data efficiency by exploiting cross-task correlations.}
MTGPs achieve this transfer by endowing the prior with a \emph{correlated kernel}, so that information gleaned from one task can inform predictions on another.
A fundamental limitation, however, is that \emph{most} MTGP frameworks presume a homogeneous input space; that is, every task must share identical feature dimensions and semantics.
This assumption seldom holds in real-world optimization, where tasks often differ in decision-vector length, data modality, or domain representation.
Recent work has begun to relax this restriction.
For instance, Min~\textit{et al.} \cite{invTrEMO} extend Bayesian optimization
to mismatched source-target domains by augmenting the GP with a learned
projection, while Liu~\textit{et al.} \cite{HSVLMC_liu} propose a heterogeneous
MTGP whose coregionalisation matrix aligns tasks with dissimilar inputs.
Yet three core challenges remain: (1) {Scalability.} With $T$ tasks and $N$ samples per task, the joint covariance is of size $(TN)\times(TN)$. Exact training via dense Cholesky costs $\mathcal{O}\!((TN)^{3})$ and even a single test prediction requires solving a linear system of cost $\mathcal{O}\!((TN)^{2})$ (or $\mathcal{O}\!(TN\log N)$ with Kronecker CG~\cite{Kronecker_Gp2}). When dealing with many-task scenarios, it has become untenable.
(2) {Input-space rigidity.} Most MTGP kernels mandate identical input dimensionality; heterogeneous tasks therefore need explicit domain-adaptation layers or hand-crafted mappings, which add {modeling} overhead and hyper-parameters. (3) Limited expressiveness of fixed kernels. Classical MTGPs rely on stationary kernels whose ability to capture highly non-linear or structured cross-task relations is limited; performance hinges on careful kernel selection and costly hyper-parameter tuning.

In recent years, Large Language Models (LLMs) have emerged as powerful tools capable of processing textual representations over large-scale heterogeneous datasets, capturing complex relationships between input features and output labels. Given that LLMs have demonstrated effectiveness beyond natural language processing (NLP) in tasks such as code generation~\cite{code_generation}, symbolic mathematics~\cite{Symbolic_Mathematics}, and scientific reasoning~\cite{science_infer}, a natural question arises: Can LLMs serve as many-task surrogate models for numerical regression?
The text-centric nature of LLMs is particularly appealing, as it offers the potential to bypass the need for laborious feature engineering and tensorization of raw inputs. To the best of our knowledge, no prior work has systematically investigated the feasibility and practicality of training a many-task regression predictor based on LLMs for offline DDEAs.
Specifically, we aim to fine-tune a pre-trained model suitable for many tasks and varied dimensions to directly predict the quality of new solutions. We refer to this model as \emph{meta-surrogate}. The main contributions of this paper are summarized below:

\begin{itemize}
    \item \textbf{A paradigm shift for many-task fitness prediction modeling:} 
    We formulate a fitness prediction paradigm through $\mathcal{T} = (\mathcal{X},\mathcal{M}, \mathcal{F}, \mathcal{D})$, where $\mathcal{X} \subset \mathbb{R}^n$ denotes the unified decision space,
 $\mathcal{M}$ denotes a metadata space with intrinsic descriptions to distinguish different tasks (e.g., objective description, dimensional information), $\mathcal{F}=\left\{f_m: \mathcal{X} \rightarrow \mathbb{R} \mid m \in \mathcal{M}\right\}$ is the ground-truth objective family, $\mathcal{D}=\left\{\left(m_i, x_i, y_i\right)\right\}_{i=1}^N$ is a dataset contains cross-objective evaluations adhering to $y_i=f_{m_i}\left(x_i\right)$. Our target is to construct a meta-surrogate $\hat{f}_{\rm{meta}}: \mathcal{X}\times\mathcal{M} \rightarrow \mathbb{R}$ that can process the decision values and problem metadata as input, in order to approximate the evaluation $y_i$ as $\hat{f}_{\rm{meta}}(x_i,m_i)$.
This framework enables knowledge sharing across many-task surrogate modeling, offering benefits like decreased sampling complexity per task and the adaptability to new tasks.

    \item \textbf{A LLM-based meta-surrogate:}
    We propose using a LLM as the meta-surrogate, casting fitness prediction as a conditional probability estimation problem given task metadata: $p(y|x, m)$. This framework represents metadata $m$, inputs $x$, and outputs $y$ by a unified token sequence representation,  
    enabling efficient inter-task knowledge sharing through shared token embeddings. 
    Leveraging the LLM's ability to model complex relationships in high-dimensional spaces, this approach excels in capturing intricate task dependencies and generalizing across diverse optimization problems. Then, by incorporating appropriate prompting during inference, the surrogate generates high-quality fitness predictions across diverse tasks. 

    \item \textbf{Proof of the emergent generalization ability of the meta-surrogate:}
    As an initial investigation, our experiments reveal that the meta-surrogate exhibits emergent generalization capabilities, particularly in dimensional scalability. For instance, it demonstrates zero-shot performance on optimization problems with unseen dimensions not encountered during training, highlighting its potential for dynamically adapting to new task environments.

    \item \textbf{Integration of the meta-surrogate to enhance MaTOP:}
    The meta-surrogate can be seamlessly integrated into most existing ETO algorithms, enabling an efficient offline data-driven many-task optimization framework.  
    Unlike previous data-driven ETO methods that primarily focus on knowledge sharing at the individual level, our approach bridges the gap by supporting knowledge transfer at both the surrogate and individual levels. This dual-level integration not only enhances optimization efficiency, but also provides new possibilities for more robust and versatile MaTOP solutions.
\end{itemize}

The remainder of this paper is organized as follows: \autoref{sec:bg} reviews the background and related work; \autoref{sec:method} presents a detailed description of the proposed algorithm; \autoref{sec:experiments} provides experimental comparisons and analyses; \autoref{sec:applications_potential} discusses the potential applications of MetaSurrogate in real-world scenarios; and \autoref{sec:conclusion} offers concluding remarks and outlines directions for future research.

\section{Preliminaries}
\label{sec:bg}

\subsection{Data-Driven Evolutionary Algorithms}

EAs have proven effective in solving many optimization problems under the common assumption that evaluating candidate solutions is both straightforward and inexpensive~\cite{xue2021evolutionary}. 
However, this assumption rarely holds for real-world optimization tasks. For instance, high-fidelity system optimization~\cite{high_fidelity} and {human-in-the-loop} interactive optimization~\cite{peole} often require computationally intensive numerical simulations or costly physical experiments to assess solution quality. Moreover, in certain practical scenarios such as trauma system optimization~\cite{trauma_systems} and blast furnace optimization~\cite{blast_furnace}, physical constraints may even prevent any evaluations during the evolutionary search process.
To mitigate computational costs, surrogates have been widely employed in evolutionary algorithms, giving rise to DDEAs~\cite{DDEA}. In DDEAs, a surrogate trained on limited data approximates the objective function and/or constraints, thereby reducing the number of expensive evaluations~\cite{DDEA}. 

DDEAs are typically classified into two categories.
In online DDEAs~\cite{online_ddea}, {a small number of FEs can still be obtained during the optimization process.}
Actively resampling data in the promising areas can enhance the accuracy of the surrogate models.
However, the heavy reliance on real FEs {renders} online DDEAs unsuitable for scenarios where objective functions are unavailable or prohibitively expensive to evaluate.
In contrast, offline DDEAs~\cite{offline_ddea} is not allowed to actively resample new data 
once the optimization process has started.
This {setting} is more appropriate for complex real-world problems where only historical data {are} available.
For example, in magnesia furnace optimization~\cite{fused_magnesium_furnaces}, the target values for electricity consumption per ton of magnesite cannot be freely obtained for several reasons. First, they are affected by complex environmental factors---for instance, at high temperatures the mixture of solid, liquid, and gas can cause various sensors to fail. Second, magnesite production involves multiple physical and chemical processes such as melting, impurity precipitation, and crystallization, which makes online evaluation impractical. 

\subsection{Multi-Task and Many-Task DDEAs}
Consider {an} MTOP comprising $NT$ optimization tasks. Each task $k$ ($k = 1, \ldots, NT$), denoted by $T_k$, can be formulated as follows:
\begin{align*}
\min\; y &= f_k(x), \\
\text{subject to }& x \in \mathcal{X}_k, \quad \mathcal{X}_k \subseteq \mathbb{R}^{D_k},
\end{align*}
where $f_k(\cdot)$ is the objective function of $T_k$, $\mathcal{X}_k$ is the search space, and $D_k$ denotes the dimension of that space.
While the pioneering Multi-Factorial Evolutionary Algorithm (MFEA)~\cite{gupta2015multifactorial} imposed no explicit limit on the number of tasks, recent studies~\cite{MaTDE} typically classify problems with more than three tasks as Many-Task Optimization Problems (MaTOP); this paper follows that convention (i.e., $NT>3$).

In recent two decades, several attempts have witnessed the significant development of using surrogates to address expensive optimization problems across multiple tasks. 
The commonly used surrogate models are polynomial regression (PR), Gaussian process (GP) model, radial basis function network (RBFN), and Conditional Neural Process (CNP), etc.
For example, 
SADE-KT~\cite{SADE-KT} develops a surrogate-based hybrid knowledge transfer strategy and a two-level surrogate-assisted search mechanism.

In contrast to standard GP, MTGP~\cite{bonilla2007multi} has gained significant attention for its ability to jointly model multiple tasks while capturing task-relevant information and the landscape knowledge.
Hetero-TBO~\cite{min2020generalizing} {addresses} feature-dimensional heterogeneity between source and target tasks using a transfer GP augmented with neural feature transformation. SELF~\cite{tan2024surrogate} introduces a two-phase optimization framework that employs MTGPs to achieve the tradeoff between the exploration and exploitation to improve the optimization efficiency.

\subsection{Research Motivation}
\label{subsec:motivation}
A surrogate model can be viewed as a regression predictor, with methods such as RBFNs and GPs having long served as the cornerstone of surrogate modeling. 
In the many-task setting, researchers have extended GPs to many-output forms to fit multiple related functions simultaneously. Early approaches include the Linear Model of Coregionalization (LMC) and other many-output GP frameworks, which introduce task-specific kernels combined with a shared latent process to enable information transfer across tasks~\cite{HSVLMC_liu}. However, GP-based many-task surrogates still face three fundamental challenges:
\begin{enumerate}[label=(\arabic*)]  
    \item \textbf{Limited scalability:} Training a GP on \(N\) data points incurs a computational complexity of \(O(N^3)\). When jointly modeling \(T\) tasks (totaling approximately \(T \times N\) points), this complexity escalates to \(O((TN)^3)\), making it infeasible for large-scale scenarios. Although sparse GP approximations and variational methods can partially alleviate the computational burden, they inevitably introduce additional approximation errors. In the originally planned scenario of 50 tasks with 1\,000 samples each (a total of 50\,000 data points), the joint covariance matrix of the MTGP would grow to $50\,000 \times 50\,000$. Even with a Kronecker-structured \emph{lazy tensor} to avoid explicit storage~\cite{Kronecker_Gp2}, \emph{exact} inference still requires repeated computation of 
    $\mathbf K^{-1}\mathbf v$ and $\log\det\mathbf K$, where $\mathbf K^{-1}\mathbf v$ denotes the solution of a linear system with the covariance matrix (needed for posterior means and variances) and $\log\det\mathbf K$ is the log-determinant term that enters the marginal likelihood.  
    The associated time complexity remains quadratic–to–cubic in $TN$: a single Cholesky factorisation alone would exceed $10^{15}$ floating-point operations and occupy about 10 GB of GPU memory---already beyond the capacity of a standard 24GB graphics card.  In practice one must resort to conjugate-gradient (CG) iterations, whose convergence speed is governed by the condition number of $\mathbf K$; however, the Kronecker product amplifies tiny eigenvalues in either the task kernel or the input kernel, often causing residual stagnation and even negative predictive variances.  With highly optimized GPU kernels, per-point inference still takes roughly 10 ms, so batch prediction of merely 1\,000 candidate points induces delays of several seconds---and on a 24~GB GPU we even observed inference failure when \emph{batch~=~1}.  Owing to these resource bottlenecks, the present experiments were forced to down-scale to 500 samples per task and a smaller number of tasks. By contrast, LLMs (e.g.\ GPT-4) exhibit inference speeds that remain essentially constant irrespective of the number of training tasks, reflecting their markedly superior scalability in practice.

    \item \textbf{Complexity of heterogeneous input representation:} Most MTGP frameworks assume that all tasks share the same input feature space, whereas in practice, tasks often differ in dimensionality or data type, making a single kernel difficult to accommodate. To address this, some studies encode task identity as an additional input feature---known as {Contextual Bayesian Optimization (Contextual BO)}~\cite{Context_BO}---by embedding task indicators or features into the input vector and incorporating task similarity into the kernel. However, the performance of contextual BO critically depends on the quality of the task representation; if the task kernel is mis-specified (for example, treating related tasks as unrelated or vice versa), negative transfer may occur. 

     \item \textbf{Complexity of task adaptation:} To address input heterogeneity, some approaches adopt domain adaptation techniques~\cite{Domain_Adaptation} that learn mapping functions to align semantically similar solutions across tasks. However, these mappings must be learned from data and typically require overlapping or strongly related tasks. Any inaccuracies in the learned mapping or task representation can significantly degrade model performance. 
     {Recent advances in deep learning, such as Transformers~\cite{tab_pfn} and graph neural networks (GNNs)~\cite{gnn_runtime_prediction}, relax some of these constraints, yet they still rely heavily on fixed-format \((x, y)\) representations.}
     
\end{enumerate}

{In summary, prior work, ranging from MTGP with contextual kernels to task-aligned neural-network surrogates, relies heavily on engineered input–output and task representations, making it difficult to scale to multi-task scenarios.}
Drawing on insights from these domains, we propose a novel paradigm that employs LLMs as meta-surrogates. For the first time in an offline, data-driven setting for expensive many-task optimization, we fine-tune a pre-trained language model by encoding numerical decision variables, fitness values, and textual metadata into unified token sequences to assess its feasibility and robustness as a many-task regression surrogate. The token paradigm has already proven effective in Reinforcement Learning from Human Feedback (RLHF)~~\cite{rlhf}, where LLMs emulate human evaluators via pairwise ranking to generate regression-style reward scores. More importantly, LLMs can seamlessly integrate textual context into their inputs, offering unprecedented flexibility for many-task surrogate modeling.

\section{Methodology}
\label{sec:method}

In this section, we introduce MetaSurrogate, {an} LLM-based surrogate for assisting many-task DDEAs. Our main objective is to fine-tune a many-task surrogate by harnessing the many-task learning capabilities of LLMs and subsequently develop an offline {DDEA} for MaTOP.

\begin{figure*}[htbp]
    \centering
    \includegraphics[width=0.9\linewidth]{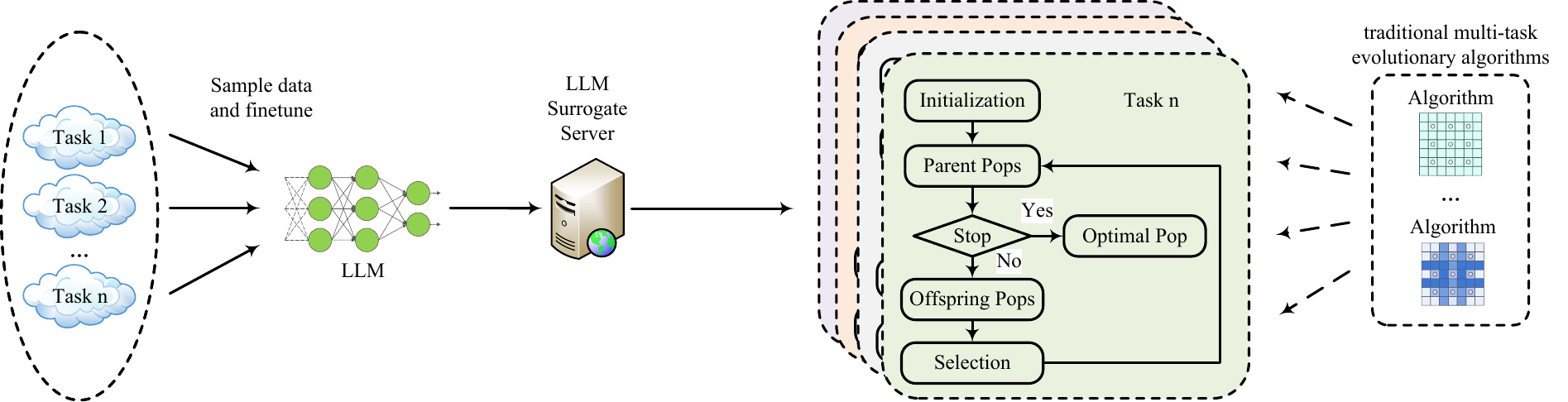}
    \caption{{Overview of the MetaSurrogate framework.}}
    \label{fig:overview}
\end{figure*}

As illustrated in Figure~\ref{fig:overview}, our approach follows the offline DDEAs pipeline. Before the optimization algorithm commences, offline data are sampled from multiple tasks. The LLM is then fine-tuned offline on the dataset corresponding to various tasks. After fine-tuning, the resulting model is deployed as a microservice, providing fitness prediction across different optimization tasks and enabling result retrieval via HTTP requests.
During many-task optimization, the ETO algorithms send HTTP POST requests to the LLM microservice to query fitness for solutions at specific tasks. Notebly, our paradigm suits most existing ETO methods.

\subsection{Many-Task Fitness Prediction
Modeling}
\label{subsec:llm_as_surrogate}
As a proof-of-concept investigation, this paper centers on single-objective numeric MaTOP. 
The task for many-task fitness prediction is defined as
\begin{equation}
    \mathcal{T} = (\mathcal{X},\mathcal{M}, \mathcal{F}, \mathcal{D})
\end{equation}
where 
\begin{itemize}
    \item $\mathcal{X} \subset \mathbb{R}^n$ denotes the unified decision space,
    \item $\mathcal{M}$ denotes a metadata space with intrinsic descriptions to distinguish different {tasks},
    \item $\mathcal{F}=\left\{f_m: \mathcal{X} \rightarrow \mathbb{R} \mid m \in \mathcal{M}\right\}$ is the set of ground-truth objective functions, 
    \item $\mathcal{D}=\left\{\left(m_i, x_i, y_i\right)\right\}_{i=1}^N$ is a dataset {that} contains cross-objective evaluations adhering to $y_i=f_{m_i}\left(x_i\right)$.
\end{itemize}
The construction of the dataset $\mathcal{D}$ involves aggregating evaluations across diverse tasks. For any task $k$ (where $k \in\{1,2, \ldots, N T\}$), solutions $x_i^k \in \mathcal{X}_k$ are evaluated under the corresponding objective function $f_k \in \mathcal{F}$, generating triples $(m_k, x_i^k, y_i^k)$ where $y_i^k=f_k(x_i^k)$ represents the fitness value, and $m_k \in \mathcal{M}$ encodes task-specific metadata to distinguish $f_k$ from other objectives. These triples collectively form the task-specific dataset $\mathcal{D}_k=\{(m_k, x_1^k, y_1^k), \ldots,(m_k, x_{N_k}^k, y_{N_k}^k)\}$, with $N_k$ denoting the number of function evaluations for task $k$. The global cross-task dataset $\mathcal{D}$ is then derived by unifying all task-specific data:

\begin{equation}
\mathcal{D}=\bigcup_{k=1}^{N T} \mathcal{D}_k
\end{equation}
This structure enables joint modeling of heterogeneous tasks, where metadata $m_k$ explicitly resolves ambiguities between objectives (e.g., distinguishing a high-dimensional optimization landscape from a low-dimensional one) while the unified decision space $\mathcal{X}$ facilitates latent knowledge transfer. The dataset $\mathcal{D}$ thus serves as a foundation for training surrogate models to infer implicit relationships across tasks.

Our target is to learn a meta-surrogate model as
\begin{equation}
\hat{f}_{\rm{meta}}: \mathcal{X} \times \mathcal{M} \rightarrow \mathbb{R}, \quad (x, m) \mapsto \hat{y} = \hat{f}_{\rm{meta}}(x, m)
\end{equation}
which processes the decision values $x$ and problem metadata $m$ as input, and output fitness $\hat{y}$ for the specific {task}.

In this paper, we implement the meta-surrogate via fine-tuning an encoder-decoder LLM, where the model's conditional probability formulation $p(y \mid m, x)$ directly parameterizes the fitness prediction $\hat{f}_{\text {meta }}(m, x)$. This allows us to learn universal weights $\theta$ that construct an adaptive predictor generalizable across the task space $\mathcal{T}$. Unlike traditional surrogates requiring per-task basis function engineering, our language model-based surrogate achieves three fundamental advantages: (1) unified representation learning across task metadata and decision variables, (2) elimination of task-specific model retraining, and (3) exponential scaling of data efficiency via pretrained priors.

\subsection{Textual Representation}
\label{subsec:represent}

This subsection introduces textual representations for the fundamental elements in the dataset: numeric variables ($x$ and $y$) and task descriptors ($m$).

\subsubsection{Numeric Representation}
The fundamental challenge in adapting LLMs as meta-surrogates lies in establishing scale-invariant representations for continuous decision variables $x$ and fitness values $y$. Naive string conversions induce spurious ordinal relationships (e.g., 3.11 > 3.9 in lexical comparison), while traditional normalization schemes risk task-specific bias. To address this, we utilize a Scientific Notation Encoding (SNE): each floating-point number $z$ undergoes deterministic conversion to:

\begin{equation}
  \phi(z)=[ \pm]\left\langle 10^k\right\rangle d_1 d_2 \ldots d_{\gamma}.  
\end{equation}
where $[\pm]$ is the sign bit, $k$ denotes the exponent of the most significant digit $d_1$, and $d_1 d_2 \ldots d_\gamma$ represent $\gamma$ mantissa digits. Notably, to save on tokens, we omit the decimal point following the first mantissa digit. 

\begin{table}[t]
\centering
\caption{Examples of text representations in language models}
\resizebox{0.9\textwidth}{!}{%
    {
            \begin{tabular}{|c|p{0.5\textwidth}|p{0.5\textwidth}|}
        \hline
         Data & Origin data & LLM-Ready Text\\ \hline
        $m$ & BBOB, instance=0, Sphere, dimension=4
        &  
        You are a many-task surrogate model, predict fitness given m and pop;
        \texttt{function name is \{Sphere\}, function ID is \{F1\}, key feature 1 is \{BBOB\} | key feature 2 is \{instance=0\}|$\cdots$|the dimensionality is dim=\{4\}.}  \\ \hline
        $x$ & [-2.065349139, -2.570456278, -3.38108745, -3.38108745] & 
        [- \text{<10\textasciicircum 0>} \text{2} \text{0} \text{6} \text{5} \text{3} \text{4} \text{9} \text{1} \text{3} \text{9}, 
         - \text{<10\textasciicircum 0>} \text{2} \text{5} \text{7} \text{0} \text{4} \text{5} \text{6} \text{2} \text{7} \text{8}, 
         - \text{<10\textasciicircum 0>} \text{3} \text{3} \text{8} \text{1} \text{0} \text{8} \text{7} \text{4} \text{5} \text{0}, 
         + \text{<10\textasciicircum 0>} \text{4} \text{4} \text{1} \text{2} \text{2} \text{6} \text{5} \text{2} \text{3} \text{9}] \\ \hline
        $y$ & 1740.050843 & 
        [+ \text{<10\textasciicircum 3>} \text{1} \text{7} \text{4} \text{0} \text{0} \text{5} \text{0} \text{8} \text{4} \text{3}] \\ \hline
    \end{tabular}%
    }
}
\label{tab:Textual_Representation}
\end{table}
As demonstrated in Table~\ref{tab:Textual_Representation}, the first dimension of variable $x$, \texttt{"-2.065349139"},  is converted into \texttt{"- <10\textasciicircum0> 2 0 6 5 3 4 9 1 3 9"}. 
We insert spaces between digits
and symbols in the scientific notation strings, ensuring that each digit or symbol is tokenized individually. Multiple variables are represented as an SNE array. The objective function value is represented in the same format, ensuring consistency between the input and the output.

Although the SNE addresses lexical ambiguity in numerical forms, its success heavily relies on how the exponential components are tokenized. Note that, standard numbers (0-9) and signs (+/-) usually exist in the LLM's pre-existing vocabulary, but the exponent parts inside angle brackets (e.g., <10\textasciicircum3>) do not. We suggest a hybrid strategy where only exponent parts within angle brackets (<10\textasciicircum$k$>) are created as new tokens, while keeping the digits (0-9) and basic symbols (+/-) from the original vocabulary. These new tokens for exponents have an initial random setup but are optimized with other parameters, creating a balance between adapting to the domain and preserving existing knowledge. 

We formalize the numeric scheme. For any $z\in\mathbb{R}$, let $s=\mathrm{sign}(z)\in\{+1,-1,0\}$ and $u=|z|$. When $u=0$ we encode a canonical zero; when $u>0$, define $k=\lfloor \log_{10} u \rfloor$ and write $u=m\cdot 10^k$ with $m\in[1,10)$. SNE keeps $\gamma$ significant digits by rounding the mantissa to $\tilde m=\mathrm{round}_\gamma(m)$ and forms $\tilde z=s\,\tilde m\,10^{k}$. This decomposition is unique because every positive $u$ admits a unique pair $(m,k)$ with $10^{k}\le u<10^{k+1}$.

\begin{proposition}[Error bounds]\label{prop:err}
Let $\tilde m$ be $m$ rounded to $\gamma$ significant digits. Then
\begin{subequations}\label{eq:err-bounds}
\begin{equation}\label{eq:abs-err}
|z-\tilde z| \;=\; 10^k\,|m-\tilde m| \;\le\; \tfrac{1}{2}\,10^{k+1-\gamma},
\end{equation}
\begin{equation}\label{eq:rel-err}
\frac{|z-\tilde z|}{|z|} \;=\; \frac{|m-\tilde m|}{m} \;\le\; \tfrac{1}{2}\cdot 10^{\,1-\gamma}.
\end{equation}
\end{subequations}
\end{proposition}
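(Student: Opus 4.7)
The plan is to deduce both bounds directly from the decomposition $z=s\,m\,10^{k}$ and the standard ``round to $\gamma$ significant digits'' rule, treating the trivial case $z=0$ separately.

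First I would record the identity that drives everything: since both $z$ and $\tilde z$ share the same sign $s$ and the same exponent $10^{k}$, we have
\begin{equation*}
z-\tilde z \;=\; s\,(m-\tilde m)\,10^{k},
\end{equation*}
so taking absolute values immediately yields $|z-\tilde z|=10^{k}|m-\tilde m|$, which is the first equality in \eqref{eq:abs-err}. This step is purely algebraic and uses only the decomposition introduced just before the proposition.

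Next I would bound $|m-\tilde m|$ via the rounding rule. Because $m\in[1,10)$, its decimal expansion is $d_1.d_2d_3\cdots$ with $d_1\in\{1,\ldots,9\}$, so rounding to $\gamma$ significant digits is equivalent to rounding $m$ to the nearest multiple of $10^{1-\gamma}$. Deterministic round-to-nearest then gives $|m-\tilde m|\le \tfrac{1}{2}\cdot 10^{1-\gamma}$, and substituting into the identity above delivers \eqref{eq:abs-err}. For the relative bound \eqref{eq:rel-err}, I would divide the identity by $|z|=m\,10^{k}$ (permissible whenever $z\neq 0$), giving $|z-\tilde z|/|z|=|m-\tilde m|/m$; using $m\ge 1$ to absorb the denominator then finishes the proof.

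The only subtlety, and the step I expect to be the main obstacle, is the corner case in which rounding pushes the mantissa to exactly $10$ (for instance $m=9.9995$ with $\gamma=4$), because the normalisation $m\in[1,10)$ requires a re-normalisation $(\tilde m,k)\mapsto(\tilde m/10,k+1)$. I would handle this by noting that after re-normalisation the new mantissa equals $1$ and the representation $\tilde z=s\,\tilde m\,10^{k}$ is unchanged as a real number, so the bound $|m-\tilde m|\le \tfrac{1}{2}\cdot 10^{1-\gamma}$ transferred from the pre-normalised representation still controls $|z-\tilde z|/10^{k}$, and the stated inequalities hold without modification. The $z=0$ case is trivial: $\tilde z=0$ by the canonical-zero convention, making \eqref{eq:abs-err} an equality $0\le 0$ and \eqref{eq:rel-err} vacuous.
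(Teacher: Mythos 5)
Your proof is correct and follows essentially the same route as the paper's: the identity $|z-\tilde z|=10^{k}|m-\tilde m|$ from the shared sign and exponent, the half-step bound $|m-\tilde m|\le\tfrac12\cdot 10^{1-\gamma}$ from round-to-nearest on the last significant digit, and division by $|z|=m\,10^{k}$ with $m\ge 1$ for the relative bound. Your additional treatment of the mantissa-overflow corner case ($\tilde m=10$ forcing re-normalisation) and of $z=0$ is a welcome refinement that the paper's proof leaves implicit.
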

\begin{proof}
The last significant digit has step size $10^{-(\gamma-1)}$, hence rounding incurs at most half a step,
\begin{equation}\label{eq:round-step}
|m-\tilde m|\;\le\; \tfrac{1}{2}\cdot 10^{-(\gamma-1)} \;=\; \tfrac{1}{2}\cdot 10^{\,1-\gamma}.
\end{equation}
Absolute error \eqref{eq:abs-err} follows since $z$ and $\tilde z$ share the same exponent,
\begin{equation}\label{eq:abs-err-deriv}
|z-\tilde z| \;=\; |s\,m\,10^{k}-s\,\tilde m\,10^{k}| \;=\; 10^{k}\,|m-\tilde m| \;\le\; \tfrac{1}{2}\,10^{k+1-\gamma}.
\end{equation}
For the relative error \eqref{eq:rel-err}, use $|z|=m\,10^{k}$ and $m\ge 1$,
\begin{equation}\label{eq:rel-err-deriv}
\frac{|z-\tilde z|}{|z|} \;=\; \frac{|m-\tilde m|}{m}
\;\le\; \frac{\tfrac{1}{2}\,10^{\,1-\gamma}}{m}
\;\le\; \tfrac{1}{2}\cdot 10^{\,1-\gamma}.
\end{equation}
\end{proof}

\begin{example}
For $z=-2.065349139\times 10^{0}$ ($k=0$, $m=2.065349139$) and $\gamma=4$, the step is $10^{-(4-1)}=10^{-3}$, hence $|m-\tilde m|\le 5\times 10^{-4}$, giving $|z-\tilde z|\le 5\times 10^{-4}$ and worst-case relative bound $5\times 10^{-4}$; using the actual $m$ yields $(5\times 10^{-4})/2.065\approx 2.42\times 10^{-4}$.
\end{example}

In the decimal SNE with $\gamma$ significant digits and exponent $k$, the spacing between two adjacent representable numbers near magnitude $10^k$ equals the unit in last place (ULP),
\begin{equation}\label{eq:ulp-def}
\mathrm{ULP}(k,\gamma)=10^{k+1-\gamma}.
\end{equation}
Indeed, adjacent mantissae differ by $10^{-(\gamma-1)}$, and multiplying by $10^k$ yields \eqref{eq:ulp-def}. Rounding to the nearest representable value therefore incurs at most half an ULP in absolute error,
\begin{equation}\label{eq:half-ulp}
|z-\tilde z|\le \tfrac{1}{2}\,\mathrm{ULP}(k,\gamma)=\tfrac{1}{2}\,10^{k+1-\gamma}.
\end{equation}
Since $|z|=m\,10^{k}$ with $m\in[1,10)$, the worst-case relative error is scale-invariant:
\begin{equation}\label{eq:rel-ulp}
\frac{|z-\tilde z|}{|z|}\le \tfrac{1}{2}\cdot 10^{1-\gamma}.
\end{equation}

\begin{example}
With $\gamma=4$ and $k=0$, $\mathrm{ULP}=10^{-3}$, hence $|z-\tilde z|\le 5\times 10^{-4}$; for $k=6$, $\mathrm{ULP}=10^{3}$ and the bound scales accordingly, while \eqref{eq:rel-ulp} remains unchanged.
\end{example}

\begin{proposition}[Representable range]\label{prop:range}
Assume the exponent tokens cover $k\in K=[k_{\min},k_{\max}]$ and $\gamma$ significant digits are retained. Then every nonzero representable magnitude lies in
\begin{equation}\label{eq:range}
|z| \;\in\; \big[\, 1\cdot 10^{k_{\min}},\; (10-10^{\,1-\gamma})\cdot 10^{k_{\max}} \,\big].
\end{equation}
\end{proposition}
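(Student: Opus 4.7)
The plan is to reduce the enclosure to an independent analysis of the two multiplicative factors in the SNE representation, bound each on its own, and then combine them by monotonicity.

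First, I would invoke the SNE construction recalled before the proposition to write any nonzero representable magnitude as $|\tilde z|=\tilde m\cdot 10^{k}$ with $k\in K=[k_{\min},k_{\max}]$ and a normalized mantissa
\begin{equation}
\tilde m = d_0.d_1 d_2 \cdots d_{\gamma-1},\qquad d_0\in\{1,\ldots,9\},\ d_i\in\{0,\ldots,9\}\ \text{for}\ i\ge 1,
\end{equation}
whose leading-digit condition $d_0\ne 0$ is implicit in using an explicit exponent token; this forces $\tilde m\in[1,10)$. Next, I would identify the two extreme admissible mantissae: the smallest is obtained by setting $d_0=1$ and $d_i=0$ for $i\ge 1$, giving $\tilde m_{\min}=1$, while the largest sets $d_0=d_1=\cdots=d_{\gamma-1}=9$ and gives, by a finite geometric sum,
\begin{equation}
\tilde m_{\max}\;=\;9.\underbrace{9\cdots 9}_{\gamma-1}\;=\;\sum_{j=0}^{\gamma-1} 9\cdot 10^{-j}\;=\;10-10^{-(\gamma-1)}\;=\;10-10^{\,1-\gamma}.
\end{equation}

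Second, since $|\tilde z|=\tilde m\cdot 10^{k}$ is monotone in each of its nonnegative factors, the range of the product is controlled by the ranges of the factors taken independently: the minimum is $1\cdot 10^{k_{\min}}$ and the maximum is $(10-10^{\,1-\gamma})\cdot 10^{k_{\max}}$, which together yield the enclosure~\eqref{eq:range}. Both endpoints are in fact attained by admissible $(\tilde m,k)$ pairs, so the interval is tight.

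The main subtlety, which I would flag explicitly, concerns a rounding-carry at the upper boundary: if an input $m\in[1,10)$ very close to $10$ rounds to $\gamma$ significant digits and overflows to exactly $10$, the canonical normalization must renumber the representation as $1\cdot 10^{k+1}$. The bound $\tilde m_{\max}=10-10^{\,1-\gamma}$ is therefore correct only under the standing convention that the stored mantissa always satisfies $\tilde m\in[1,10)$ after any carry, and that such a carry does not exceed the exponent budget. Under that convention, inputs whose rounded magnitude would exceed $(10-10^{\,1-\gamma})\cdot 10^{k_{\max}}$ are simply unrepresentable, which is fully consistent with the enclosure~\eqref{eq:range}.
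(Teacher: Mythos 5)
Your proof is correct and follows essentially the same route as the paper's: identify the extreme admissible mantissae ($\tilde m_{\min}=1$ and $\tilde m_{\max}=9.\underbrace{9\cdots 9}_{\gamma-1}=10-10^{\,1-\gamma}$) and pair them with the extreme exponents $k_{\min}$ and $k_{\max}$. Your explicit geometric-sum verification and the remark on rounding carry at the upper boundary are sensible additions but do not change the argument.
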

\begin{proof}
The smallest positive value is $m=1$ at $k_{\min}$, giving $1\cdot 10^{k_{\min}}$. The largest mantissa with $\gamma$ digits is $$9.\underbrace{99\ldots 9}_{\gamma-1}=10-10^{1-\gamma}.$$Multiplying by $10^{k_{\max}}$ gives the upper endpoint. Including the sign makes the range symmetric; $z=0$ is exactly representable.
\end{proof}

For any fixed $k$, the number of distinct mantissae is $9\cdot 10^{\gamma-1}$ (from $1.00\ldots 0$ to $9.99\ldots 9$). Across $|K|=k_{\max}-k_{\min}+1$ exponents and with both signs and zero, this yields an estimate of overall coverage density.

\begin{example}
    With $\gamma=4$ and $k\in[-6,6]$, the magnitude range is $[10^{-6},\ 9.999\times 10^{6}]$.
\end{example}

\begin{proposition}[Token-length linearity and dimension budget]\label{prop:linear}
Let a single scalar be encoded by sign, exponent, and $\gamma$ digits. Then
\begin{subequations}\label{eq:linear}
\begin{equation}\label{eq:lnum}
\ell_{\text{num}} \;=\; \gamma+2,
\end{equation}
\begin{equation}\label{eq:larray}
L_{\text{array}} \;=\; D(\gamma+2) + (D-1) + 2 \;=\; D(\gamma+3)+1,
\end{equation}
\begin{equation}\label{eq:lin}
L_{\text{in}} \;=\; L_m + D(\gamma+3) + 1,\qquad
D_{\max} \;=\; \Big\lfloor \frac{L_{\max}-L_m-1}{\gamma+3} \Big\rfloor ,
\end{equation}
\end{subequations}
where $D$ is the dimensionality, $L_m$ is the metadata token length, and $L_{\max}$ is the model input cap.
\end{proposition}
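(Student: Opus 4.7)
The plan is to prove all three identities in \eqref{eq:linear} by direct token counting, relying on the SNE tokenization conventions fixed in Section~\ref{subsec:represent}. Because every component of a scalar encoding is, by construction, a single token (sign, exponent bracket, and individual digits), the counts compose additively and no interaction between adjacent numbers can merge or split tokens. This reduces the proposition to an enumeration exercise followed by a single inequality manipulation for $D_{\max}$.

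First, I would verify \eqref{eq:lnum}: applying $\phi$ to a scalar $z$ produces exactly one sign token in $\{+,-\}$, one exponent token of the form $\langle 10^k\rangle$, and $\gamma$ mantissa-digit tokens from $\{0,\dots,9\}$, yielding $\ell_{\text{num}}=1+1+\gamma=\gamma+2$. Second, for the array formula \eqref{eq:larray}, I would enumerate the tokens of $[z_1,\dots,z_D]$ as two bracket tokens, $D$ scalar blocks contributing $D\ell_{\text{num}}=D(\gamma+2)$ tokens, and $D-1$ comma separators between consecutive scalars; summing and simplifying gives $D(\gamma+2)+(D-1)+2=D(\gamma+3)+1$. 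Third, for \eqref{eq:lin}, concatenating the metadata prefix (length $L_m$) with the array yields $L_{\text{in}}=L_m+D(\gamma+3)+1$. Imposing $L_{\text{in}}\le L_{\max}$ and isolating $D$ gives $D\le(L_{\max}-L_m-1)/(\gamma+3)$; taking the floor produces the stated $D_{\max}$.

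The only mildly delicate point, and the one on which the whole proposition rests, is the atomicity assumption: each digit, sign, comma, bracket, and exponent marker must map to exactly one token under the tokenizer. For digits and signs this follows from the whitespace-insertion convention described in Section~\ref{subsec:represent}, while for the exponent tokens $\langle 10^k\rangle$ it follows from the hybrid vocabulary-extension strategy that explicitly registers each such symbol. Once these tokenization invariants are stated, the remainder is elementary arithmetic and a single floor-function argument; I expect no further obstacle.
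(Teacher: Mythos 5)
Your proof is correct and follows essentially the same route as the paper's: count one sign, one exponent, and $\gamma$ digit tokens per scalar; add $D-1$ commas and two brackets for the array; append the metadata length; and solve $L_{\text{in}}\le L_{\max}$ for $D$ with a floor. Your explicit remark on token atomicity (each digit, sign, separator, and exponent marker mapping to exactly one token) is a reasonable clarification of an assumption the paper leaves implicit, but it does not change the argument.
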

\begin{proof}
Equation \eqref{eq:lnum} counts one sign, one exponent, and $\gamma$ digits. For a length-$D$ array, there are $D$ scalars, $(D-1)$ commas, and two brackets, which gives \eqref{eq:larray}. Appending metadata of length $L_m$ yields \eqref{eq:lin}. The expression for $D_{\max}$ follows by solving $L_{\text{in}}\le L_{\max}$ for $D$ and taking the floor.
\end{proof}

\begin{example}
    For $L_{\max}=400$, $\gamma\in\{3,4,5\}$, and $L_m\in\{80,120,160\}$, the resulting $D_{\max}$ values are summarized in Table~\ref{tab:Dmax400} (computed via $D_{\max}=\lfloor (L_{\max}-L_m-1)/(\gamma+3)\rfloor$).
\end{example}

\begin{table}[htbp]
\centering
\caption{
$D_{\max}$ under $L_{\max}=400$ for varying $\gamma$ and metadata length $L_m$.}
\label{tab:Dmax400}
\resizebox{0.9\textwidth}{!}{%
    {
        \begin{tabular}{|l|c|c|c|}
            \hline
             & $L_m=80$ & $L_m=120$ & $L_m=160$ \\
            \hline
            $\gamma=3$ \,($\gamma+3=6$) & $\lfloor(400-80-1)/6\rfloor={53}$ & $\lfloor(400-120-1)/6\rfloor={46}$ & $\lfloor(400-160-1)/6\rfloor={39}$ \\ \hline
            $\gamma=4$ \,($\gamma+3=7$) & $\lfloor(400-80-1)/7\rfloor={45}$ & $\lfloor(400-120-1)/7\rfloor={39}$ & $\lfloor(400-160-1)/7\rfloor={34}$ \\\hline
            $\gamma=5$ \,($\gamma+3=8$) &  $\lfloor(400-80-1)/8\rfloor={39}$ & $\lfloor(400-120-1)/8\rfloor={34}$ & $\lfloor(400-160-1)/8\rfloor={29}$\\
            \hline
        \end{tabular}
    }
}
\end{table}

Larger \(\gamma\) improves precision but increases per-dimension token cost, thus reducing \(D_{\max}\) under the same \(L_{\max}\) and \(L_m\). In practice, one may first estimate the empirical distribution of exponents to choose \(K=[k_{\min},k_{\max}]\) and \(\gamma\); when \(D\) is large under a tight \(L_{\max}\), either compressing \(L_m\) (e.g., removing redundant fields or using shorter learned tags) or reducing \(\gamma\) achieves a controllable trade-off. 

\subsubsection{Metadata Representation}
\label{subsec:metadata_repr}

To enable cross--task generalisation, the textual metadata $m$ must convey \emph{what} is being optimised (function identity) as well as \emph{how} it is parameterised (dimensionality and other instance–specific traits). We designed the following structured template for the metadata as \autoref{fig:metadata}.
\begin{figure}[htbp]
      \centering
    \begin{promptmini}[colframe=green!60!black, colback=green!5]{metadata}
    \label{prompt_Small}
    \footnotesize
    You are a many-task surrogate model, predict fitness given m and pop;
    \\
    function name is \{$f_{\mathrm{name}}$\},\;
    function ID is \{$f_{\mathrm{ID}}$\},\;
    key feature 1 is \{$k_1$\}\,|\,key feature 2 is \{$k_2$\}\,|\,\dots\,|\,%
    the dimensionality is dim=\{$D$\}.
    \end{promptmini}
  \caption{
  Structured template for the metadata.}
  \label{fig:metadata}
\end{figure}

The tuple $(f_{\mathrm{name}},f_{\mathrm{ID}},k_1,\dots,k_L,D)$ is rendered as a single, self-contained sentence whose fields are separated by the vertical bar ``|'' to minimise syntactic ambiguity. The \textit{function ID} ($F\!1$–-$F\!24$ in the BBOB suite) serves as an unambiguous handle for functions whose names may overlap across libraries, while the free-form \textit{key features} (e.g. \textit{``BBOB''}, \textit{``instance=0''}) allow arbitrary, domain-specific descriptors to be appended without changing the grammar. The ellipsis in the prompt indicates that additional tokens can be inserted transparently, making the scheme extensible to real-world scenarios that require richer contextual cues (noise level, budget category, simulator version, \emph{etc.}).

Each word, digit, or symbol is separated by a single space before submitting the sequence to the tokenizer, ensuring a one-token–per-character granularity identical to the numerical encoding
of decision variables and fitness values.  In contrast to earlier studies that create bespoke tokens for entire metadata strings, we reuse the LLM's pretrained vocabulary wherever possible
and introduce \emph{at most}~$L$ new tokens for domain-specific keywords (e.g.\ ``instance''), thereby preserving semantic consistency with the base model.

Table~\ref{tab:Textual_Representation} illustrates the conversion pipeline. For the \textit{Sphere} function (F1) in four dimensions, the original CSV entry 
\lstset{breaklines=true, basicstyle=\ttfamily}
\lstinline|BBOB, instance=0, Sphere, dimension=4|
is transformed into the prompt shown in the third column, whose compact yet compositional format allows the encoder to localise task semantics with negligible prompt overhead. During inference, the metadata string is concatenated with the scientific-notation array of the decision vector $x$ and fed into the encoder–decoder LLM, which decodes the fitness $y$ in the same textual format.  This unified representation of $(m,x,y)$ enables a single conditional language model $p_\theta(y\mid m,x)$ to learn joint embeddings for heterogeneous tasks without per-task retraining.

The revised metadata representation thus provides a scalable, machine-readable interface between heterogeneous optimisation tasks and the language model, while the consistent token-level granularity across $m$, $x$, and $y$ preserves the pretrained token semantics that underpin
effective transfer learning in the surrogate.

Finally, as shown in Table~\ref{tab:Textual_Representation}, the algorithm sends the problem name and dimensionality information as metadata $m$, along with the original data of an individual's decision variables $x$, to the LLM. The LLM converts these into a textual representation and concatenates them into its input as \autoref{fig:metadata_example}.

\begin{figure}[htbp]
  \centering
  \begin{promptmini}[colframe=green!60!black, colback=green!5]{metadata}
  \footnotesize
      You are a many-task surrogate model, predict fitness given m and pop;\\
  function name is Sphere,\;
  function ID is F1,\;
  key feature 1 is BBOB \,|\, key feature 2 is instance=0 \,|\, \(\cdots\) \,|\, the dimensionality is \(\mathrm{dim}=4\);
  \(x=\)\,[-\,<10\textasciicircum0>\,2\,0\,6\,5\,3\,4\,9\,1\,3\,9,\;
          -\,<10\textasciicircum0>\,2\,5\,7\,0\,4\,5\,6\,2\,7\,8,\;
          -\,<10\textasciicircum0>\,3\,3\,8\,1\,0\,8\,7\,4\,5\,0,\;
          +\,<10\textasciicircum0>\,4\,4\,1\,2\,2\,6\,5\,2\,3\,9]
  \end{promptmini}
  \caption{{Input example: metadata \(m\) (Sphere/F1, BBOB, instance=0, \(D{=}4\)) concatenated with the scientific-notation tokenization of the decision vector \(x\); the resulting sequence is provided to the LLM to predict \(y\) via \(p_{\theta}(y\mid m,x)\).}}
  \label{fig:metadata_example}
\end{figure}


The LLM then infers the string \texttt{"[+ <10\textasciicircum3> 1 7 4 0 0 5 0 8 4 3]"}, which is subsequently converted to the floating-point number 1740.050843 by a post-processing module. This surrogate fitness 1740.050843 is then returned to the algorithm. An illustrative example is shown in Figure~\ref{fig:dy}.

\begin{figure}[htbp]
    \centering
    \includegraphics[width=0.6\linewidth]{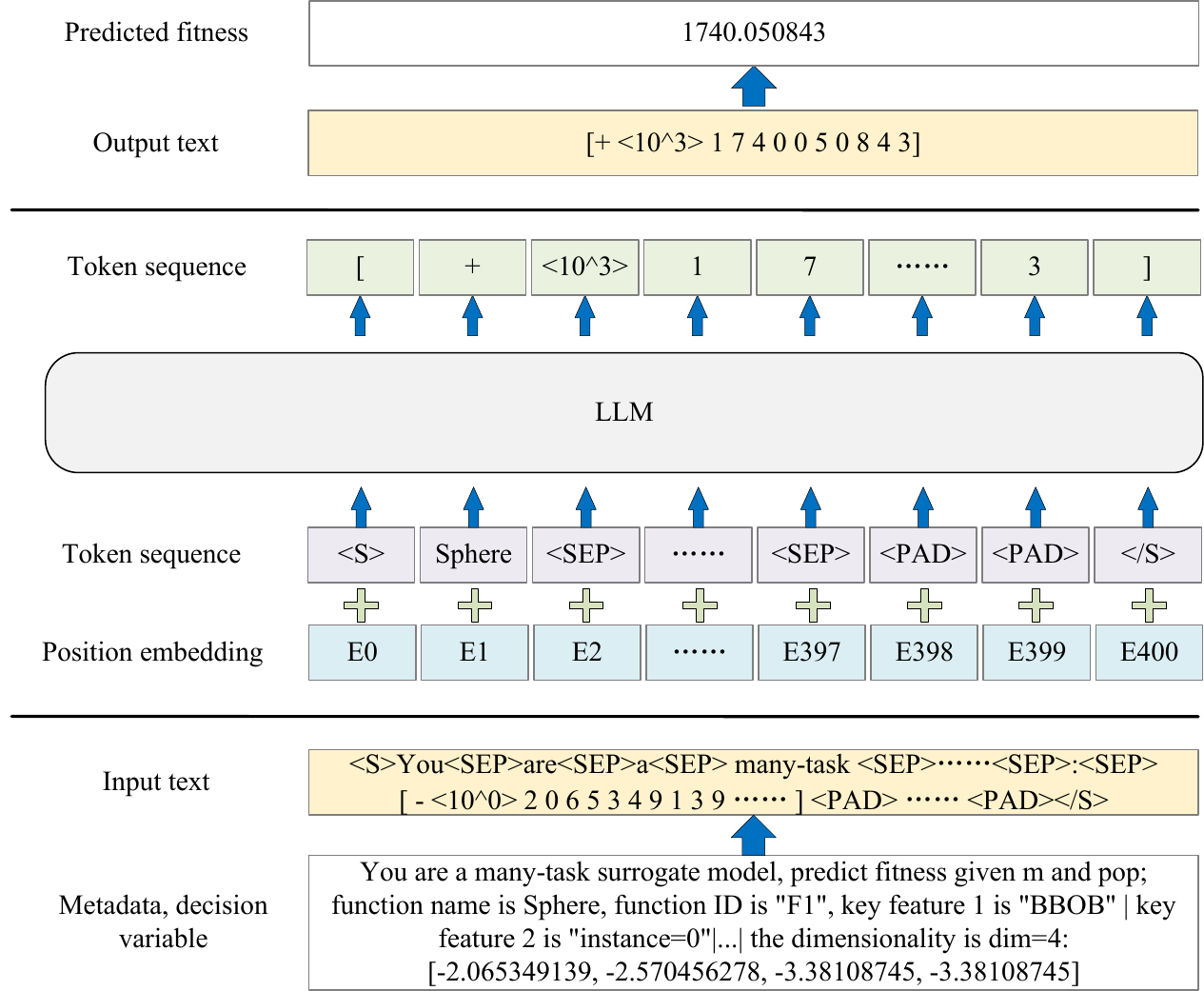}
    \caption{Prediction of decision variables and objective values based on LLM.}
    \label{fig:dy}
\end{figure}

\subsection{LLM Inference}
\label{subsec:finetune}

The formal concept of ``language model'' is rooted in conditional probability. Each sample (sentence or document) consists of a variable-length sequence of tokens $({s}_{1}$, ${s}_{2}$, $\ldots$ , ${s}_{n})$. Owing to the intrinsic sequential nature of language, the joint probability of the token sequence is typically factorized into a product of autoregressive conditional probabilities~\cite{NLM}:
\begin{equation}
    p(x) = \prod_{i=1}^{n} p({s}_{i} \mid {s}_{1},\ldots ,{s}_{i-1})
\end{equation}
This formulation allows generating tokens step-by-step by iteratively drawing $s_i \sim$ $p(s_i \mid$ $ s_{<i})$, as well as computing probabilities for arbitrary subsequences, e.g., 
$p({s}_{n-a}$, $\ldots$ ,${s}_{n}$ $\mid$ ${s}_{1}$,$\ldots$ ,${s}_{n-a-1})$. 

This probabilistic framework inherently supports task-specific learning. For a single task, the model estimates $p(\text{output} \mid \text{input})$ by treating the output tokens as a sequence conditioned on the input tokens. However, a practical LLM system must handle multiple tasks over the same input. This requires extending the conditioning to explicitly include task identity: $p(\text{output} \mid \text{task}, \text{input})$.
Following this, our meta-surrogate conditions the output $y$ on the metadata token sequence $m$ and the input token sequence $x$ as
\begin{equation}
    p(y) = p(y \mid m,x)
\end{equation}

Defined in ~\autoref{subsec:llm_as_surrogate}, a data sample in the training set is given as $(m_k, x^k_i, y^k_i)$. Then, according to ~\autoref{subsec:represent}, $m_k$ is a text string, and the floating-point values $x^k_i$ and $y^k_i$ can be converted into text strings, which are denoted as $src^k$ and $trg^k$, respectively.
In the encoder-decoder architecture, the encoder and decoder typically share the same embedding matrix $\mathbf{E} \in \mathbb{R}^{|V| \times dm}$, where $|V|$ is the vocabulary size and $dm$ is the model's hidden dimension. The tokenizer in the encoder module converts the metadata $m_k$ and $src^k$ into token ID sequences, i.e., $m_k \mapsto \{mt_1, mt_2, \dots, mt_u\}$ and $src^k \mapsto \{xt_1, xt_2, \dots, xt_n\}$, where each $mt_i$ or $xt_i$ is an index in the LLM's vocabulary. The LLM then retrieves the corresponding learnable vector from the shared embedding matrix:
\begin{equation}
\begin{split}
    \mathbf{M}^{k} = \big[\, \mathbf{E}(mt_1), \mathbf{E}(mt_2), \dots, \mathbf{E}(mt_u) \big]^\top \in \mathbb{R}^{u \times dm},
    \\
    \mathbf{W}^{k} = \big[\, \mathbf{E}(xt_1), \mathbf{E}(xt_2), \dots, \mathbf{E}(xt_n) \big]^\top \in \mathbb{R}^{n \times dm}
\end{split}
\end{equation}
where $\mathbf{E}(\cdot)$ denotes the embedding lookup. 

Then, 
 the encoder's input combines metadata and decision variable embeddings with positional encoding:
\begin{equation}
    X = PE + \begin{pmatrix}
    \mathbf{M}^{k} \\[0.8em]
    \mathbf{W}^{k}
    \end{pmatrix}
\end{equation}
where 
the positional encoding $PE$ is added ensure the sequential nature of language. The encoder of LLM then transforms the input into a latent representation $\text{Encoder}(X)$, and the decoder module generates the target token ID sequence based on this latent representation. 

\subsection{Training Objective}

The training objective is to minimize the negative log-likelihood loss of the target token ID sequence 
$trg^k \mapsto \{yt_1, yt_2, \dots, yt_n\}$:

\begin{equation}
    \mathcal{L} = -\sum_{i=1}^{n} \log P(yt_i \mid yt_{<i}, \text{Encoder}(X))
\end{equation}
where $P(yt_i | yt_{<i}, \text{Encoder}(X))$ denotes the probability of generating the token $yt_i$ by the decoder, given the $X$ and the previously generated target sequence $yt_{<i}$.

The cross-entropy loss compares the decoder's probability distribution $P(yt_i \mid yt_{<i}$, $\text{Encoder}(X))$ at each position $i$ with the true distribution of the target token IDs. For an output ID sequence $\{gt_1, gt_2, \ldots, gt_n\}$, the cross-entropy loss is computed as

\begin{equation}
\mathrm{CE}=-\sum_{i=1}^n \sum_{v=1}^{|V|} gt_i^{(v)} \cdot \log P\left(y t_i^{(v)}\right)
\end{equation}
where $|V|$ is the vocabulary size, $P(y t_i^{(v)})$ is the model's predicted probability for vacabular index $v$ at output position $i$, $gt_i^{(v)} \in\{0,1\}$ is the one-hot indicator of the target token ID at the $v$-th position. 

However, the conventional fine-tuning approach for LLMs based on $\mathcal{L}$ and $\mathrm{CE}$ is not so suitable for our meta-surrogate. For example, consider a label data instance as shown in Table~\ref{tab:Textual_Representation}: \texttt{[+ <10\textasciicircum3> 1 7 4 0 0 5 0 8 4 3]}. The traditional cross-entropy computation would treat the sign token `\texttt{+}', the exponent token `\texttt{<10\textasciicircum3>}', and the last digit `\texttt{3}' as equally important. 
This contradicts our core intuition: errors in structural components (sign, exponent, first mantissa digit) should incur larger penalties than those in trailing digits, as they fundamentally alter the numerical magnitude.

To address this, we propose a \textbf{priority-aware weighted cross-entropy (PWCE)} that emphasizes critical tokens. The modified loss becomes:
\begin{equation}
\mathrm{PWCE}=-\sum_{i=1}^{n} \sum_{v=1}^{|V|} gt_i^{(v)} \cdot w_i \cdot \log P\left(y t_i^{(v)}\right)
\end{equation}
where $n=2+\gamma$ is the total sequence length, the weight $w_i$ of each position $i$ is defined as:
    \begin{equation}
        w_i=
        \begin{cases}
            2\alpha, & i = 1, 2, 3,\\[6pt]
            \displaystyle
            \max \Bigl(
            1,\;
            \alpha \;-\;(i-4)\,\frac{\alpha - 1}{\gamma - 1}
            \Bigr),
            & i \ge 4.
        \end{cases}
    \end{equation}

For $i=1$ (the sign bit), $i=2$ (the exponent), and $i=3$ (the first mantissa digit), the weight is $2\alpha$. Starting from the second mantissa digit ($i=4$) to the end of the mantissa ($i = 2+\gamma$), the weight linearly decays from $\alpha$ to 1; if it decays to 1 before reaching the end, subsequent mantissa digits are assigned a weight of 1. For example, consider the label data \texttt{[+ <10\textasciicircum3> 1 7 4 0 0 5 0 8 4 3]}; when $\alpha=10$, the weights for the sign token \texttt{+}, the exponent token \texttt{<10\textasciicircum3>}, and the first mantissa digit are 20, while the second mantissa digit has a weight of 10, and the weights for the remaining digits decay to 1.  
Please note that \texttt{[} and \texttt{]} are the start and end markers of the token sequence for the fitness value, respectively, and should be removed when calculating $\mathrm{PWCE}$.

\subsection{Decoding Strategy}
\label{subsec:sample_decode}
Decoding refers to the process of generating text using a trained LLM. By introducing a controlled degree of randomness, decoding strategies can produce text that is both coherent and contextually appropriate, while occasionally exhibiting unexpected creative expressions. Therefore, in application scenarios that require high levels of creativity, narrative depth, and emotional nuance, an appropriate decoding strategy can significantly enhance the overall quality and fluency of the generated text, serving as an important means to improve the performance of LLMs.

Currently, various decoding strategies have been proposed in the literature, with the most common being greedy search, beam search, temperature sampling, Top-K, and Top-P (nucleus sampling) methods~\cite{LLM_servey}. It is important to note that although diversity and creativity are generally considered advantageous in text generation tasks, when LLMs are used as predictors of individual fitness in EAs, generating tokens that are overly creative may not yield the desired outcomes. This issue will be explored in depth in the experimental section.

\subsection{Integration of MetaSurrogate with ETO Algorithms}

\begin{algorithm}[htbp]\algsetup{linenosize=\tiny} \small
    \caption{MaTDE with the MetaSurrogate.}
    \label{algo:MaTDE_LLM}
    \begin{algorithmic}[1]
\REQUIRE~\\
    $im$;\COMMENT{Migration probability} \\
    \textit{aUp}; \COMMENT{Archive update probability} \\
    \textit{shk}; \COMMENT{Shrink factor} \\

\ENSURE The best solutions of all tasks
\STATE Meta-Surrogate $\gets$ Fine-tune the LLM
\STATE Randomly initialize the population for each task $\{\textit{pop}[t]\}_{t=1}^T$
\STATE Initialize the reward matrix $\textit{rew}$ with zeros
\STATE Construct the archives $\{\textit{arc}[t]\}_{t=1}^T$ to store individuals

\WHILE{termination condition is not met}
    \FOR{$t = 1$ \TO $T$}
        \IF{$\text{rand()} > im$}
            \STATE Randomly set the DE-related parameters for each individual in $\textit{pop}[t]$
            \STATE $\textit{offspring} \leftarrow \textit{generate}(\textit{pop}[t])$ \COMMENT{Generate offspring}
            \IF {each individual in $\textit{offspring}$}
            \STATE convert it into a token sequence and use the meta-surrogate to predict pseudo-fitness
            \ENDIF
            \STATE $\textit{pop}[t] \leftarrow \textit{selection}(\textit{pop}[t], \textit{offspring})$ \COMMENT{Select individuals for the next generation}
        \ELSE
            \STATE Update the probability table based on the reward matrix $\textit{rew}$
            \STATE $\textit{tfTsk}$ $\leftarrow$ Select a target task based on the probability table and the adaptive choice algorithm
            
            \FOR{each individual $i$ in $\textit{pop}[t]$}
                \STATE $\textit{offspring}[i]$ $\leftarrow$ Crossover between individual $i$ and a random individual from $\textit{pop}[\textit{tfTsk}]$
            \ENDFOR
            \STATE Evaluate $\textit{offspring}$ using meta-surrogate
            \STATE $\textit{pop}[t] \leftarrow \textit{selection}\bigl(\textit{pop}[t], \textit{offspring}\bigr)$
            
            \IF{the best solution is improved}
                \STATE $\textit{rew}[t,\ \textit{tfTsk}] \leftarrow \textit{rew}[t,\ \textit{tfTsk}] \, / \, \textit{shk}$;
            \ELSE
                \STATE $\textit{rew}[t,\ \textit{tfTsk}] \leftarrow \textit{rew}[t,\ \textit{tfTsk}] \,\times\, \textit{shk}$;
            \ENDIF
        \ENDIF

        \FOR{each individual $i$ in $\textit{pop}[t]$}
            \IF{$\text{rand()} < \textit{aUp}$}
                \STATE Insert individual $\textit{pop}[t][i]$ into $\textit{arc}[t]$ (replace randomly if archive is full)
            \ENDIF
        \ENDFOR
    \ENDFOR
\ENDWHILE

\STATE \textbf{Output:} The best solutions of all tasks

\end{algorithmic}
\end{algorithm}

Our meta-surrogate can be seamlessly incorporated into any ETO algorithm to assist FEs. 
Taking the MaTDE~\cite{MaTDE} as an example, the {pseudocode} is presented in \autoref{algo:MaTDE_LLM}. 
The meta-surrogate is utilized to predict the pseudo-fitness values (Lines 10 and 18). This method is applicable to optimization scenarios with expensive fitness evaluations or in offline data-driven settings where only sampled data points are available and no explicit evaluation metrics are provided. Notably, offline DDEAs can also be seamlessly integrated into online DDEAs: they can offer a solid starting point for online algorithms or serve as a preprocessing step for new data-driven optimization tasks, thereby enhancing overall performance and efficiency. Future research will focus on exploring more efficient strategies for online updating of the meta-surrogate, as the update process can be relatively complex when applying LLMs to online data-driven many-task optimization scenarios.

From the perspective of EAs, our integration preserves the fundamental evolutionary mechanisms---selection, variation, and inheritance---while employing the LLM-based surrogate exclusively to improve evaluation efficiency. 
Although prior work has explored leveraging LLMs to directly generate candidate solutions or even synthesize complete evolutionary strategies, our approach intentionally takes a different direction. We focus on enhancing fitness approximation through cross-task meta-learning. 
By incorporating tokenized task descriptions, our method enables flexible and effective knowledge transfer across tasks, thereby maximizing the generalization capability of the pre-trained LLM.

\section{Experiments}
\label{sec:experiments}

In this section, we delve into the four research questions raised in \autoref{subsec:motivation}. \textbf{(RQ1) Surrogate Feasibility:}
Whether LLMs, trained on tokenized representations of decision variables, fitness values, and task metadata, can serve as a meta-surrogate for cross-task fitness prediction. 
\textbf{(RQ2) Reliability \& Calibration:} Does the meta-surrogate produce landscape-faithful predictions? This question is investigated through the token-level mechanistic study, coordinate-wise response-slice analysis, and uncertainty–error correlation experiments that follow.
\textbf{(RQ3) Emergent Capability:} 
To what extent the meta-surrogate generalizes to tasks beyond its training distribution, specifically those with unseen input dimensions. 
\textbf{(RQ4) Performance Evaluation:}
How effectively the LLM surrogate enhances evolutionary search when integrated into ETO frameworks.

\subsection{Surrogate Evaluation (RQ1)}

\subsubsection{Problem Setup}
We evaluate our surrogate using the BBOB test suite~\cite{COCO} and employ the T5 model as the backbone LLM. The BBOB suite consists of 24 noise-free single-objective test functions, each with 15 test instances. We select the $0$-th test instance from each function to generate our training data. When generating the data, the search range for the decision variables is set to [-5, 5], and the considered dimensions are 5, 10, 15, and 20. In surrogate training, whereas traditional surrogates require training a separate model for each problem and dimension, the LLM adopts prefix tuning to train a single model across all tasks and dimensions. The ratio of training data to test data is $\textit{5:3}$. Each task's training dataset consists of 500 samples.

In all experiments, we employ the \emph{max--min scaled mean absolute error} (sMAE) together with the coefficient of determination ($R^{2}$)\footnote{The $R^{2}$ score is computed using the \texttt{scikit-learn} implementation; consequently, its theoretical range is $(-\infty,\,1]$.} as our primary evaluation metrics.  
To remove scale discrepancies across tasks, sMAE normalises the absolute prediction error within each task.  
Let ${I}_k^{\text{test}}$ be the index set of test samples for task~$k$;   
let $y_i^{k}$ and $\hat y_i^{k}=\hat f_{\mathrm{meta}}(x_i^{k},m_k)$ denote the ground-truth and predicted fitness, respectively;  
and define the target value range
$R_k=\max\nolimits_{j\in I_k^{\mathrm{test}}} y_j^{k}
    -\min\nolimits_{j\in I_k^{\mathrm{test}}} y_j^{k}.$

The per-task sMAE and its overall average are then defined as follows:
\begin{equation}
    \operatorname{sMAE}=\frac{1}{|{I}_k^{\text{test}}|}
    \sum\nolimits_{i\in{I}_k^{\text{test}}}
    \frac{|\,y_i^{k}-\hat y_i^{k}\,|}{R_k}
\end{equation}

\subsubsection{Training Setup}
For the LLM, our pre-trained language model uniformly uses the pre-trained weights available from Hugging Face\footnote{https://huggingface.co/}. 
{The input sequence (metadata plus numeric tokens) has 20×D tokens, and the output sequence length is 50 tokens.} The LLM is trained for 65 epochs. Our experiments were conducted on the following hardware: an Intel Xeon E5-2680 v4 (56) @ 3.3GHz, one NVIDIA GeForce RTX 4090 Ti 24G GPU, and 128GB of memory. The code implementation was done in Python using PyTorch 1.8.2.~\autoref{tab:training_config} summarizes the key hyperparameters, optimizer settings, and other training configurations used in all experiments.

\begin{table}[htbp]
    \centering
    \caption{Training configuration.}
    \label{tab:training_config}
    \resizebox{0.7\linewidth}{!}{
    \begin{tabular}{|p{0.35\linewidth}|p{0.60\linewidth}|}
    \hline
    Aspect & Setting \\ \hline
    $\gamma$ & 15 \\ \hline
    $\alpha$ & 10 \\ \hline
    Backbone & T5 family ({t5-small} if per-task training size $<1000$; otherwise {t5-base}) \\ \hline
    Max input / target length & $20\times D$ tokens (covers metadata + $x$; e.g., $D{=}50\Rightarrow 1000$, $D{=}60\Rightarrow 1200$) \,/\, 50 tokens \\ \hline
    Tokenizer & T5Tokenizer \\ \hline
    Optimizer & Adafactor, learning rate $1\times 10^{-3}$, weight decay $0$ \\ \hline
    Learning rate schedule & Constant-with-warmup, warmup ratio $0.06$ \\ \hline
    Batch size (per GPU) & {t5-small}: 24; {t5-base}: 12 \\ \hline
    Epochs & 65 \\ \hline
    Precision & FP32 \\ \hline
    Hardware & NVIDIA GeForce RTX 4090 \\ \hline
    \end{tabular}}
\end{table}

\subsubsection{Comparison Experiments of Surrogate Models}
\begin{table}[htbp]
\caption{Configurations of surrogate baselines included in our comparison. Architectural and training hyper-parameters strictly follow the official papers unless otherwise specified.}
\label{tab:surrogate_baselines}
\footnotesize
\centering
    \resizebox{1\linewidth}{!}{
    \begin{tabular}{|p{0.2\linewidth}| p{0.8\linewidth}|}
\hline
Baseline & {Model configuration} \\
\hline
SAINT~\cite{SAINT} &
Six layers with alternating interaction attention and self-attention; hidden size 256; 8 heads; numerical features are treated as tokens via a linear projection and concatenated with a CLS token; pre-normalization and residual connections; scalar regression via CLS readout. \\\hline
FT-Transformer~\cite{FTTransformer} &
rtdl configuration with 4–6 pre-norm Transformer encoder blocks; hidden size 256; 8 heads; one token per feature with linear projection, positional encoding, and a CLS token; scalar readout from CLS. \\\hline
Evidential-MLP~\cite{Evidential-MLP} &
Three-layer MLP producing NIG parameters; loss is the sum of NIG negative log-likelihood and the evidence regularizer; positivity and \(\alpha>1\) ensured via softplus/offsets. \\\hline
MTGP~\cite{MTGP} &
Intrinsic coregionalization with cross-covariance \(k((x,i),(x',j)) = k_x(x,x')\,B_{ij}\) where \(B \in \mathbb{R}^{T\times T}\) is positive semidefinite and the input kernel is Matérn–\(5/2\) with ARD,
\(k_x(x,x')=\sigma_f^2\!(1+\sqrt{5}\,r+{5/3}*r^2)\exp(-\sqrt{5}\,r)\),
\(r = (\sum\nolimits_d ((x_d-x'_d)/\ell_d)^2)^{1/2}\).
Training uses SVGP.
\\\hline
RBFN &
Single-task Gaussian radial basis network per function and per dimension; centers from k-means/++ on training inputs; basis width \(\sigma_j\) scaled by the mean distance to nearest neighbors; output weights by least squares; number of centers selected via a small validation grid. \\
\hline
\end{tabular}
    }
\end{table}

\begin{table}[htbp]
\centering
\caption{sMAE of surrogates on 20-dimensional BBOB functions.}
\label{tab:tradition_resulta}
\resizebox{0.7\linewidth}{!}{
   \begin{tabular}{|c|c|c|c|c|c|c|}
\hline
 & Meta-Surrogate & RBFN & MTGP & FT-Transformer & Evidential-MLP & SAINT \\
\hline
Attractive\_Sector & \textbf{0.062} & 0.096 & 0.099 & 2.111 & 2.104 & 2.112 \\
\hline
Bent\_Cigar & \textbf{0.022} & 0.035 & 0.029 & 0.658 & 0.658 & 0.658 \\
\hline
Buche\_Rastrigin & \textbf{0.042} & 0.082 & 0.043 & 1.614 & 1.480 & 1.615 \\
\hline
Composite\_Grie\_rosen & 0.059 & \textbf{0.059} & 0.061 & 34.526 & 2.153 & 35.620 \\
\hline
Different\_Powers & 0.032 & 0.042 & \textbf{0.025} & 16.227 & 1.662 & 16.394 \\
\hline
Discus & \textbf{0.077} & 0.129 & 0.124 & 0.744 & 0.744 & 0.744 \\
\hline
Ellipsoidal & 0.069 & 0.087 & \textbf{0.025} & 1.439 & 1.439 & 1.439 \\
\hline
Ellipsoidal\_high\_cond & 0.069 & 0.101 & \textbf{0.053} & 1.310 & 1.310 & 1.310 \\
\hline
Gallagher\_101Peaks & \textbf{0.068} & 0.072 & 0.072 & 351.233 & 26.721 & 365.661 \\
\hline
Gallagher\_21Peaks & \textbf{0.053} & 0.068 & 0.061 & 829.454 & 65.288 & 850.508 \\
\hline
Katsuura & 0.153 & \textbf{0.150} & 2.194 & 109.701 & 7.974 & 115.322 \\
\hline
Linear\_Slope & \textbf{0.045} & 0.128 & 0.065 & 39.834 & 4.871 & 40.307 \\
\hline
Lunacek\_bi\_Rastrigin & 0.084 & 0.079 & \textbf{0.073} & 20.353 & 1.592 & 20.586 \\
\hline
Rastrigin & \textbf{0.052} & 0.066 & 1.913 & 1.279 & 0.590 & 1.288 \\
\hline
Rastrigin\_F15 & \textbf{0.043} & 0.063 & 0.999 & 3.798 & 1.735 & 3.825 \\
\hline
Rosenbrock\_original & \textbf{0.043} & 0.129 & 0.074 & 2.324 & 2.317 & 2.324 \\
\hline
Rosenbrock\_rotated & 0.055 & \textbf{0.052} & 0.055 & 1.996 & 1.983 & 1.996 \\
\hline
Schaffers & 0.037 & 0.064 & \textbf{0.033} & 7.464 & 0.567 & 7.812 \\
\hline
Schaffers\_high\_cond & \textbf{0.052} & 0.076 & 0.068 & 6.832 & 0.601 & 6.941 \\
\hline
Schwefel & \textbf{0.044} & 0.120 & 0.548 & 2.136 & 2.110 & 2.136 \\
\hline
Sharp\_Ridge & 0.060 & 0.110 & \textbf{0.018} & 6.711 & 2.354 & 6.762 \\
\hline
Sphere & 0.058 & 0.114 & \textbf{0.006} & 36.044 & 5.623 & 36.440 \\
\hline
Step\_Ellipsoidal & \textbf{0.077} & 0.120 & 0.118 & 3.538 & 1.339 & 3.562 \\
\hline
Weierstrass & 0.146 & 0.143 & \textbf{0.142} & 111.862 & 9.976 & 113.153 \\
\hline
Average & 0.063 & 0.091 & 0.287 & 66.383 & 6.133 & 68.271 \\
\hline
\end{tabular}
}
\end{table}

\begin{table}[htbp]
\centering
\caption{$R^2$ of surrogates on 20-dimensional BBOB functions.}
\label{tab:tradition_resultb}
\resizebox{0.7\linewidth}{!}{
   \begin{tabular}{|c|c|c|c|c|c|c|}
\hline
 & Meta-Surrogate & RBFN & MTGP & FT-Transformer & Evidential-MLP & SAINT \\ 
\hline
Attractive\_Sector & \textbf{0.695} & 0.441 & -13.370 & -4.458 & -4.426 & -4.459 \\
\hline
Bent\_Cigar & \textbf{0.164} & 0.068 & 0.130 & 0.033 & 0.033 & 0.043 \\
\hline
Buche\_Rastrigin & \textbf{0.720} & 0.106 & 0.710 & 0.004 & 0.078 & 0.008 \\
\hline
Composite\_Grie\_rosen & 0.455 & \textbf{0.508} & 0.470 & -34.526 & -6.254 & -35.620 \\
\hline
Different\_Powers & 0.335 & 0.081 & \textbf{0.600} & -16.227 & -3.285 & -16.394 \\
\hline
Discus & \textbf{0.617} & 0.069 & -0.310 & 0.034 & 0.035 & 0.004 \\
\hline
Ellipsoidal & 0.348 & 0.124 & \textbf{0.910} & -2.071 & -2.070 & -2.071 \\
\hline
Ellipsoidal\_high\_cond & \textbf{0.496} & 0.029 & 0.470 & -1.716 & -1.716 & -1.716 \\
\hline
Gallagher\_101Peaks & -0.006 & 0.176 & \textbf{0.190} & -35.233 & -33.352 & -36.661 \\
\hline
Gallagher\_21Peaks & 0.049 & 0.101 & \textbf{0.240} & -82.454 & -80.300 & -85.508 \\
\hline
Katsuura & -0.078 & \textbf{-0.003} & -13.820 & -10.701 & -10.118 & -11.321 \\
\hline
Linear\_Slope & -0.739 & 0.076 & \textbf{0.766} & -39.834 & -33.359 & -40.307 \\
\hline
Lunacek\_bi\_Rastrigin & 0.567 & 0.626 & \textbf{0.690} & -20.353 & -2.848 & -20.586 \\
\hline
Rastrigin & \textbf{0.351} & 0.258 & -33.723 & -1.635 & -0.006 & -1.657 \\
\hline
Rastrigin\_F15 & \textbf{0.442} & 0.154 & -10.134 & -14.421 & -3.049 & -14.635 \\
\hline
Rosenbrock\_original & \textbf{0.915} & 0.261 & 0.740 & 0.053 & 0.054 & 0.053 \\
\hline
Rosenbrock\_rotated & 0.314 & \textbf{0.501} & 0.410 & -3.984 & -3.929 & -3.985 \\
\hline
Schaffers & 0.527 & 0.015 & \textbf{0.750} & -55.706 & 0.465 & -61.029 \\
\hline
Schaffers\_high\_cond & \textbf{0.305} & 0.120 & 0.260 & -46.680 & 0.018 & -48.182 \\
\hline
Schwefel & \textbf{0.889} & 0.202 & -47.150 & 0.045 & 0.046 & 0.036 \\
\hline
Sharp\_Ridge & 0.747 & 0.138 & \textbf{0.940} & -45.041 & -5.404 & -45.728 \\
\hline
Sphere & 0.831 & 0.290 & \textbf{0.990} & -36.044 & -42.083 & -36.440 \\
\hline
Step\_Ellipsoidal & \textbf{0.621} & 0.138 & -2.170 & -12.521 & -1.554 & -12.692 \\
\hline
Weierstrass & -0.099 & -0.017 & \textbf{-0.000} & -111.862 & -12.249 & -113.153 \\
\hline
\end{tabular}
}
\end{table}

We propose using {an} LLM with an encoder–decoder architecture (T5) as the meta-surrogate. For comprehensive evaluation, we consider five representative baselines~\autoref{tab:surrogate_baselines} that capture both classical and recent surrogate paradigms. The first is a conventional single-task RBFN, which remains the most widely adopted surrogate in many-task optimization algorithms despite the absence of established multi-task RBFN frameworks. The second is a state-of-the-art MTGP (i.e., the Adaptive Transfer Gaussian Process~\cite{MTGP}), implemented under the intrinsic coregionalization model (ICM) with a Matérn–\(5/2\) kernel and trained by sparse variational inference. In addition, we include three recent deep regression models designed for tabular or continuous prediction: SAINT (stacked interaction and self-attention blocks), FT-Transformer (rtdl configuration with pre-normalization and CLS readout), and an evidential regression MLP that outputs Normal–Inverse-Gamma parameters with an NLL-plus-evidence loss. Except for minimal adaptations to match dimensionality, we strictly follow the architectural and training hyper-parameters specified in the original papers. 

All methods are trained with identical data budgets of $FEs=500$ per task and consistent train/validation/test splits. The single-task RBFN is trained independently for each function–dimension pair. MTGP is trained in a joint fashion by treating the target problem as the main task and the remaining 23 BBOB problems as auxiliaries, while our proposed meta-surrogate is trained once by pooling data across all 24 problems and all dimensions. This unified protocol ensures a fair assessment of accuracy, reliability, and computational cost under comparable data and compute budgets.

{As shown in \autoref{tab:tradition_resulta} and \autoref{tab:tradition_resultb}, under the same evaluation budget the meta-surrogate achieves} lower prediction error compared to traditional surrogate models, with particularly strong performance in settings with limited evaluations. These results not only highlight the proposed method's cross-task modeling capabilities, but also demonstrate its superior data efficiency---a critical property in scenarios where historical optimization data is scarce.

\subsubsection{Investigation on Decoding Strategies}
After the LLM is trained, decoding strategies also play a crucial role. Here, we discuss how the LLM's sampling method affects the surrogate by comparing  greedy search, beam search, Top-K, Top-P, and temperature sampling methods~\cite{LLM_servey}.
We examine prediction errors for the sign, exponent, and mantissa tokens separately. Specifically, for the sign token, we compare the model's predicted sign with the true sign (treating a mismatch as incorrect) and compute the average error. For the exponent token, we extract and compare the exponent parts, computing the absolute difference and then averaging over all test samples. For the mantissa digits, we directly calculate the absolute difference between the predicted and true digits, again averaging across test samples.

\autoref{fig:sample} shows the prediction errors of the first 8 generated tokens versus the first 8 true tokens on the test set. The results suggest that at the most crucial token positions (e.g., sign and exponent), performance is similar across various sampling methods; for instance, the error at the first token is zero, indicating 100\% sign accuracy. However, for subsequent digit tokens (e.g., fifth to seventh), greedy sampling exhibits lower error rates.

We also convert the predicted fitness sequence into floating-point numbers and compute the sMAE relative to the true fitness. As seen in~\autoref{tab:sample_result}, in an offline many-task {DDEA} setting, a deterministic generation approach---namely, greedy sampling---yields more reliable pseudo-fitness estimates.

\begin{figure}[htbp]
    \centering
    \includegraphics[width=0.6\linewidth]{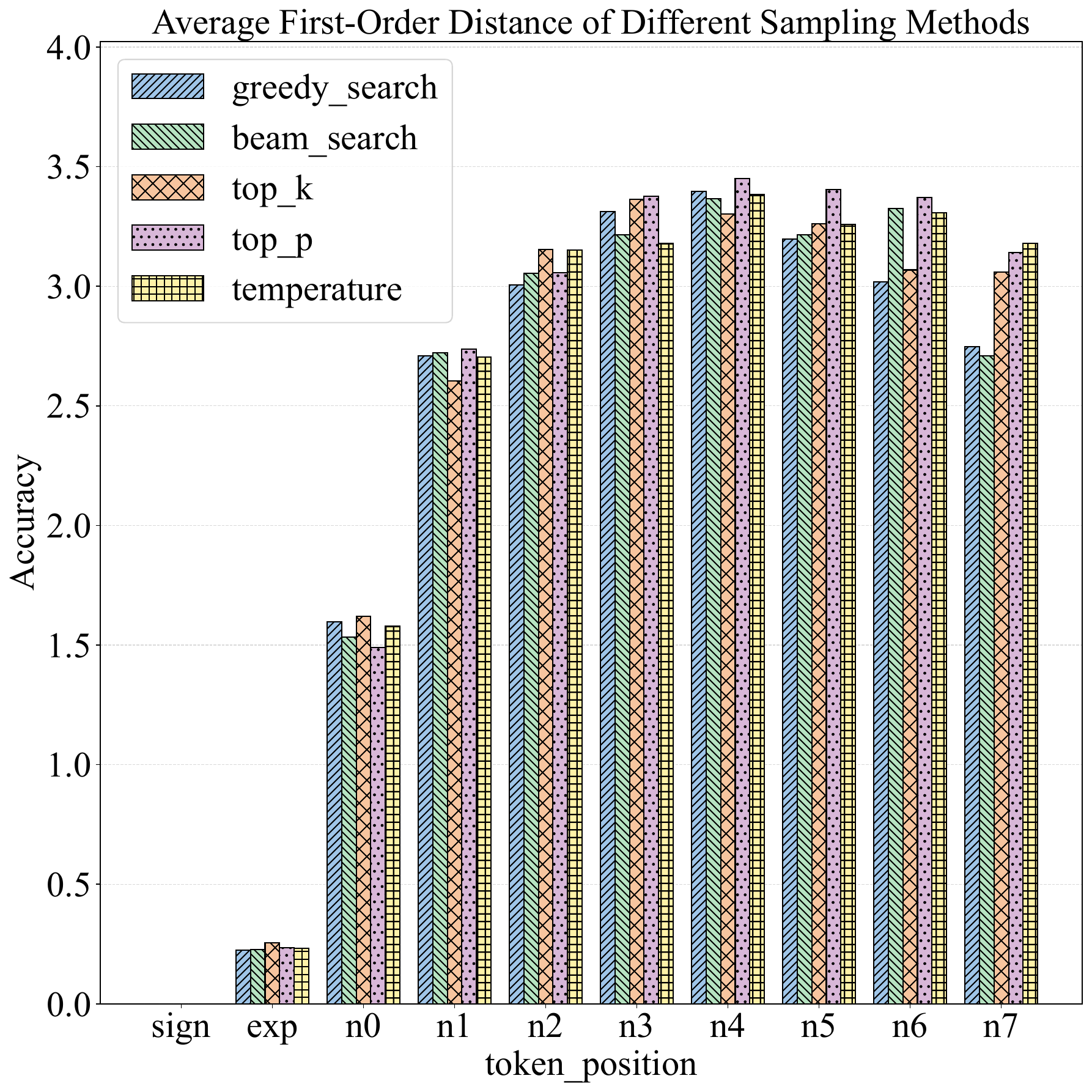}
    \caption{Prediction errors associated with different sampling methods.}
    \label{fig:sample}
\end{figure}

\begin{table}[htbp]
  \centering
  \caption{Prediction errors of different sampling methods on 20-dimensional BBOB-24 test functions.}
      \label{tab:sample_result}
  \resizebox{0.8\linewidth}{!}{
          \begin{tabular}{|c|c|c|c|c|c||c|c|c|c|c|}
\hline
 & \multicolumn{5}{|c||}{sMAE} & \multicolumn{5}{c|}{R$^2$} \\ 
 \hline
 & Greedy & Beam & Top\_K & Top\_P & temperature & Greedy & Beam & Top\_K & Top\_P & temperature \\
\hline
Attractive\_Sector      &  \textbf{0.0619} &           0.0695 &  0.0814 &  0.0778 &  0.0795 &   \textbf{0.6948} &            0.6303 &   0.5370 &           0.6006 &   0.5493 \\
\hline
Bent\_Cigar             &  \textbf{0.0222} &           0.0230 &  0.0332 &  0.0276 &  0.0276 &            0.1636 &   \textbf{0.3757} &  -0.1416 &           0.1601 &  -0.2478 \\
\hline
Buche\_Rastrigin        &  \textbf{0.0422} &           0.0428 &  0.0536 &  0.0501 &  0.0500 &            0.7201 &   \textbf{0.7241} &   0.5970 &           0.6179 &   0.6486 \\
\hline
Composite\_Grie\_rosen  &  \textbf{0.0588} &           0.0624 &  0.0818 &  0.0763 &  0.0741 &   \textbf{0.4546} &            0.4232 &   0.1498 &           0.2341 &   0.2696 \\
\hline
Different\_Powers       &  \textbf{0.0319} &           0.0324 &  0.0462 &  0.0440 &  0.0400 &   \textbf{0.3349} &            0.3055 &  -0.1300 &           0.0524 &   0.1371 \\
\hline
Discus                  &  \textbf{0.0770} &           0.0789 &  0.1080 &  0.0969 &  0.0986 &   \textbf{0.6172} &            0.6027 &   0.2357 &           0.3772 &   0.4174 \\
\hline
Ellipsoidal             &  \textbf{0.0691} &           0.0696 &  0.0921 &  0.0841 &  0.0802 &            0.3479 &   \textbf{0.3493} &   0.0115 &           0.1378 &   0.2108 \\
\hline
Ellipsoidal\_high\_cond &  \textbf{0.0689} &           0.0713 &  0.0907 &  0.0873 &  0.0795 &   \textbf{0.4962} &            0.4889 &   0.2073 &           0.2838 &   0.4150 \\
\hline
Gallagher\_101Peaks     &  \textbf{0.0679} &           0.0694 &  0.0961 &  0.0867 &  0.0809 &  \textbf{-0.0062} &           -0.0224 &  -0.4684 &          -0.2419 &  -0.1066 \\
\hline
Gallagher\_21Peaks      &  \textbf{0.0527} &           0.0539 &  0.0678 &  0.0591 &  0.0609 &            0.0487 &            0.0254 &  -0.2710 &  \textbf{0.0595} &   0.0031 \\
\hline
Katsuura                &           0.1526 &  \textbf{0.1518} &  0.2238 &  0.2105 &  0.1923 &           -0.0781 &  \textbf{-0.0619} &  -1.2109 &          -0.9470 &  -0.6847 \\
\hline
Linear\_Slope           &           0.0454 &  \textbf{0.0347} &  0.0661 &  0.0563 &  0.0528 &           -0.7389 &   \textbf{0.9240} &  -1.3500 &          -0.6758 &   0.4179 \\
\hline
Lunacek\_bi\_Rastrigin  &  \textbf{0.0837} &           0.0882 &  0.1144 &  0.1045 &  0.0995 &   \textbf{0.5671} &            0.5146 &   0.2105 &           0.3433 &   0.3715 \\
\hline
Rastrigin               &  \textbf{0.0520} &           0.0674 &  0.0732 &  0.0655 &  0.0652 &   \textbf{0.3508} &           -0.0252 &  -0.2736 &           0.0294 &  -0.0109 \\
\hline
Rastrigin\_F15          &           0.0435 &  \textbf{0.0433} &  0.0505 &  0.0466 &  0.0468 &            0.4415 &            0.4502 &   0.3189 &  \textbf{0.4535} &   0.4095 \\
\hline
Rosenbrock\_original    &  \textbf{0.0425} &           0.0442 &  0.0618 &  0.0623 &  0.0553 &   \textbf{0.9146} &            0.9097 &   0.8232 &           0.8285 &   0.8637 \\
\hline
Rosenbrock\_rotated     &  \textbf{0.0546} &           0.0549 &  0.0633 &  0.0614 &  0.0590 &            0.3136 &   \textbf{0.3149} &   0.2247 &           0.3059 &   0.2813 \\
\hline
Schaffers               &           0.0371 &  \textbf{0.0353} &  0.0411 &  0.0383 &  0.0375 &            0.5266 &   \textbf{0.5565} &   0.4312 &           0.5555 &   0.5160 \\
\hline
Schaffers\_high\_cond   &  \textbf{0.0518} &           0.0534 &  0.0711 &  0.0624 &  0.0618 &   \textbf{0.3046} &            0.2913 &  -0.2232 &           0.2088 &   0.2831 \\
\hline
Schwefel                &           0.0442 &  \textbf{0.0427} &  0.0570 &  0.0547 &  0.0532 &            0.8887 &   \textbf{0.8988} &   0.8106 &           0.8272 &   0.8385 \\
\hline
Sharp\_Ridge            &           0.0602 &  \textbf{0.0596} &  0.0743 &  0.0720 &  0.0715 &            0.7467 &   \textbf{0.7484} &   0.5926 &           0.6373 &   0.6484 \\
\hline
Sphere                  &  \textbf{0.0576} &           0.0582 &  0.0746 &  0.0692 &  0.0671 &   \textbf{0.8309} &            0.8271 &   0.7031 &           0.7363 &   0.7567 \\
\hline
Step\_Ellipsoidal       &  \textbf{0.0767} &           0.0817 &  0.1060 &  0.0975 &  0.0907 &   \textbf{0.6211} &            0.5496 &   0.2534 &           0.3816 &   0.4676 \\
\hline
Weierstrass             &  \textbf{0.1457} &           0.1458 &  0.1875 &  0.1751 &  0.1799 &           -0.0993 &  \textbf{-0.0980} &  -0.7193 &          -0.4921 &  -0.5850 \\
\hline
\end{tabular}
  }
\end{table}

\subsection{Reliability \& Calibration (RQ2)}

\subsubsection{Token-Level Mechanistic Study}

\begin{figure}[t]
    \centering
    \includegraphics[width=1\linewidth]{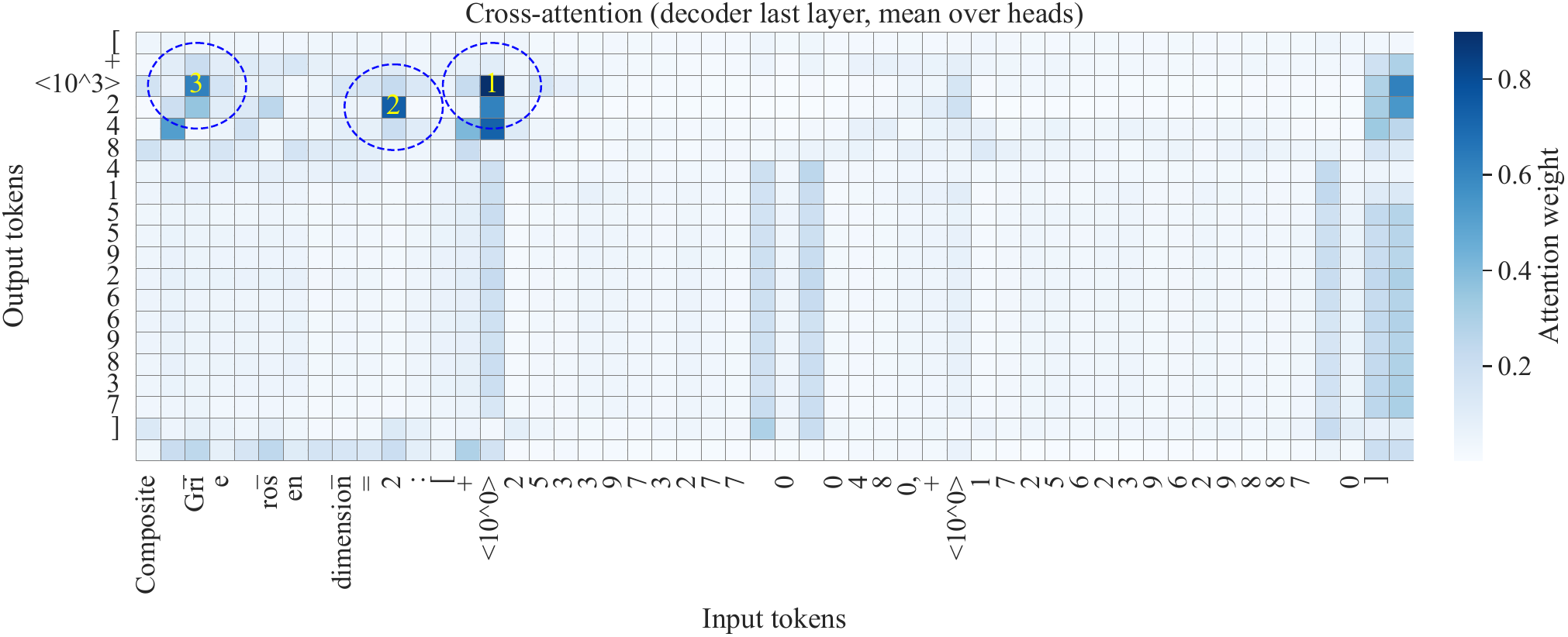}
    \caption{Decoder cross-attention heatmap for the \texttt{Composite\_Grie\_Rosen} prompt ($d=3$) at the final layer of the meta-surrogate. Color intensity reflects the average attention weight across all heads. Most of the probability mass is concentrated on numeric tokens---particularly their sign and exponent symbols---indicating that the decoder first leverages order-of-magnitude information to delimit the search region. The function name and dimensionality tokens maintain a consistent block of attention, confirming that the model explicitly queries task metadata before generating the regression output.}
    \label{fig:cross_attention_layer}
\end{figure}

\begin{figure}[t]
    \centering
    \includegraphics[width=1\linewidth]{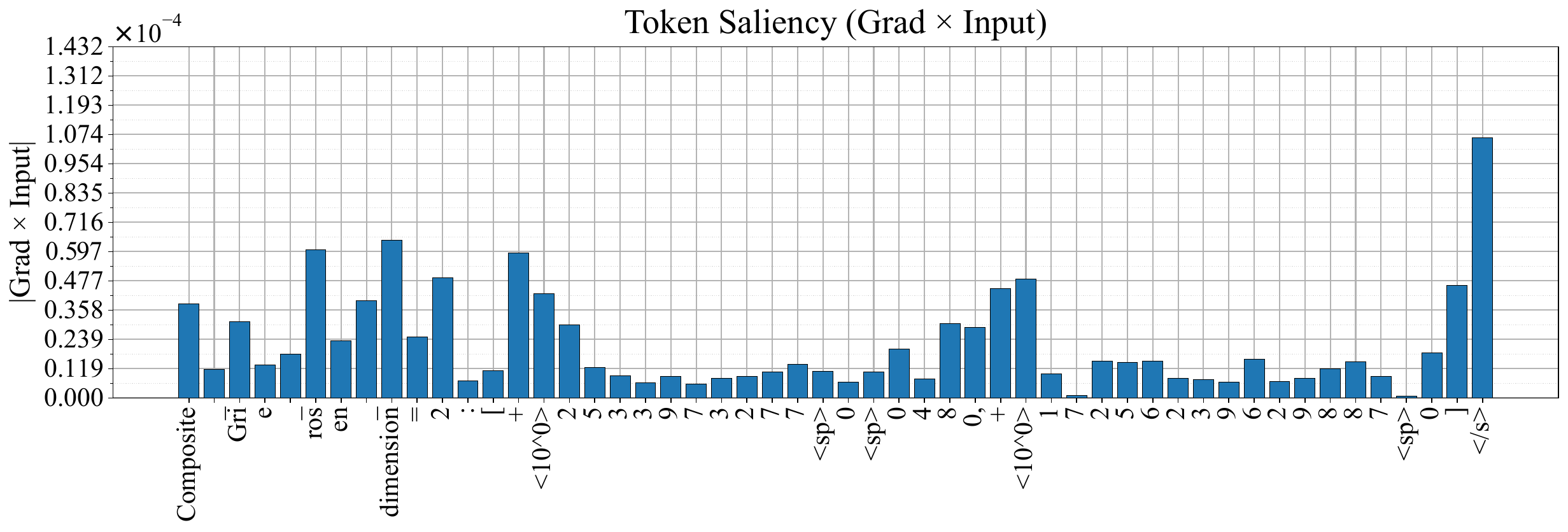}
    \caption{\emph{Grad~$\times$~Input} saliency profile for the same prompt. Each bar shows the $\ell_1$-norm of the element-wise product between the input embedding and its gradient with respect to the mean encoder representation. 
    The highest scores are observed for the task prompt, the sign token, the exponent token, and the end-of-sequence token `</s>`, followed by high-order mantissa digits; lower-order digits contribute negligibly. 
    Although no manual weighting is applied to sign or exponent positions in the input sequence, the model autonomously assigns greater gradient magnitudes to them. This indicates that it has learned to prioritize scale-related cues for numerical reasoning while still leveraging explicit task context.}
    \label{fig:Token_Saliency}
\end{figure}

To verify how {the} meta-surrogate integrates \textit{task-level semantics} with \textit{numerical decision variables} during prediction, we conducted two complementary interpretability studies---cross-attention visualisation and \emph{Grad~$\times$~Input} saliency~\cite{dientbasedattribution} on the same Composite\_\allowbreak Grie\_\allowbreak Rosen example in three dimensions. 

The cross-attention heatmap ~\autoref{fig:cross_attention_layer}
shows that, when the decoder generates the predicted objective value, the overwhelming majority of attention mass is assigned to numeric tokens, with a marked concentration on sign and exponent symbols. 
This indicates that the model initially extracts order-of-magnitude cues to efficiently localize the fitness region. Meanwhile, task-metadata tokens---{specifically, the function name and dimensionality}---consistently retain a stable share of attention, suggesting that the decoder explicitly leverages task context when forming the regression mapping, thereby enabling conditional inference.

To quantify the contribution of each input token to the hidden representation, we backpropagate the gradient of the mean encoder output $\bar{\mathbf{h}}$ with respect to each input embedding $\mathbf{e}_j$, and compute:
\begin{equation}
S_j =
\left\lVert
\frac{\partial\bar{\mathbf h}}{\partial\mathbf e_j} \odot \mathbf e_j
\right\rVert_1,
\end{equation}
where $\mathbf e_j$ is the embedding of the $j$-th input token and $\odot$ denotes the Hadamard product. The saliency ranking ~\autoref{fig:Token_Saliency} reveals that the function name and the task prompt occupy the top positions; sign and exponent tokens obtain the largest $S_j$ values, followed by the higher-order mantissa digits, whereas lower-order digits contribute negligibly. Importantly, the encoding scheme of input sequence assigns no explicit weight to sign or high-order digits, yet the model still allocates disproportionately high gradients to them, indicating that it has autonomously learned to centre its numerical reasoning on order-of-magnitude information.

Taken together, the two lines of evidence expose a hierarchical computation inside the model: it first selects an appropriate latent {subspace} via the task tokens, subsequently captures coarse-scale information through sign and exponent symbols, and finally refines the estimate with a small subset of high-order mantissa digits. This mechanism explains the model's robust generalisation across tasks and in zero-shot settings, and, at a microscopic level, validates the representational efficiency and interpretability of the ``symbol $+$ scientific-notation'' encoding adopted for high-dimensional optimisation.

In summary, the meta-surrogate is able to process symbolic task context and high-precision numerical information within a single input sequence, combining explicit attention to task semantics with implicit extraction of numeric scale to model complex many-task fitness functions accurately. These findings provide theoretical support for the trustworthy application of LLMs in DDEAs.

\subsubsection{Coordinate-Wise Response-Slice Analysis}
\label{subsec:slice}
To visualise how each surrogate behaves on local response surfaces, we
slide a single coordinate while fixing the remaining dimensions.
Although this procedure was applied to the complete set of
24 five-dimensional BBOB functions, for most problems the curves produced
by the {meta-surrogate} and the {RBFN} are virtually
indistinguishable from the ground truth, whereas the
{MTGP} deviates so markedly that the plots offer little
additional insight.  
We therefore report results for four representative function.
For each function we linearly sweep every coordinate
\(x_{d}\in[-5,5]\;(d=1,\dots,5)\) from a randomly selected reference
point \(\mathbf{x}_{0}\), generating five one-dimensional slices per
function and a total of {20 response-slice plots}.
These slices allow a direct visual comparison of the ground-truth curve
\(f(\mathbf{x})\) with the prediction curves
\(\hat f(\mathbf{x})\) produced by the three surrogate models. \autoref{fig:Slice} {illustrates} the close agreement of T5 and RBFN with the true responses and the comparatively larger errors of MTGP.

\begin{figure*}[htbp]
    \centering
        \subfloat[\label{fig:offline-curves-ackley-100}]{
        \includegraphics[width=0.19\textwidth]{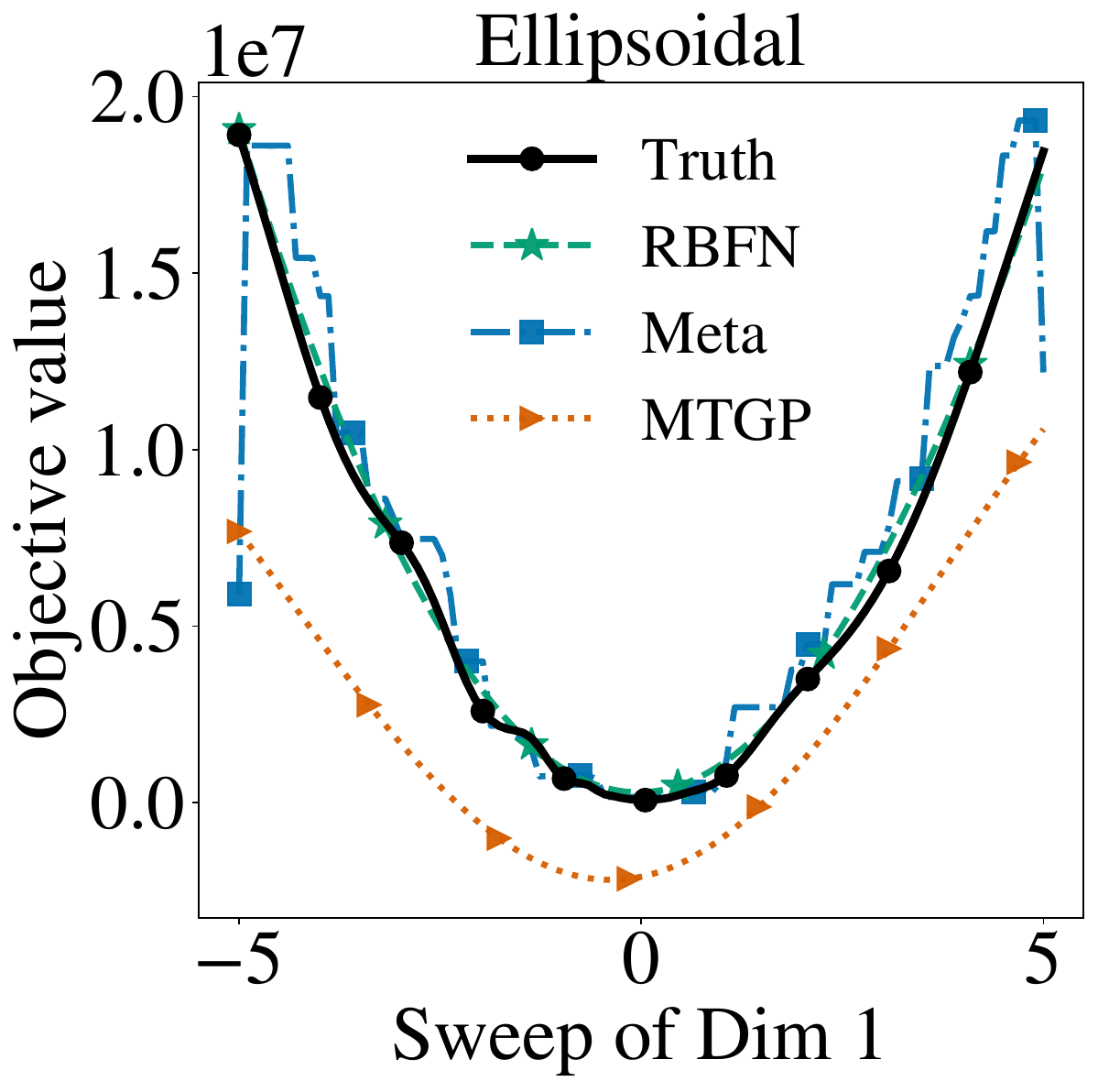}
    }
    \subfloat[\label{fig:offline-curves-ackley-200}]{
        \includegraphics[width=0.19\textwidth]{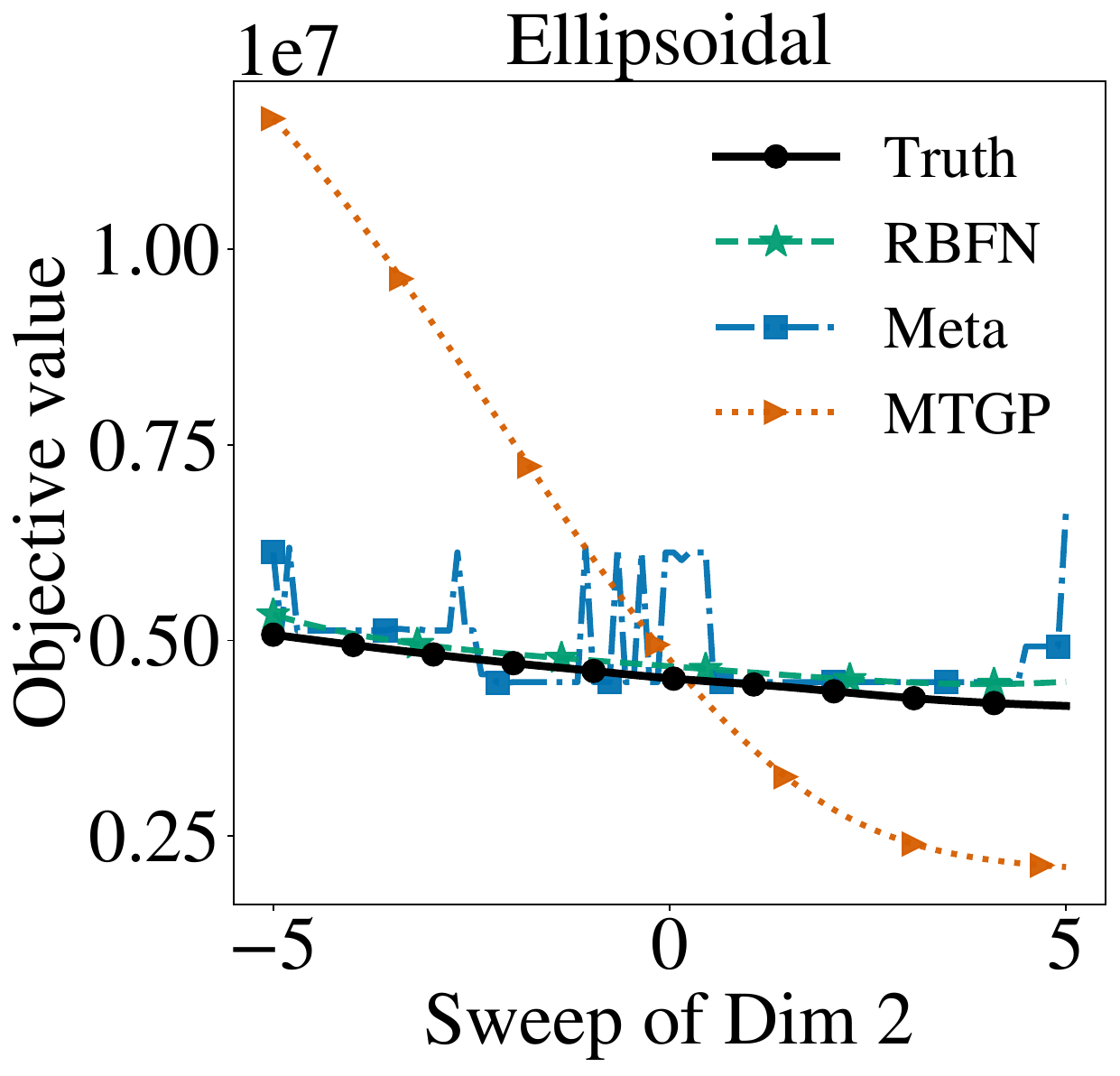}
    }
    \subfloat[\label{fig:offline-curves-ackley-300}]{
        \includegraphics[width=0.19\textwidth]{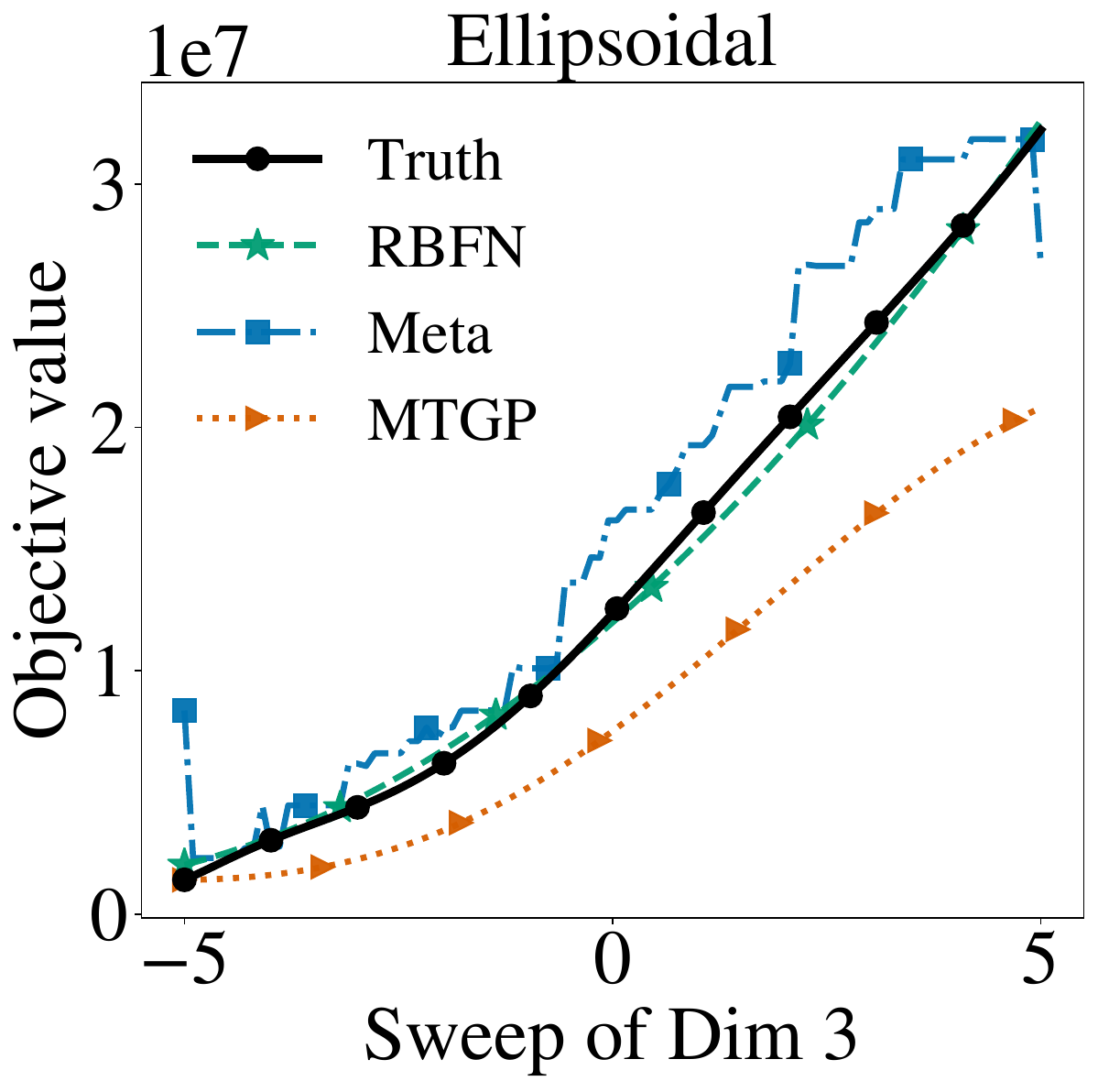}
    }
    \subfloat[\label{fig:offline-curves-ackley-500}]{
        \includegraphics[width=0.19\textwidth]{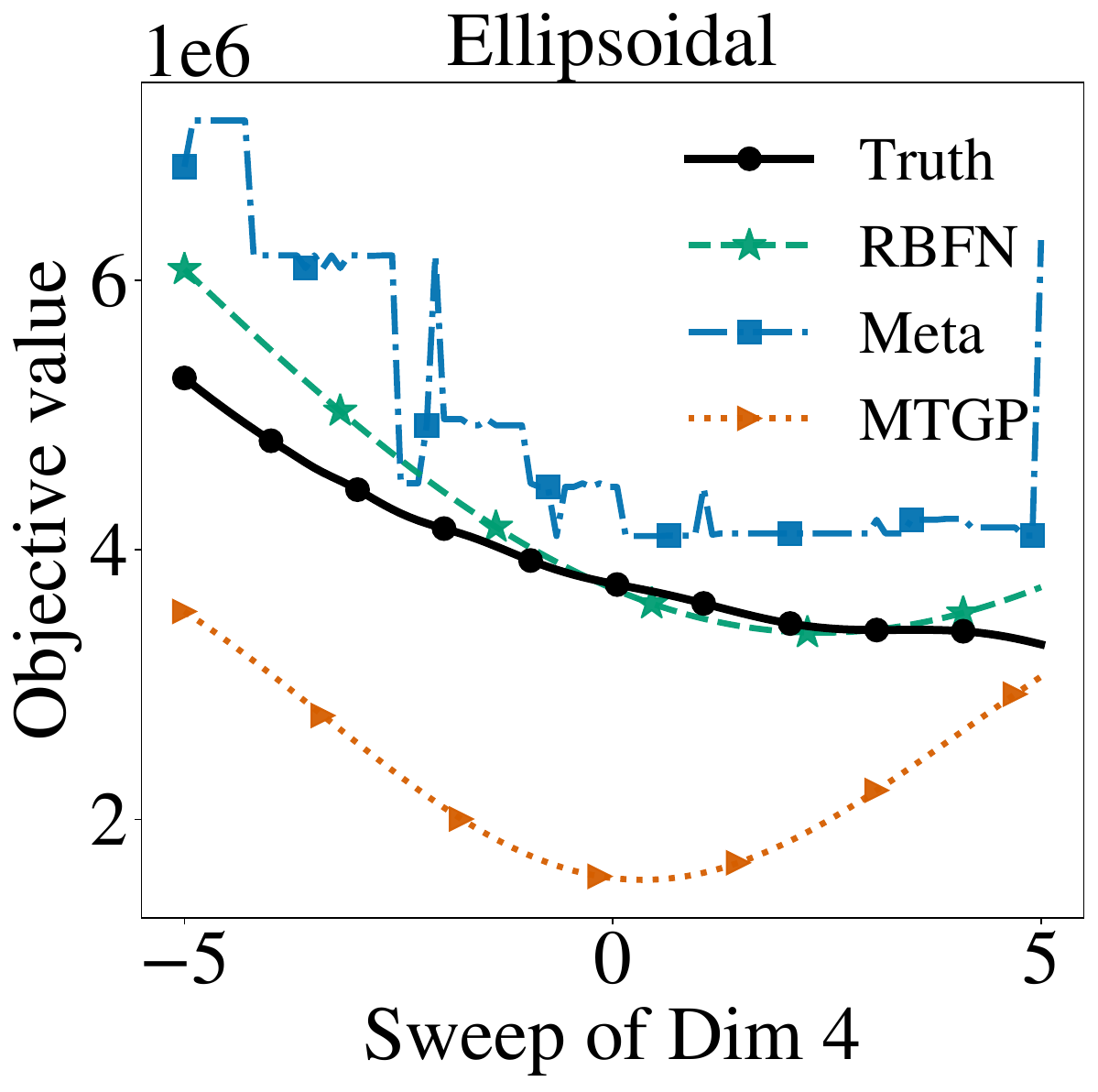}
    }
    \subfloat[\label{fig:offline-curves-ackley-1000}]{
        \includegraphics[width=0.19\textwidth]{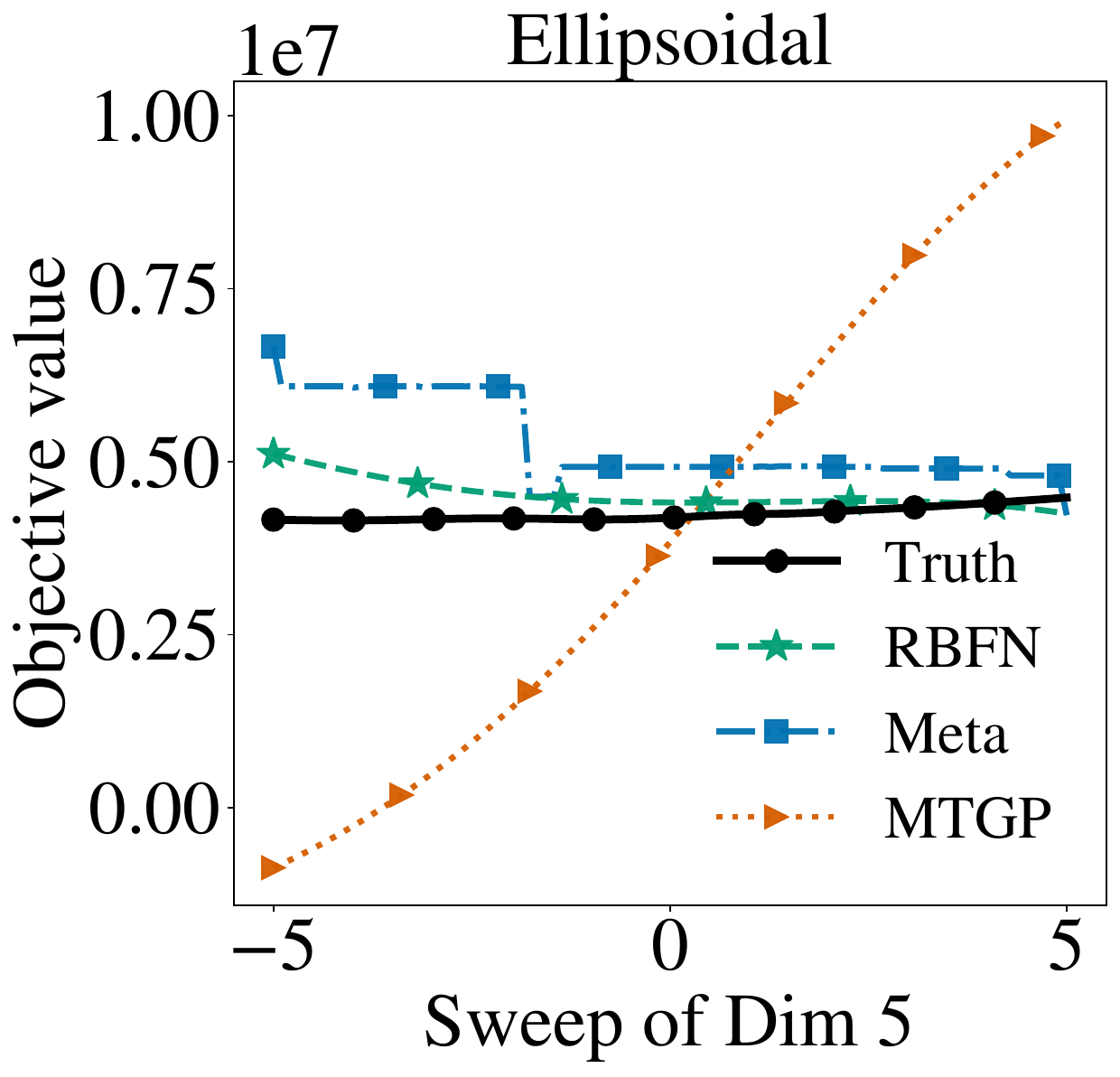}
    }
    \\
    \subfloat[\label{fig:offline-curves-ackley-100}]{
        \includegraphics[width=0.19\textwidth]{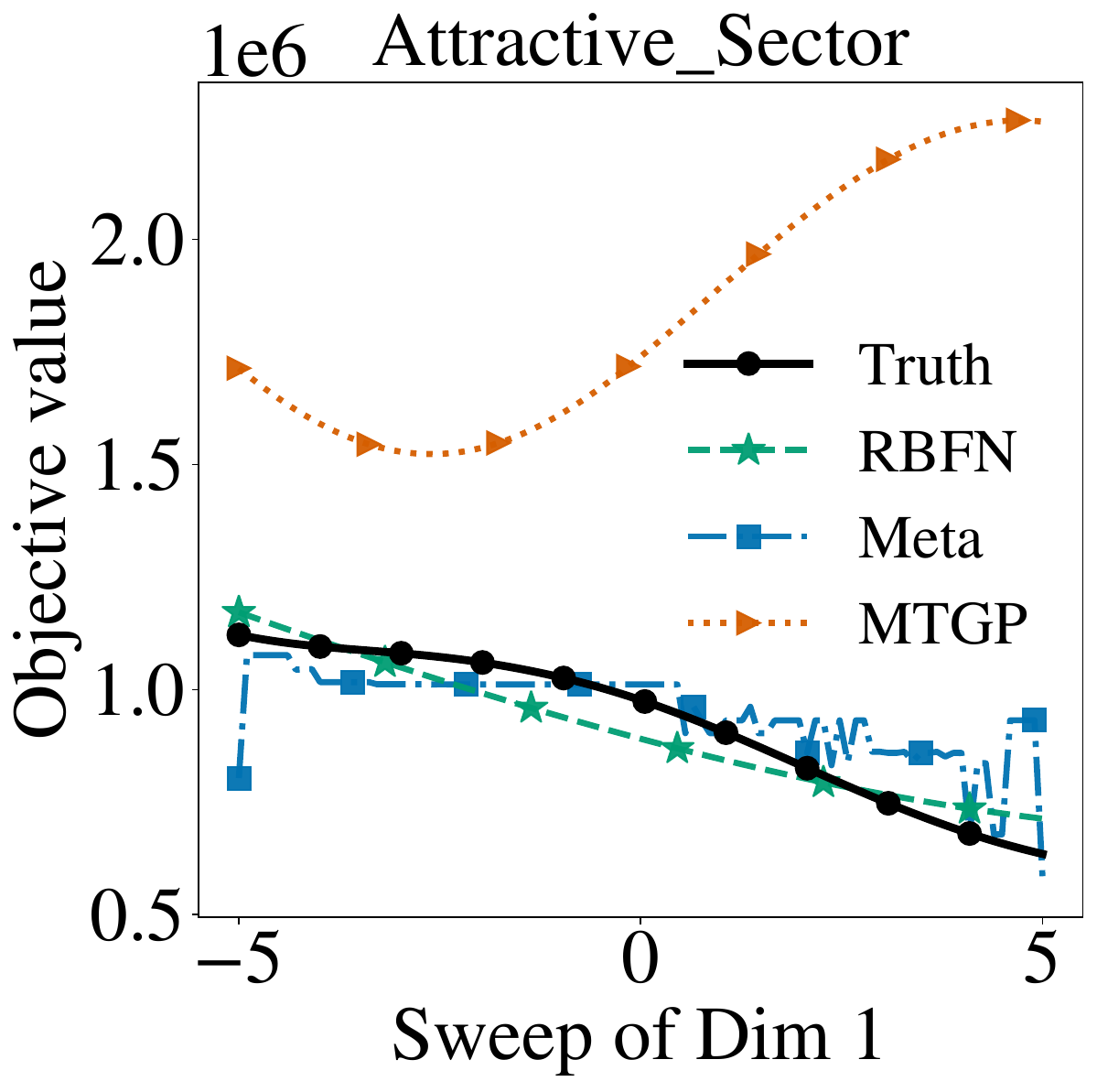}
    }
    \subfloat[\label{fig:offline-curves-ackley-200}]{
        \includegraphics[width=0.19\textwidth]{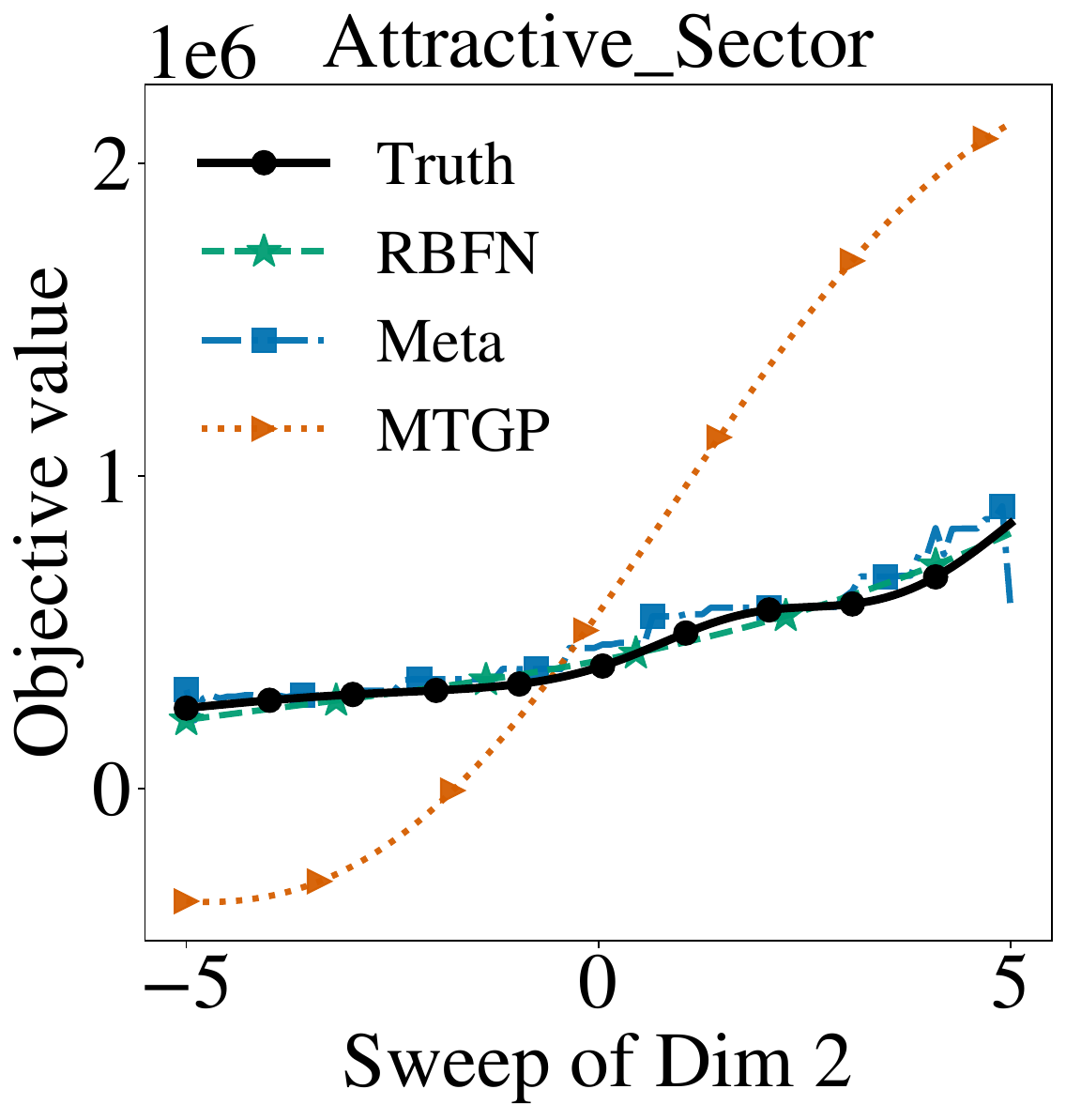}
    }
    \subfloat[\label{fig:offline-curves-ackley-300}]{
        \includegraphics[width=0.19\textwidth]{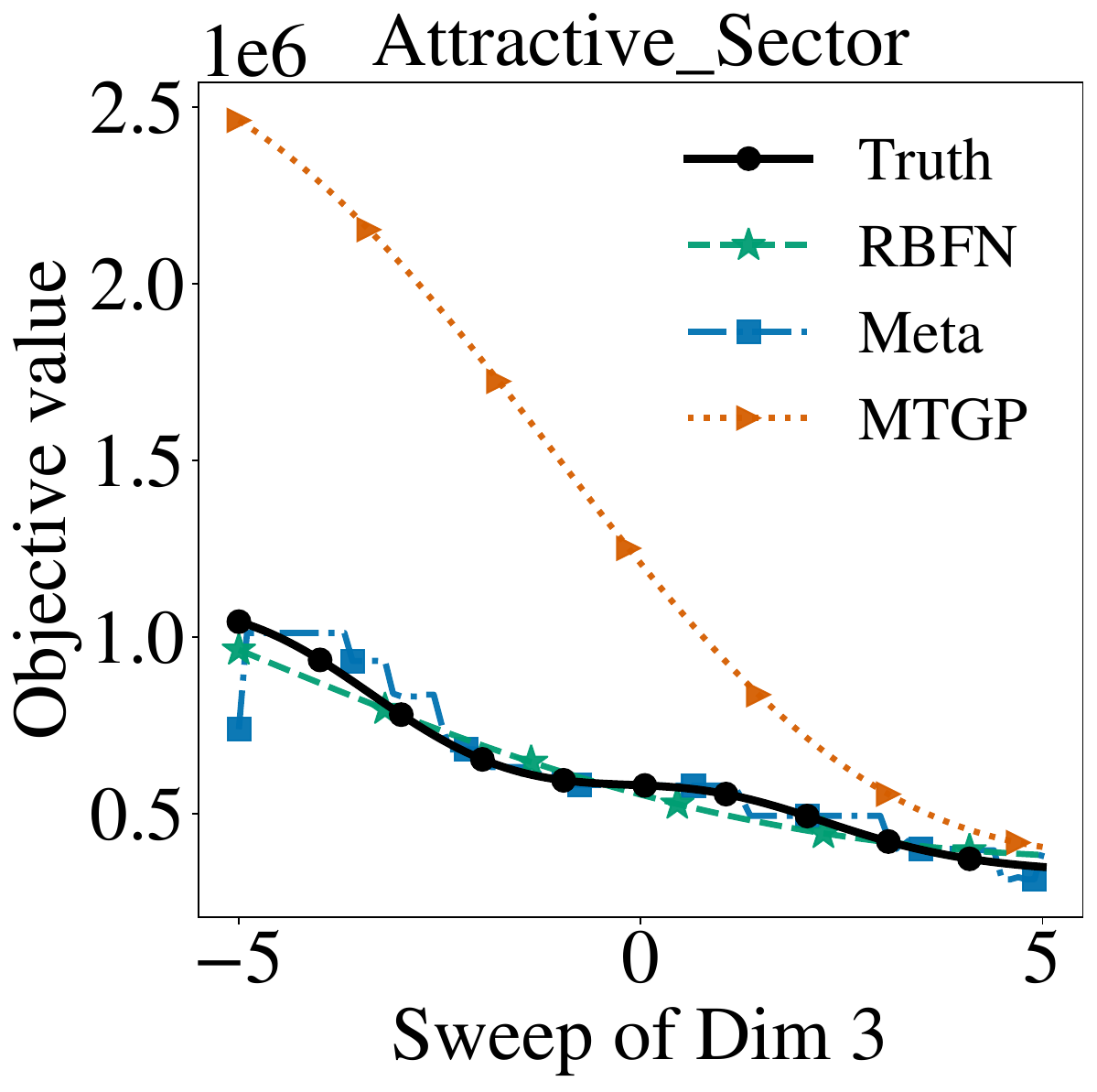}
    }
    \subfloat[\label{fig:offline-curves-ackley-500}]{
        \includegraphics[width=0.19\textwidth]{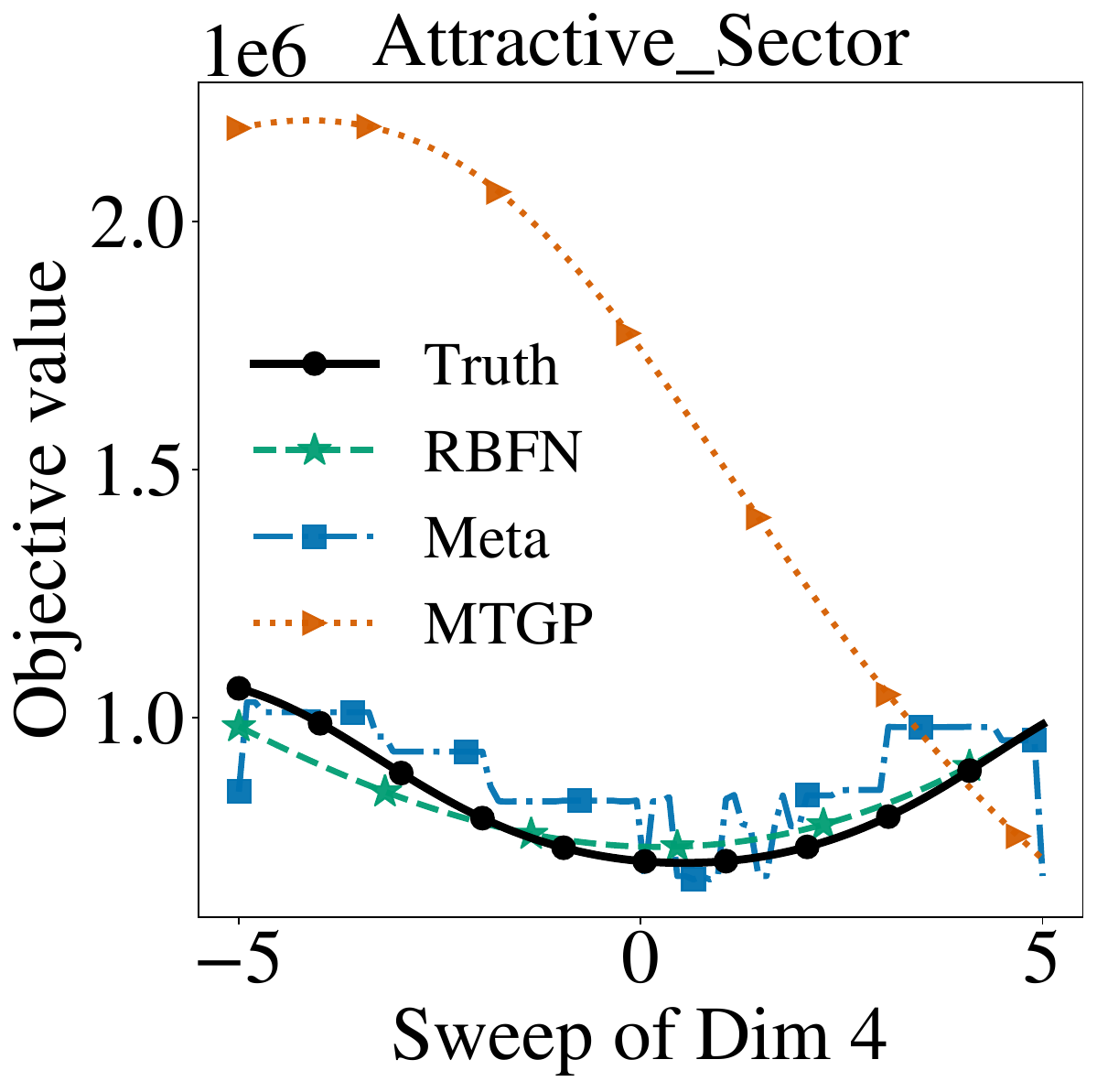}
    }
    \subfloat[\label{fig:offline-curves-ackley-1000}]{
        \includegraphics[width=0.19\textwidth]{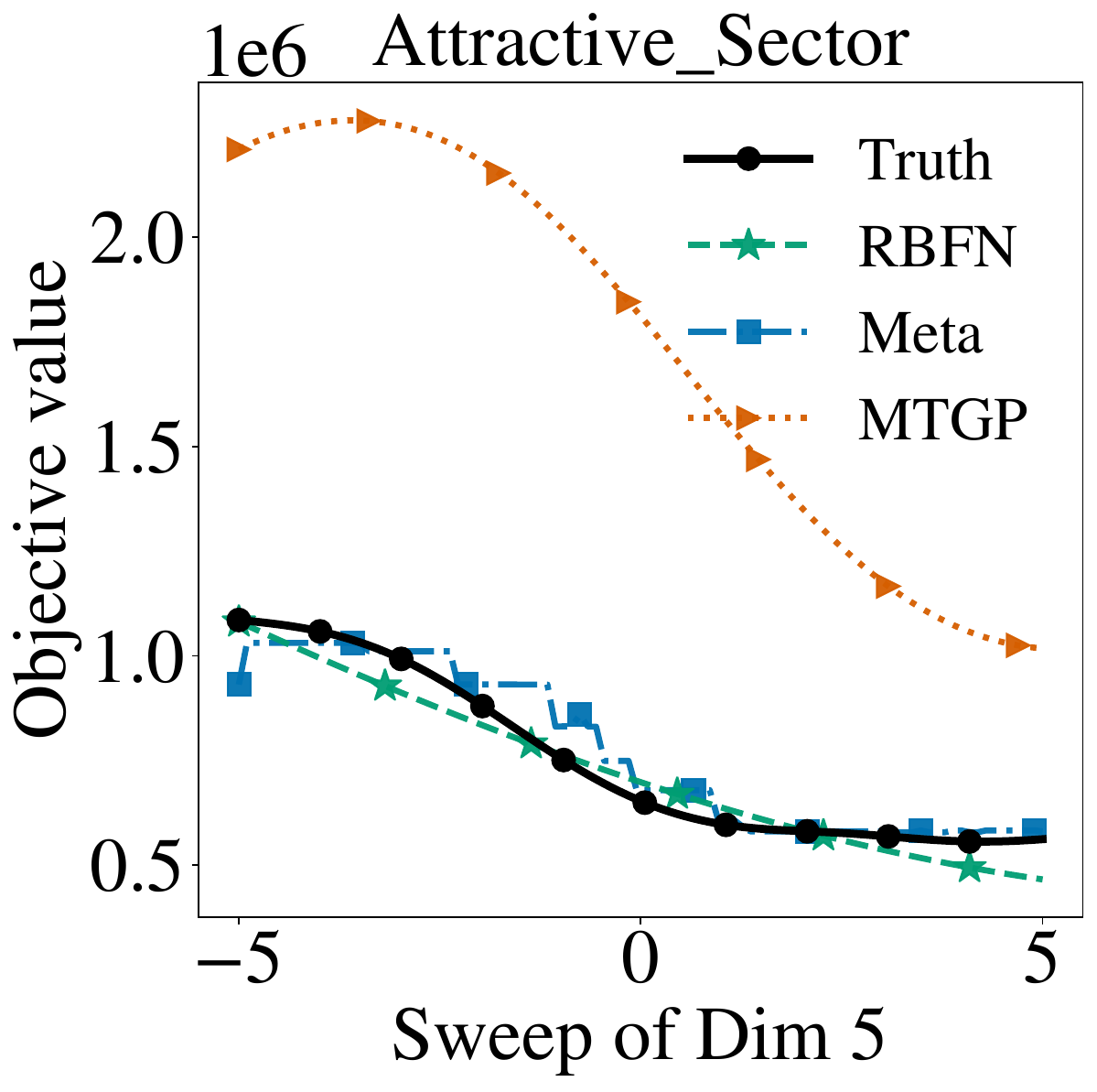}
    }
    \\
        \subfloat[\label{fig:offline-curves-ackley-100}]{
        \includegraphics[width=0.19\textwidth]{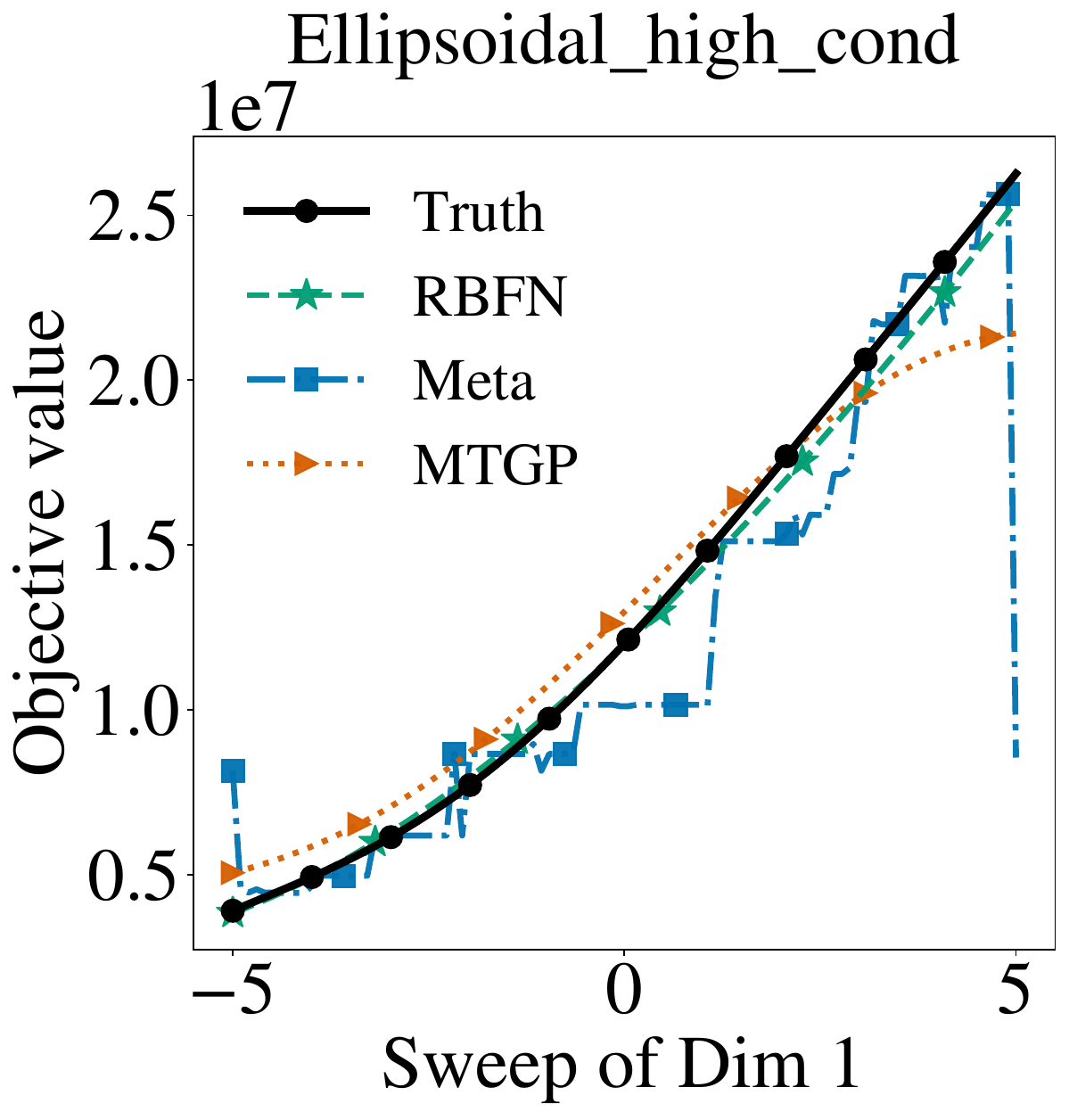}
    }
    \subfloat[\label{fig:offline-curves-ackley-200}]{
        \includegraphics[width=0.19\textwidth]{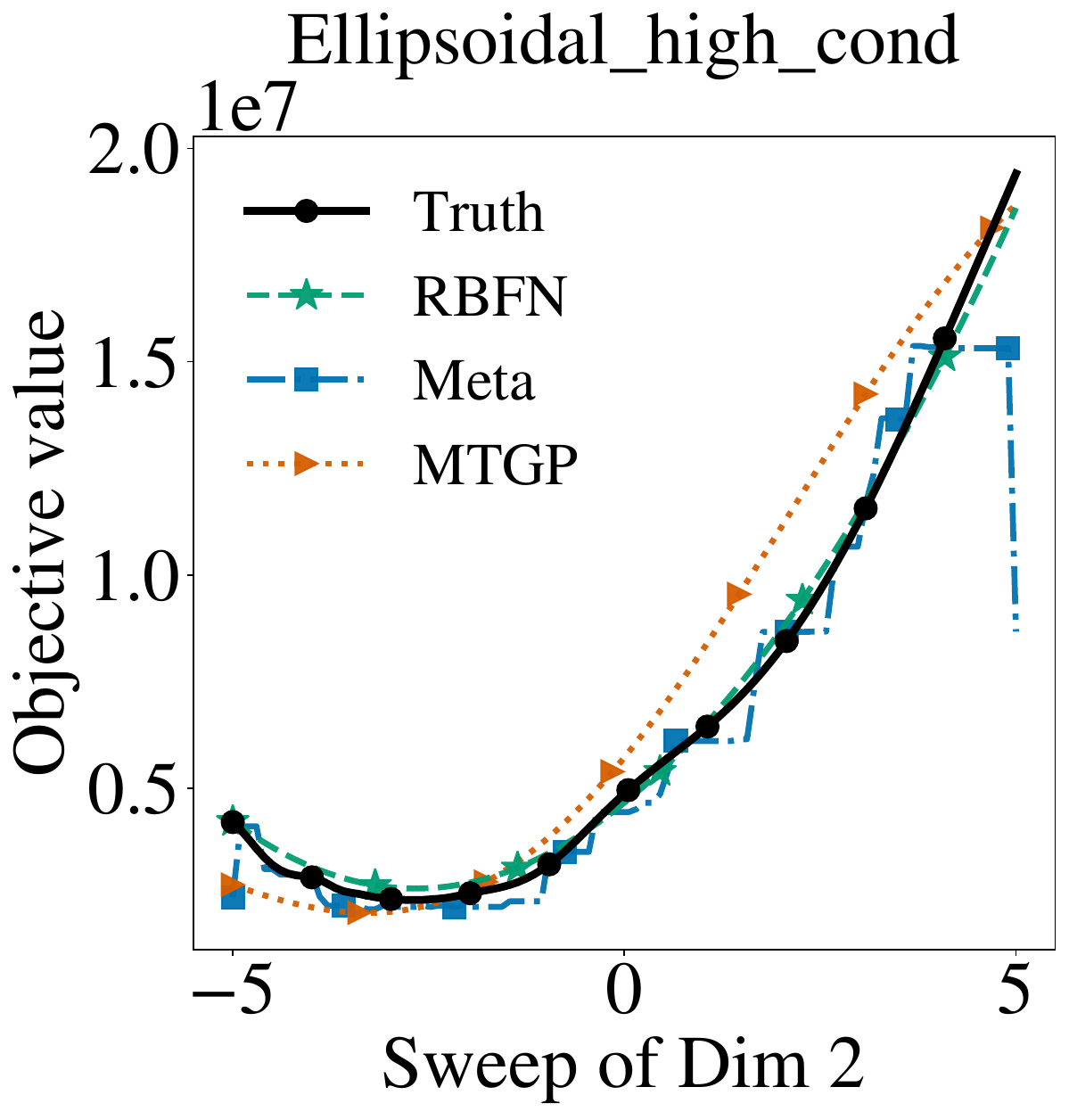}
    }
    \subfloat[\label{fig:offline-curves-ackley-300}]{
        \includegraphics[width=0.19\textwidth]{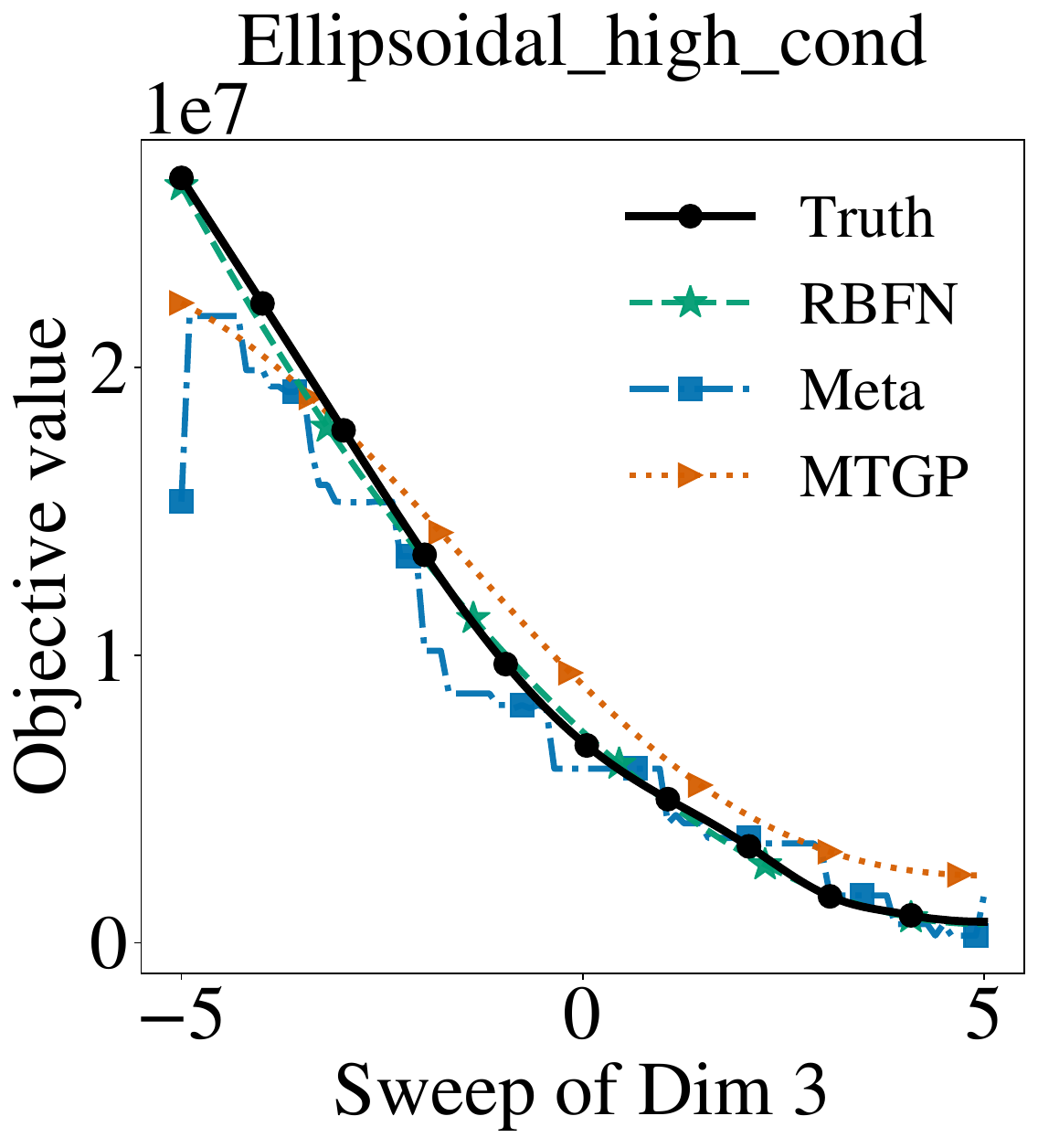}
    }
    \subfloat[\label{fig:offline-curves-ackley-500}]{
        \includegraphics[width=0.19\textwidth]{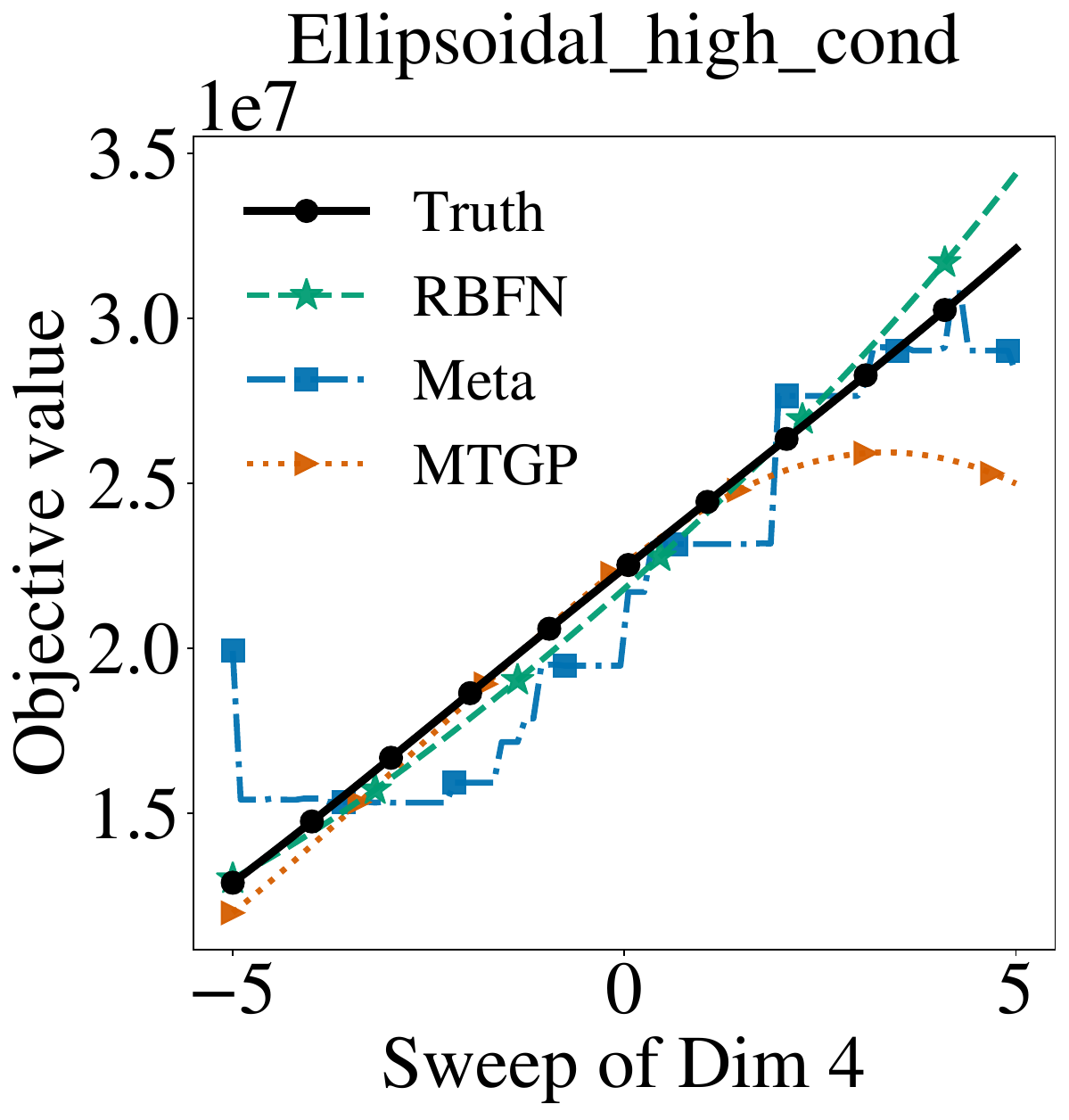}
    }
    \subfloat[\label{fig:offline-curves-ackley-1000}]{
        \includegraphics[width=0.19\textwidth]{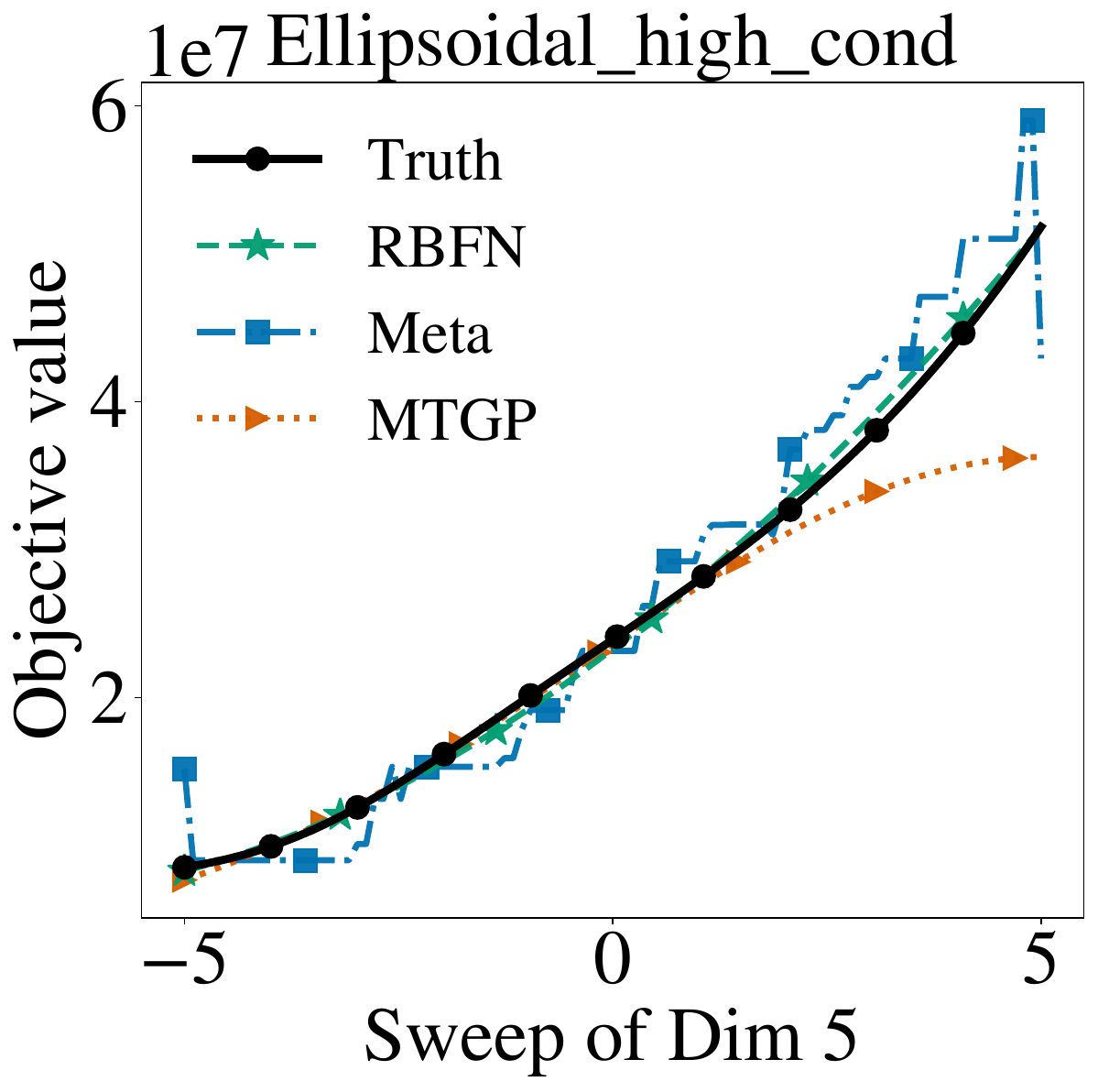}
    }
    \\
        \subfloat[\label{fig:offline-curves-ackley-100}]{
        \includegraphics[width=0.19\textwidth]{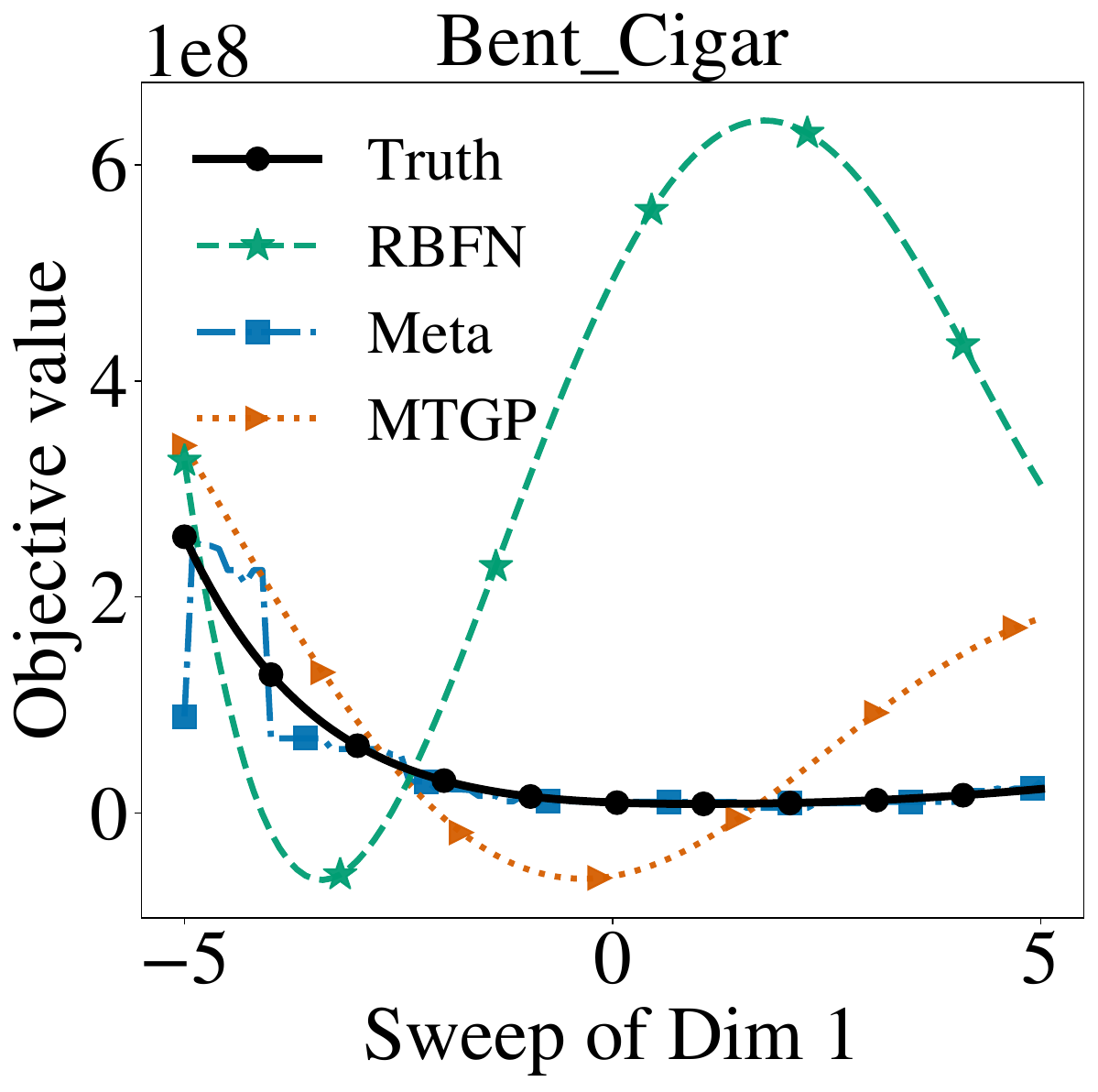}
    }
    \subfloat[\label{fig:offline-curves-ackley-200}]{
        \includegraphics[width=0.19\textwidth]{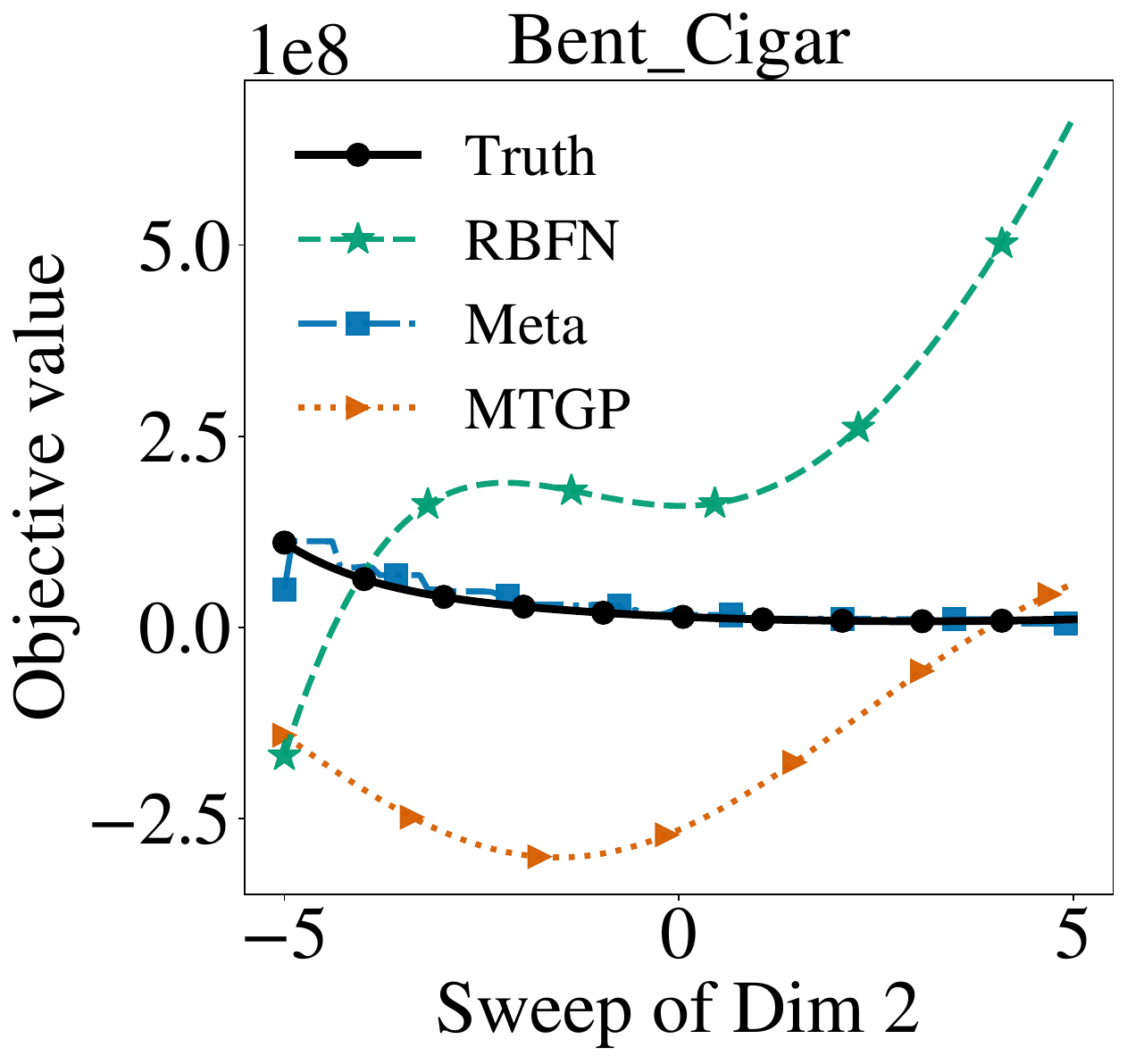}
    }
    \subfloat[\label{fig:offline-curves-ackley-300}]{
        \includegraphics[width=0.19\textwidth]{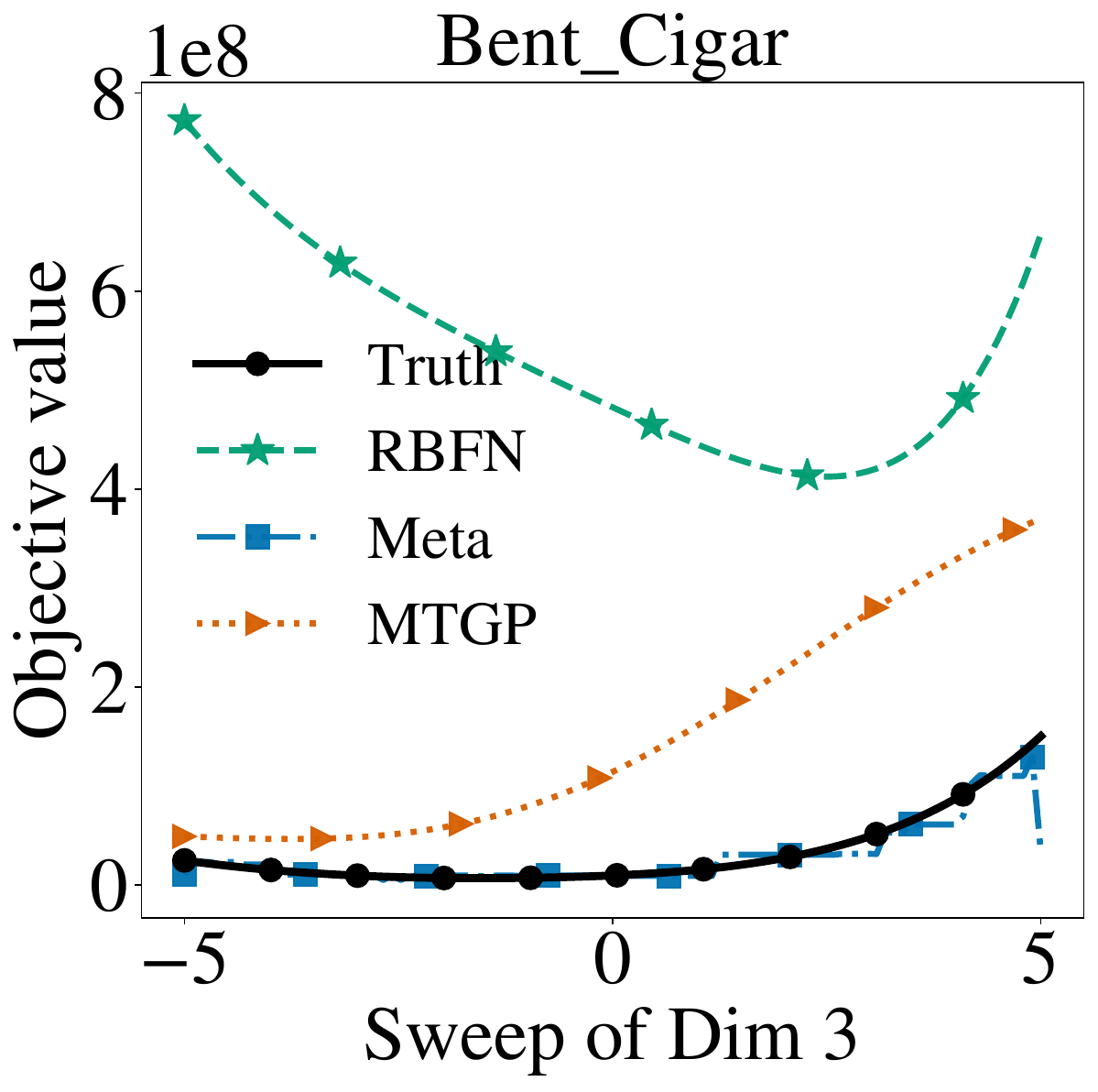}
    }
    \subfloat[\label{fig:offline-curves-ackley-500}]{
        \includegraphics[width=0.19\textwidth]{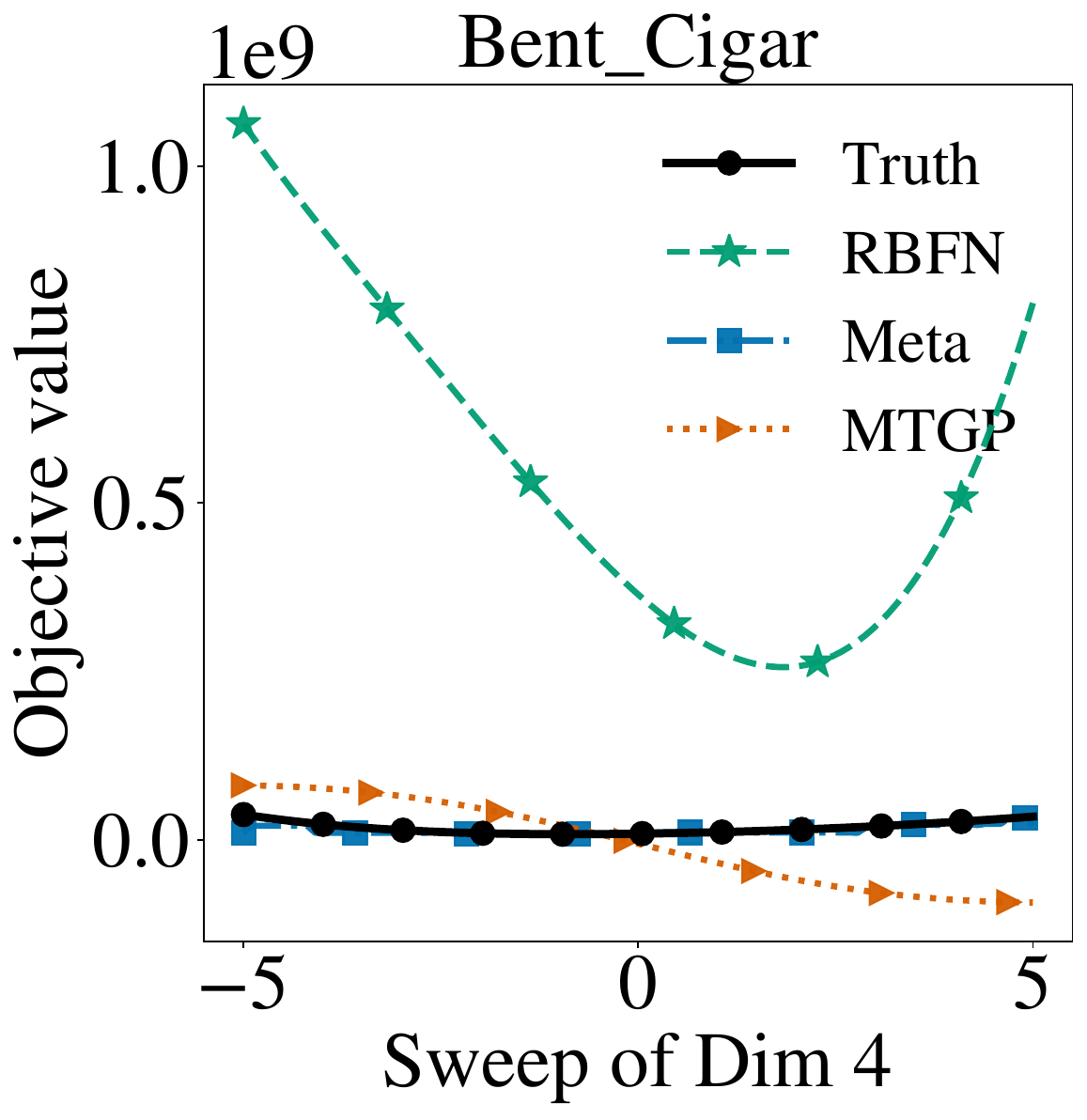}
    }
    \subfloat[\label{fig:offline-curves-ackley-1000}]{
        \includegraphics[width=0.19\textwidth]{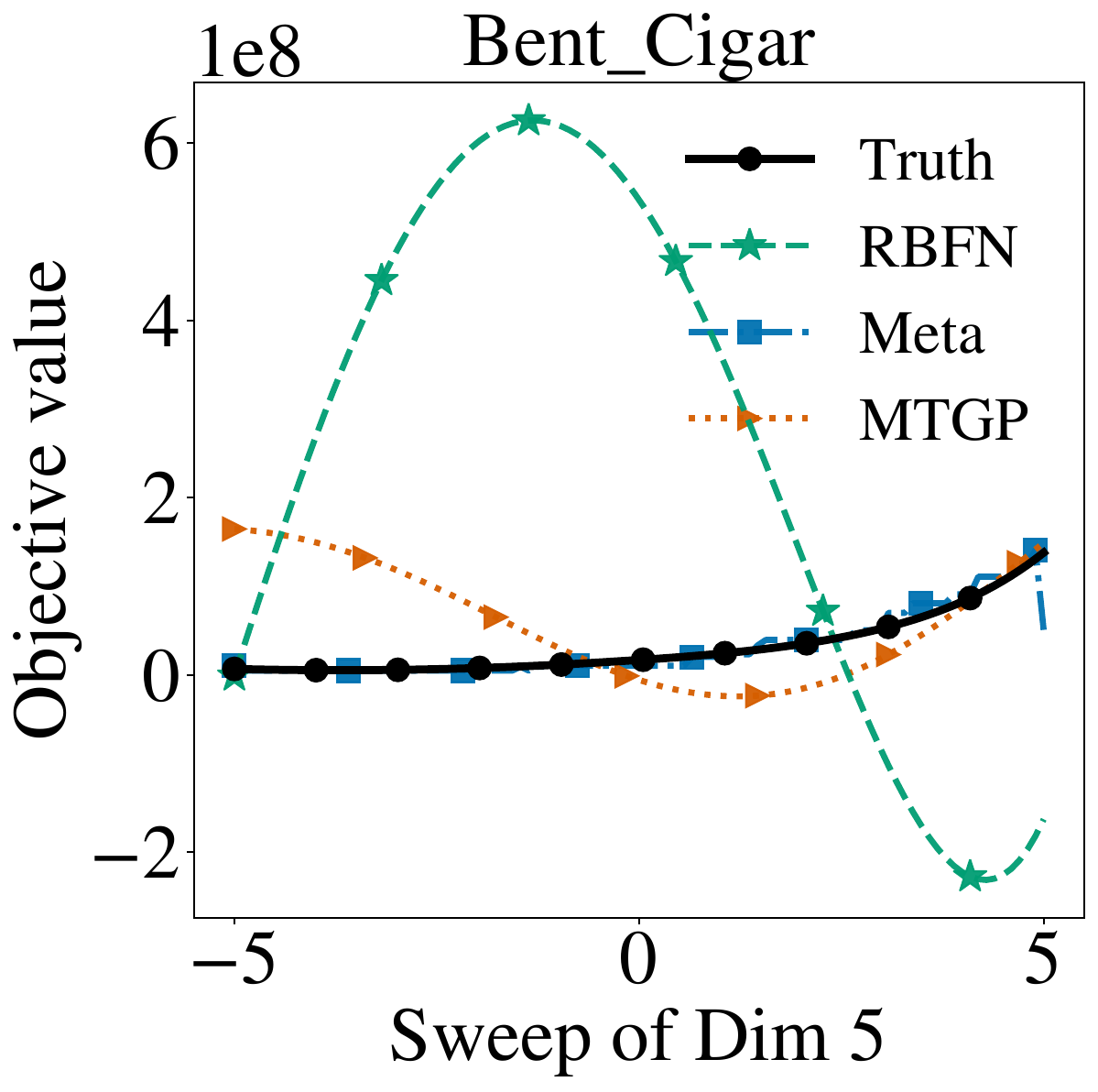}
    }
    \\
    \caption{
    Coordinate-wise response slice comparison on four representative five-dimensional BBOB functions. 
    Rows correspond to individual functions, and columns vary a single coordinate \(x_{d}\;(d=1,\dots,5)\) from \(-5\) to \(5\), while keeping the remaining coordinates fixed at a random reference point \(\mathbf{x}_{0}\). The plots highlight the close alignment of the meta-surrogate and RBFN with the true response, and the comparatively larger deviations observed in MTGP.
    }
    \label{fig:Slice}
\end{figure*}

\subsubsection{Uncertainty-Error Correlation Study}
\label{subsec:uncertain}

To examine whether the token-level predictive uncertainty revealed by the meta-surrogate is informative of the true regression error, we computed five \emph{sequence}-level uncertainty scores for $10^{4}$ test predictions and correlated them with the absolute
prediction error $\lvert\hat{y}-y\rvert$. The exact mathematical definitions are summarised in~\autoref{tab:uncert_defs}.

\begin{table}[htbp]
  \centering
  \caption{Uncertainty measures investigated in this study. Here, $L$ denotes the sequence length; $p_t^{(k)}$ represents the $k$-th largest token probability at position~$t$; and $\text{beam}_{1:3}$ refers to the top-3 beam hypotheses decoded into numerical values.}
  \label{tab:uncert_defs}
  \resizebox{0.8\linewidth}{!}{ 
  \begin{tabular}{|c|c|l|}
    \hline
    Symbol & Formal Definition & Interpretation \\ \hline
    $\mathcal{U}_{\text{NLL}}$
        & $\displaystyle
           \frac{1}{L}\sum_{t=1}^{L}-\log p_t(y_t)$
        & Negative log–likelihood (sequence perplexity) \\ \hline
    $\mathcal{U}_{\text{IMSP}}$
        & $\displaystyle
           1-\frac{1}{L}\sum_{t=1}^{L}p_t^{(1)}$
        & Inverse maximum soft-max probability \\ \hline
    $\mathcal{U}_{\text{ENT}}$
        & $\displaystyle
           \frac{1}{L}\sum_{t=1}^{L}
           H(p_t),\;
           H(p_t)=-\sum_v p_t^{(v)}\log p_t^{(v)}$
        & Mean token Shannon entropy \\ \hline
    $\mathcal{U}_{\text{ITPM}}$
        & $\displaystyle
           1-\frac{1}{L}\sum_{t=1}^{L}\bigl(
           p_t^{(1)}-p_t^{(2)}\bigr)$
        & Inverse top-2 probability margin \\ \hline
    $\sigma_{\text{Beam}}$
        & $\operatorname{Std}\bigl(\text{beam}_{1:3}\bigr)$
        & Posterior dispersion of the top-3 beams \\
    \hline
  \end{tabular}
  }
\end{table}

Table~\ref{tab:corr_uncert_mae} reports the average Pearson
($\rho$), Spearman ($\rho_{s}$), and Kendall ($\tau$) coefficients
obtained over the entire test set.

\begin{table}[htbp]
  \centering
  \caption{Correlation between uncertainty estimates and
  absolute prediction error (larger is stronger).
  All $p$-values are below $10^{-25}$ unless noted.}
  \label{tab:corr_uncert_mae}
  \renewcommand{\arraystretch}{1.15}
  \resizebox{0.5\linewidth}{!}{%
  \begin{tabular}{|c|c|c|c|}
    \hline
    Criterion & Pearson $\rho$ & Spearman $\rho_{s}$ & Kendall $\tau$ \\ \hline
    $\mathcal{U}_{\text{NLL}}$   & $-0.010$ ($p{=}0.75$) & $0.639$ & $0.477$ \\ \hline
    $\mathcal{U}_{\text{IMSP}}$  &  $0.057$              & $0.708$ & $0.533$ \\ \hline
    $\mathcal{U}_{\text{ENT}}$   &  $0.061$              & $\mathbf{0.723}$ & $\mathbf{0.547}$ \\ \hline
    $\mathcal{U}_{\text{ITPM}}$  &  $0.062$              & $0.705$ & $0.530$ \\ \hline
    $\sigma_{\text{Beam}}$       &  $0.053$              & $0.329$ & $0.227$ \\ \hline 
  \end{tabular}
  }
\end{table}

\noindent
Two noteworthy observations emerge: (1) The three \emph{token-aware} criteria---$\mathcal{U}_{\text{IMSP}}$, $\mathcal{U}_{\text{ENT}}$, and $\mathcal{U}_{\text{ITPM}}$ exhibit strong, monotonic association with MAE ($\rho_{s}>0.70$, $\tau>0.53$), demonstrating that the step-wise probability mass supplied by the meta-surrogate is a reliable proxy for numerical error. (2) Among them, \emph{mean token Shannon entropy} $\mathcal{U}_{\text{ENT}}$ achieves the highest correlation ($\rho_{s}=0.723$, $\tau=0.547$), highlighting Shannon entropy as an intuitive and effective \emph{information-risk index} for surrogate predictions.

Although \emph{sequence perplexity}
($\mathcal{U}_{\text{NLL}}$) shows negligible linear correlation due
to its exponential dependence on sequence length, its rank-based
performance remains competitive once the non-linear relationship is
taken into account.  The sample-based posterior dispersion
$\sigma_{\text{Beam}}$ correlates only moderately with MAE, implying
that \emph{cross-beam consistency} is less informative than the direct
token-level probabilities emitted by the model.

Based on this study, we recommend dynamically updating the surrogate
model in \emph{online} DDEA by
leveraging the token-level entropy generated during LLM inference.
Specifically, at each optimization iteration, individuals in the
offspring population exhibiting the highest
$\mathcal{U}_{\text{ENT}}$ should be selected for ground-truth fitness
evaluation.  The resulting evaluations are then used to fine-tune the
LLM, thereby continuously enhancing the surrogate's predictive
accuracy and adaptability while limiting expensive fitness calls.

\subsection{Emergent Capability (RQ3)}
\begin{figure}[t]
    \centering
    \includegraphics[width=0.9\linewidth]{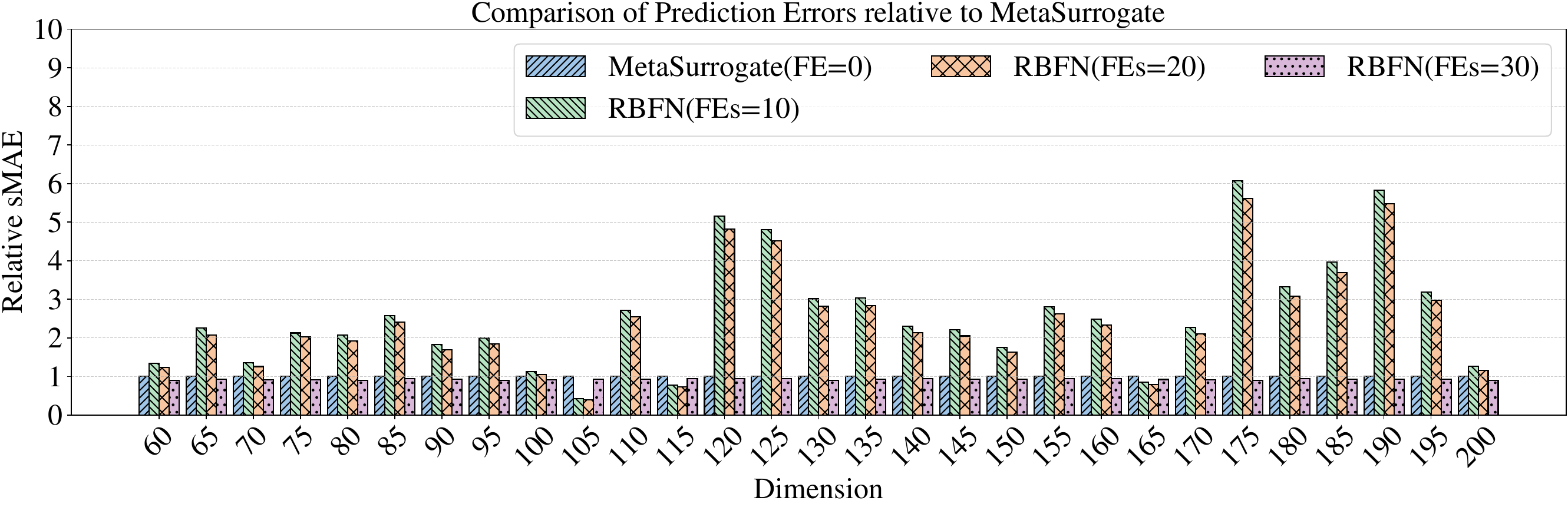}
    \caption{
    Cross-dimensional evaluation on BBOB-24. Bars show {relative} sMAE normalized to {MetaSurrogate (FE=0)}$=1$ at each dimension; lower is better.}
    \label{fig:relative_rmse}
\end{figure}

%
In surrogate modeling, if a model trained solely on a certain dimensional setup (e.g., 30 dimensions) can yield plausible predictions for dimensions not seen during training (e.g., 29 or 31 dimensions), it indicates cross-dimensional generalization ability. This means the model not only excels within the trained dimension range but can also handle inputs that deviate from that distribution, implying emergent capabilities akin to those in LLMs.

To verify this, we tested on 24 BBOB problems, the meta-surrogate is trained on the 24 BBOB functions with input dimensions of $\{10,15,20,\ldots,50\}$, using 1000 training samples for each. It is then tested on other dimensional settings ($\{60,65,\ldots,200\}$). For context, we include RBFN baselines that are allowed per-dimension supervision at test time with $\{10,20,30\}$ function evaluations (FEs). Figure~\ref{fig:relative_rmse} normalizes MetaSurrogate (FE=0) to 1.0 at each dimension; the three RBFN {bars} report {relative} sMAE (method / MetaSurrogate).

Over 29 test dimensions from 60D to 200D, the zero-shot meta-surrogate consistently outperforms lightly-supervised RBFN with 10 or 20 FEs on {26/29} dimensions each; the exceptions occur at 105D, 115D, and 165D, where RBFN(10/20) attains slightly lower sMAE. Absolute sMAE grows with dimensionality for all methods and shows non-monotonic spikes (e.g., around 170D and 200D), reflecting aggregate difficulty across the BBOB suite at those scales; importantly, the {relative} gaps remain bounded, suggesting that the meta-surrogate preserves much of its ranking fidelity even far outside the training span. Taken together, these results support a {finite-range} zero-shot transfer phenomenon---robust across 60--200D under our setting.

The cross-dimensional generalization ability of {the} meta-surrogate is primarily attributed to the following factors: (1) meta-surrogate possesses a vast number of parameters and complex non-linear mapping capabilities, which enable it to capture a universal numerical mapping mechanism during training rather than merely relying on memorization or interpolation of fixed-dimensional inputs; (2) The self-attention mechanism inherent in its Transformer component facilitates token-level global information exchange. Consequently, even when confronted with inputs of previously unseen dimensions, the model is capable of integrating both local and global numerical patterns to form effective internal representations and enable knowledge sharing; (3) Fundamentally, meta-surrogate is a LLM that has been pre-trained on extensive, multi-domain textual data, thereby accumulating a rich repository of symbolic and numerical representation knowledge. This cross-modal knowledge accumulation permits the model to perform a certain degree of knowledge transfer and reasoning when processing inputs with new dimensions. Although zero-shot predictions in low-dimensional settings exhibit emergent capabilities, the generalization capacity remains limited when the input dimensions significantly exceed those encountered during training. Future research may explore improved training strategies or the incorporation of cross-dimensional adaptation mechanisms to enhance model robustness. Overall, this study provides a novel perspective on the application of LLMs in many-task optimization.

\subsection{Performance Evaluation on MaTOP (RQ4)}
In this subsection, we integrate our proposed method into two many-task EAs and compare their performance with that of the original algorithms.

\subsubsection{MaTOP Setup}
For problem setup, we consider the first noise-free instance in the BBOB suite, comprising 24 test functions. Traditionally, many-task test sets are formed by grouping selected functions (after rotation and translation) based on criteria such as solution overlap and similarity of fitness values. Test sets can thus include overlapping, partially overlapping, or non-overlapping tasks~\cite{MTO_benchmark}, ensuring ``no free lunch"  scenario. Each function is expanded into four tasks by considering dimensionalities of 5, 10, 15, and 20, resulting in three many-task test benchmarks: MCF1 (overlapping), MCF2 (non-overlapping), and MCF3 (partially overlapping). Each benchmark includes 6 functions $\times$ 4 dimensions = 24 tasks.
Following past many-task benchmark studies, we divide the first-instance BBOB problems into multiple groups according to the overlap of their optima and the Spearman correlation of fitness values, as shown in~\autoref{tab:Benchmark}.

\subsubsection{Algorithm Settings}
To ensure a fair comparison, the maximum number of real FEs is set to 500.
Original algorithms terminate when the maximum number of real FEs reaches the FEs budget.
For the surrogate-based methods, all data are obtained via LHS prior to running the algorithm and serve as training samples. No additional data sampling is allowed during the optimization process. Consequently, the surrogate estimates population fitness. At the end, the best individual found (according to the surrogate) is evaluated once by the true objective function. 
Thus, the final solutions reflect the real problem's context and ensure meaningful performance evaluation.

Each problem is run independently 20 times, and we report the mean and standard deviation of results. We use the Wilcoxon rank-sum test~\cite{wilcoxon} with $\alpha=0.05$ to check for significant differences between methods; ``+'', ``$\approx$'', and ``-'' denote that the new method is superior, comparable, or inferior, respectively. The best result on each problem is highlighted in bold, and we also show the average ranking across all test sets.

\begin{table}[t]
  \centering
  \caption{Many-task Grouping for the BBOB Test Functions; MCF stands for Multiple Complex Functions. In each MCF instance, Task1-Task12 are grouped based on fitness overlap, while Task13-Task24 are grouped based on overlap of optimal solutions.}
      \label{tab:Benchmark}
  \resizebox{0.45\linewidth}{!}{
    {

\begin{tabular}{|l|l|}
\hline
\multicolumn{1}{|c|}{Name} & \multicolumn{1}{c|}{Function} \\ \hline
\multirow{6}{*}{MCF1}          &Task-1$\sim$4 = Buche\_Rastrigin, Dim=[5,10,15,20]                               \\ \cline{2-2} 
                           &Task-5$\sim$8 = Rosenbrock\_rotated, Dim=[5,10,15,20]                        \\ \cline{2-2} 
                           &Task-9$\sim$12 = Step\_Ellipsoidal, Dim=[5,10,15,20]                                \\ \cline{2-2} 
                           &Task-13$\sim$16 = Bent\_Cigar,Dim=[5,10,15,20]                                \\ \cline{2-2} 
                           &Task-17$\sim$20 = Rosenbrock\_original, Dim=[5,10,15,20]                                \\ \cline{2-2} 
                           &Task-17$\sim$24 = Rastrigin\_F15,Dim=[5,10,15,20]                                \\ \hline
\multirow{6}{*}{MCF2}          &Task-1$\sim$4 = Sharp\_Ridge, Dim=[5,10,15,20]                               \\ \cline{2-2} 
                           &Task-5$\sim$8 =Buche\_Rastrigin, Dim=[5,10,15,20]                        \\ \cline{2-2} 
                           &Task-9$\sim$12 =Different\_Powers,, Dim=[5,10,15,20]                                \\ \cline{2-2} 
                           &Task-13$\sim$16 =Sharp\_Ridge,Dim=[5,10,15,20]                                \\ \cline{2-2} 
                           &Task-16$\sim$20 =Schaffers, Dim=[5,10,15,20]                                \\ \cline{2-2} 
                           &Task-21$\sim$24 =Gallagher\_21Peaks,Dim=[5,10,15,20]                                \\ \hline
\multirow{6}{*}{MCF3}          &Task-1$\sim$4 = Step\_Ellipsoidal, Dim=[5,10,15,20]                               \\ \cline{2-2} 
                           &Task-5$\sim$8  = Composite\_Grie\_rosen, Dim=[5,10,15,20]                        \\ \cline{2-2} 
                           &Task-9$\sim$12 =Different\_Powersn, Dim=[5,10,15,20]                                \\ \cline{2-2} 
                           &Task-13$\sim$16 =Schwefel,Dim=[5,10,15,20]                                \\ \cline{2-2} 
                           &Task-17$\sim$20 =Gallagher\_101Peaks,, Dim=[5,10,15,20]                                \\ \cline{2-2} 
                           &Task-21$\sim$24 =Lunacek\_bi\_Rastrigin,Dim=[5,10,15,20]                                \\ \hline

\end{tabular}

    }
}
\end{table}

We evaluate meta-surrogate by integrating it into two backbone algorithms, MaTDE and BLKT-DE. 
In these integrations, actual function evaluations are replaced by predictions from offline surrogates 
(e.g., MetaSurrogate, RBFN, EvidentialMLP,\allowbreak FT-\allowbreak Transformer, SAINT, etc.), yielding offline DDEA variants such as MetaSurrogate\_\allowbreak MaTDE, RBFN\_\allowbreak MaTDE, etc. 
Analogously, we construct MetaSurrogate\_BLKT-DE, RBFN\_BLKT-DE, etc.

 A brief overview of the two backbone algorithms is as follows:
\begin{itemize}
    \item \textbf{MaTDE}~\cite{MaTDE} is a many-task optimization approach using multiple subpopulations. It employs an adaptive selection mechanism that determines which tasks can serve as ``auxiliary'' for a given task by considering the Kullback-Leibler similarity between tasks and cumulative transfer rewards. It also adopts a crossover-based transfer pattern to share knowledge across tasks.
    \item \textbf{BLKT-DE}~\cite{BLKT-DE} segments the individuals of all tasks into multiple blocks, where each block corresponds to a subset of contiguous dimensions. Blocks sharing similarities are grouped into the same cluster, facilitating knowledge transfer among tasks or dimensions that are either aligned or misaligned.
\end{itemize}

\subsubsection{Comparison Results} As shown in~\autoref{result_MaTDE_MCF1}, \autoref{result_BLKT-DE_MCF1}, ~\autoref{result_MaTDE_MCF2}, \autoref{result_BLKT-DE_MCF2},~\autoref{result_MaTDE_MCF3}, \autoref{result_BLKT-DE_MCF3}, under the same FE budget, many-task algorithms using meta-surrogate outperform both the original algorithms and those using other surrogates. This indicates that meta-surrogate is a promising and generic alternative to traditional surrogates for enhancing algorithm performance, particularly in many-task optimization environments where numerous tasks are solved concurrently. 

\begin{table}[htbp]
    \centering 
    \caption{
    Performance of MaTDE with different surrogate models on MCF1}
    \label{result_MaTDE_MCF1}
      \resizebox{\linewidth}{!}{
        \begin{tabular}{|c|c|c|c|c|c|c|c|}
\hline
Task & LLM\_MaTDE & MTGP\_MaTDE & RBFN\_MaTDE & EvidentialMLP\_MaTDE & FTTransformer\_MaTDE & SAINT\_MaTDE & MaTDE \\\hline
Task1 & 1.19E+03$\pm$1.52E+01 & 6.59E+03$\pm$1.35E+02(+) & 1.28E+03$\pm$6.42E+01($\approx$) & 1.17E+03$\pm$2.17E+00($\approx$) & 1.22E+03$\pm$8.13E+00(+) & 5.60E+03$\pm$7.37E+02(+) & \textbf{1.15E+03$\pm$1.17E+01(-)} \\\hline
Task2 & 1.32E+03$\pm$6.49E+01 & 9.20E+03$\pm$1.70E+02(+) & 1.46E+03$\pm$1.74E+01(+) & 4.05E+03$\pm$1.19E+02(+) & 1.70E+03$\pm$3.03E+01(+) & 2.38E+03$\pm$1.18E+02(+) & \textbf{1.31E+03$\pm$5.85E+01($\approx$)} \\\hline
Task3 & \textbf{2.66E+03$\pm$5.89E+01} & 3.99E+04$\pm$2.47E+04(+) & 3.08E+03$\pm$1.04E+02(+) & 9.32E+03$\pm$3.77E+02(+) & 2.98E+03$\pm$2.30E+01(+) & 5.94E+04$\pm$1.79E+04(+) & 3.00E+03$\pm$1.48E+02(+) \\\hline
Task4 & \textbf{1.67E+03$\pm$1.33E+02} & 2.69E+04$\pm$1.63E+04(+) & 1.85E+03$\pm$4.34E+01(+) & 7.42E+03$\pm$5.37E+02(+) & 2.07E+03$\pm$1.59E+02(+) & 5.11E+03$\pm$8.02E+02(+) & 2.49E+03$\pm$7.76E+02($\approx$) \\\hline
Task5 & \textbf{1.20E+03$\pm$1.12E+00} & 1.35E+03$\pm$1.54E+01(+) & 1.21E+03$\pm$3.83E+00(+) & 1.29E+03$\pm$1.12E+01(+) & 1.44E+03$\pm$1.32E+01(+) & 4.17E+03$\pm$1.79E+02(+) & 1.21E+03$\pm$2.27E+00(+) \\\hline
Task6 & \textbf{3.22E+02$\pm$4.62E+00} & 3.57E+02$\pm$4.77E+00(+) & 3.58E+02$\pm$6.99E+00(+) & 3.27E+02$\pm$3.80E-01($\approx$) & 1.18E+03$\pm$3.00E+02(+) & 1.78E+03$\pm$1.65E+02(+) & 3.62E+02$\pm$9.08E+00(+) \\\hline
Task7 & \textbf{1.25E+03$\pm$6.12E+00} & 3.30E+03$\pm$9.07E+02(+) & 1.89E+03$\pm$4.71E+01(+) & 1.46E+03$\pm$2.53E+01(+) & 4.77E+03$\pm$1.94E+02(+) & 2.92E+03$\pm$1.06E+03(+) & 1.33E+03$\pm$3.91E+01(+) \\\hline
Task8 & \textbf{1.78E+03$\pm$8.79E+01} & 4.23E+03$\pm$9.91E+02(+) & 2.64E+03$\pm$1.22E+02(+) & 2.28E+03$\pm$5.88E+01(+) & 4.36E+03$\pm$2.77E+02(+) & 4.18E+03$\pm$4.46E+02(+) & 1.92E+03$\pm$7.75E+01($\approx$) \\\hline
Task9 & 1.21E+03$\pm$6.75E+01 & 5.25E+04$\pm$9.27E+02(+) & 3.22E+03$\pm$7.12E+00(+) & 4.12E+03$\pm$7.95E+01(+) & 4.80E+04$\pm$3.32E+01(+) & 4.69E+05$\pm$4.83E+04(+) & \textbf{1.09E+03$\pm$1.71E+02($\approx$)} \\\hline
Task10 & 2.64E+03$\pm$5.22E+02 & 6.52E+03$\pm$4.39E+02(+) & \textbf{2.48E+03$\pm$1.35E+01($\approx$)} & 1.35E+04$\pm$7.68E+02(+) & 3.51E+04$\pm$5.56E+03(+) & 8.66E+05$\pm$5.11E+04(+) & 7.53E+03$\pm$2.26E+03(+) \\\hline
Task11 & 1.40E+04$\pm$1.48E+04 & 2.12E+05$\pm$1.48E+05(+) & \textbf{3.52E+03$\pm$7.33E+01(-)} & 2.09E+04$\pm$4.31E+03($\approx$) & 4.36E+04$\pm$6.47E+03(+) & 4.25E+05$\pm$4.71E+05(+) & 3.10E+04$\pm$1.08E+04($\approx$) \\\hline
Task12 & 3.39E+04$\pm$1.16E+04 & 7.14E+05$\pm$3.16E+05(+) & \textbf{1.61E+03$\pm$4.74E+01(-)} & 1.18E+05$\pm$1.91E+04(+) & 2.30E+05$\pm$9.65E+04(+) & 8.14E+05$\pm$4.30E+05(+) & 7.66E+04$\pm$1.30E+04(+) \\\hline
Task13 & \textbf{8.61E+02$\pm$2.66E+01} & 4.84E+03$\pm$4.05E+01(+) & 6.81E+03$\pm$4.02E+02(+) & 2.53E+03$\pm$1.72E+01(+) & 1.38E+04$\pm$3.68E+03(+) & 3.87E+05$\pm$2.35E+04(+) & 1.17E+03$\pm$1.34E+02(+) \\\hline
Task14 & \textbf{1.51E+03$\pm$6.91E+01} & 1.87E+03$\pm$4.20E+01(+) & 1.99E+03$\pm$3.16E+01(+) & 3.23E+03$\pm$5.45E+01(+) & 1.11E+05$\pm$7.25E+04(+) & 7.17E+05$\pm$1.06E+05(+) & 5.66E+03$\pm$1.29E+03(+) \\\hline
Task15 & 1.71E+03$\pm$4.20E+02 & 8.59E+04$\pm$5.16E+04(+) & \textbf{1.53E+03$\pm$4.30E+01($\approx$)} & 2.75E+03$\pm$1.33E+02(+) & 5.60E+05$\pm$2.96E+04(+) & 9.04E+05$\pm$1.48E+05(+) & 2.06E+04$\pm$6.73E+03(+) \\\hline
Task16 & 3.45E+03$\pm$9.61E+02 & 2.49E+05$\pm$9.75E+04(+) & \textbf{2.29E+03$\pm$8.69E+01($\approx$)} & 3.04E+03$\pm$5.16E+02($\approx$) & 1.30E+06$\pm$2.12E+05(+) & 7.85E+05$\pm$2.74E+05(+) & 4.69E+04$\pm$5.11E+03(+) \\\hline
Task17 & \textbf{1.40E+06$\pm$8.18E+05} & 7.87E+06$\pm$1.81E+03(+) & 4.57E+07$\pm$3.58E+07(+) & 1.36E+07$\pm$1.58E+05(+) & 7.87E+06$\pm$1.15E+04(+) & 1.25E+08$\pm$4.53E+06(+) & 1.92E+06$\pm$1.00E+06($\approx$) \\\hline
Task18 & \textbf{7.15E+06$\pm$6.51E+05} & 2.90E+07$\pm$1.86E+06(+) & 1.34E+08$\pm$4.96E+07(+) & 2.91E+07$\pm$5.74E+05(+) & 4.58E+07$\pm$3.34E+05(+) & 7.91E+08$\pm$8.45E+07(+) & 1.80E+07$\pm$3.72E+06(+) \\\hline
Task19 & \textbf{3.33E+07$\pm$8.67E+06} & 1.18E+08$\pm$5.17E+06(+) & 1.61E+08$\pm$2.35E+07(+) & 9.62E+07$\pm$2.82E+06(+) & 9.78E+07$\pm$5.42E+06(+) & 1.17E+09$\pm$1.36E+08(+) & 4.40E+07$\pm$7.09E+06($\approx$) \\\hline
Task20 & \textbf{6.73E+07$\pm$7.15E+06} & 7.63E+08$\pm$4.25E+08(+) & 2.28E+08$\pm$2.92E+07(+) & 9.64E+07$\pm$5.18E+06(+) & 1.67E+08$\pm$1.04E+07(+) & 1.38E+09$\pm$4.12E+08(+) & 1.17E+08$\pm$1.74E+07(+) \\\hline
Task21 & \textbf{1.47E+02$\pm$4.91E+00} & 2.05E+02$\pm$7.03E+00(+) & 1.88E+02$\pm$4.07E+00(+) & 2.06E+02$\pm$1.32E+00(+) & 2.71E+02$\pm$3.71E+01(+) & 5.24E+02$\pm$2.92E+01(+) & 1.50E+02$\pm$4.08E+00($\approx$) \\\hline
Task22 & \textbf{1.80E+03$\pm$1.64E+01} & 2.05E+03$\pm$1.77E+01(+) & 2.17E+03$\pm$5.79E+01(+) & 1.91E+03$\pm$9.81E+00(+) & 2.04E+03$\pm$7.24E+01(+) & 4.33E+03$\pm$1.58E+03(+) & 1.87E+03$\pm$2.33E+01(+) \\\hline
Task23 & \textbf{1.73E+03$\pm$1.37E+01} & 3.12E+03$\pm$4.91E+02(+) & 1.97E+03$\pm$9.90E+00(+) & 2.09E+03$\pm$2.14E+01(+) & 1.87E+03$\pm$3.29E+01(+) & 3.09E+03$\pm$6.58E+01(+) & 1.91E+03$\pm$2.81E+01(+) \\\hline
Task24 & \textbf{2.04E+03$\pm$2.77E+01} & 3.65E+03$\pm$8.06E+02(+) & 2.12E+03$\pm$3.37E+01(+) & 2.38E+03$\pm$2.40E+01(+) & 2.75E+03$\pm$6.34E+01(+) & 4.21E+03$\pm$6.45E+02(+) & 2.23E+03$\pm$8.22E+01(+) \\\hline
+/$\approx$/- & NA & 24/0/0 & 18/4/2 & 20/4/0 & 24/0/0 & 24/0/0 & 15/8/1 \\\hline
Average Rank & 1.42 & 5.12 & 3.46 & 3.75 & 4.96 & 6.54 & 2.75 \\\hline
Average Fitness & 4.55E+06 & 3.83E+07 & 2.37E+07 & 9.81E+06 & 1.34E+07 & 1.45E+08 & 7.54E+06 \\\hline
\end{tabular}
    }
\end{table}

\begin{table}[htbp]
    \centering
    \caption{
    Performance of BLKT-DE with different surrogate models on MCF1}
    \label{result_BLKT-DE_MCF1}
      \resizebox{\linewidth}{!}{
        \begin{tabular}{|c|c|c|c|c|c|c|c|}
\hline
Task & LLM\_BLKT-DE & MTGP\_BLKT-DE & RBFN\_BLKT-DE & EvidentialMLP\_BLKT-DE & FTTransformer\_BLKT-DE & SAINT\_BLKT-DE & BLKT-DE \\\hline
Task1 & \textbf{1.16E+03$\pm$1.52E+01} & 6.60E+03$\pm$4.77E+01(+) & 1.83E+03$\pm$4.05E+02(+) & 1.17E+03$\pm$5.57E+00($\approx$) & 1.35E+03$\pm$1.08E+02(+) & 1.45E+04$\pm$1.20E+02(+) & 1.17E+03$\pm$1.53E+01($\approx$) \\\hline
Task2 & 1.36E+03$\pm$3.48E+01 & 9.18E+03$\pm$1.00E+02(+) & 1.93E+03$\pm$4.29E+02(+) & 3.64E+03$\pm$5.35E+02(+) & 1.69E+03$\pm$2.22E+01(+) & 2.33E+03$\pm$7.12E+01(+) & \textbf{1.31E+03$\pm$1.02E+02($\approx$)} \\\hline
Task3 & \textbf{2.71E+03$\pm$7.56E+01} & 4.16E+04$\pm$1.46E+04(+) & 6.17E+03$\pm$8.11E+01(+) & 7.64E+03$\pm$9.00E+02(+) & 3.61E+03$\pm$6.14E+02(+) & 7.23E+04$\pm$1.66E+04(+) & 3.00E+03$\pm$1.80E+02(+) \\\hline
Task4 & 2.42E+03$\pm$7.90E+02 & 4.31E+04$\pm$1.78E+04(+) & \textbf{2.19E+03$\pm$9.75E+01($\approx$)} & 7.24E+03$\pm$1.01E+03(+) & 2.67E+03$\pm$4.44E+02($\approx$) & 3.07E+04$\pm$2.68E+04(+) & 2.40E+03$\pm$3.19E+02($\approx$) \\\hline
Task5 & \textbf{1.20E+03$\pm$6.40E-01} & 1.37E+03$\pm$1.37E+01(+) & 1.22E+03$\pm$0.00E+00(+) & 1.29E+03$\pm$1.12E+01(+) & 1.44E+03$\pm$0.00E+00(+) & 4.55E+03$\pm$1.90E+02(+) & 1.21E+03$\pm$5.79E+00(+) \\\hline
Task6 & 3.44E+02$\pm$1.61E+01 & 3.52E+02$\pm$6.00E+00($\approx$) & 3.71E+02$\pm$3.68E+00($\approx$) & \textbf{3.27E+02$\pm$6.98E+00($\approx$)} & 2.53E+03$\pm$1.13E+03(+) & 2.12E+03$\pm$9.27E+01(+) & 3.85E+02$\pm$2.49E+01($\approx$) \\\hline
Task7 & \textbf{1.30E+03$\pm$7.02E+01} & 4.28E+03$\pm$2.12E+03(+) & 2.45E+03$\pm$4.08E+01(+) & 1.45E+03$\pm$5.62E+01(+) & 5.81E+03$\pm$3.29E+02(+) & 3.89E+03$\pm$2.10E+03(+) & 1.36E+03$\pm$6.23E+01($\approx$) \\\hline
Task8 & \textbf{1.75E+03$\pm$1.04E+02} & 5.05E+03$\pm$1.33E+03(+) & 2.69E+03$\pm$4.82E+01(+) & 2.44E+03$\pm$2.45E+02(+) & 5.32E+03$\pm$4.38E+02(+) & 4.61E+03$\pm$7.77E+01(+) & 1.95E+03$\pm$1.23E+02(+) \\\hline
Task9 & 1.44E+03$\pm$2.29E+02 & 5.39E+04$\pm$2.87E+02(+) & 4.68E+03$\pm$2.93E+03(+) & 4.17E+03$\pm$1.08E+02(+) & 3.91E+04$\pm$1.80E+04(+) & 3.40E+05$\pm$1.96E+05(+) & \textbf{1.02E+03$\pm$7.27E+01(-)} \\\hline
Task10 & 3.71E+03$\pm$9.84E+02 & 6.38E+03$\pm$4.06E+02(+) & \textbf{2.51E+03$\pm$1.99E+01($\approx$)} & 1.11E+04$\pm$2.20E+03(+) & 4.15E+04$\pm$1.33E+04(+) & 6.59E+05$\pm$3.79E+05(+) & 1.45E+04$\pm$2.59E+03(+) \\\hline
Task11 & 2.25E+04$\pm$1.20E+04 & 6.55E+05$\pm$2.91E+05(+) & \textbf{3.55E+03$\pm$7.20E+01(-)} & 1.89E+04$\pm$3.24E+03($\approx$) & 7.17E+04$\pm$1.41E+04(+) & 5.17E+05$\pm$1.06E+05(+) & 4.87E+04$\pm$6.73E+03(+) \\\hline
Task12 & 4.41E+04$\pm$1.62E+04 & 9.01E+05$\pm$5.80E+05(+) & \textbf{1.63E+03$\pm$1.89E+02(-)} & 1.62E+05$\pm$3.15E+04(+) & 9.59E+04$\pm$2.62E+04(+) & 2.17E+06$\pm$7.88E+05(+) & 9.23E+04$\pm$4.58E+04($\approx$) \\\hline
Task13 & \textbf{9.07E+02$\pm$5.08E+01} & 4.86E+03$\pm$4.41E+01(+) & 6.97E+03$\pm$2.22E-01(+) & 2.46E+03$\pm$7.87E+01(+) & 2.34E+04$\pm$2.05E+04(+) & 3.93E+05$\pm$1.15E+04(+) & 1.00E+03$\pm$1.03E+02($\approx$) \\\hline
Task14 & \textbf{1.63E+03$\pm$8.61E+01} & 1.83E+03$\pm$4.03E+01(+) & 2.05E+03$\pm$2.74E+01(+) & 3.18E+03$\pm$7.05E+02(+) & 3.92E+05$\pm$3.09E+05(+) & 9.63E+05$\pm$1.35E+05(+) & 1.05E+04$\pm$3.26E+03(+) \\\hline
Task15 & 2.53E+03$\pm$8.89E+02 & 2.78E+05$\pm$2.07E+05(+) & \textbf{1.52E+03$\pm$7.26E+01(-)} & 2.74E+03$\pm$3.92E+02($\approx$) & 9.87E+05$\pm$1.23E+05(+) & 9.44E+05$\pm$6.40E+04(+) & 3.06E+04$\pm$5.83E+03(+) \\\hline
Task16 & 3.70E+03$\pm$5.79E+02 & 4.26E+05$\pm$1.68E+05(+) & \textbf{2.35E+03$\pm$2.48E+02(-)} & 6.63E+03$\pm$3.75E+03($\approx$) & 3.17E+06$\pm$6.65E+05(+) & 1.17E+06$\pm$1.93E+05(+) & 6.70E+04$\pm$1.42E+04(+) \\\hline
Task17 & \textbf{2.22E+06$\pm$8.98E+05} & 7.86E+06$\pm$3.87E+03(+) & 1.14E+08$\pm$4.33E+07(+) & 1.37E+07$\pm$3.12E+05(+) & 7.88E+06$\pm$3.08E+04(+) & 5.52E+08$\pm$2.73E+07(+) & 2.46E+06$\pm$5.31E+05($\approx$) \\\hline
Task18 & \textbf{8.34E+06$\pm$1.87E+06} & 3.03E+07$\pm$7.56E+05(+) & 1.76E+09$\pm$1.78E+09(+) & 2.81E+07$\pm$1.45E+06(+) & 5.71E+07$\pm$7.24E+06(+) & 2.85E+08$\pm$2.59E+07(+) & 2.18E+07$\pm$7.70E+06(+) \\\hline
Task19 & \textbf{5.54E+07$\pm$1.31E+07} & 1.25E+08$\pm$1.28E+07(+) & 1.25E+09$\pm$6.46E+08(+) & 9.43E+07$\pm$9.39E+06(+) & 1.09E+08$\pm$1.81E+07(+) & 9.90E+08$\pm$3.43E+08(+) & 6.00E+07$\pm$1.72E+07($\approx$) \\\hline
Task20 & \textbf{7.79E+07$\pm$1.31E+07} & 2.35E+09$\pm$2.75E+09(+) & 3.96E+08$\pm$1.55E+08(+) & 1.16E+08$\pm$9.12E+06(+) & 2.28E+08$\pm$3.41E+07(+) & 1.37E+09$\pm$4.74E+08(+) & 1.37E+08$\pm$1.55E+07(+) \\\hline
Task21 & 1.50E+02$\pm$9.65E+00 & 1.99E+02$\pm$9.55E+00(+) & 1.91E+02$\pm$1.29E-02(+) & 2.06E+02$\pm$2.97E+00(+) & 2.85E+02$\pm$3.87E+01(+) & 6.69E+02$\pm$7.24E+01(+) & \textbf{1.44E+02$\pm$1.12E+01($\approx$)} \\\hline
Task22 & \textbf{1.82E+03$\pm$1.07E+01} & 2.07E+03$\pm$3.46E+01(+) & 2.20E+03$\pm$9.21E+00(+) & 1.90E+03$\pm$6.47E+00(+) & 2.07E+03$\pm$7.67E+01(+) & 5.38E+03$\pm$2.75E+03(+) & 1.84E+03$\pm$1.01E+01(+) \\\hline
Task23 & \textbf{1.72E+03$\pm$1.62E+01} & 4.42E+03$\pm$2.18E+03(+) & 2.00E+03$\pm$1.81E+01(+) & 2.11E+03$\pm$6.91E+01(+) & 1.92E+03$\pm$5.16E+01(+) & 3.46E+03$\pm$8.56E+02(+) & 1.88E+03$\pm$2.93E+01(+) \\\hline
Task24 & \textbf{2.12E+03$\pm$2.42E+01} & 3.92E+03$\pm$1.38E+03(+) & 2.14E+03$\pm$4.28E+01($\approx$) & 2.44E+03$\pm$7.09E+01(+) & 3.16E+03$\pm$1.26E+02(+) & 7.16E+03$\pm$1.75E+03(+) & 2.32E+03$\pm$2.32E+01(+) \\\hline
+/$\approx$/- & NA & 23/1/0 & 16/4/4 & 19/5/0 & 23/1/0 & 24/0/0 & 13/10/1 \\\hline
Average Rank & 1.50 & 5.21 & 3.58 & 3.62 & 5.08 & 6.38 & 2.62 \\\hline
Average Fitness & 6.00E+06 & 1.05E+08 & 1.47E+08 & 1.05E+07 & 1.70E+07 & 1.34E+08 & 9.22E+06 \\\hline
\end{tabular}

    }
\end{table}

\begin{table}[htbp]
    \centering 
    \caption{
    Performance of MaTDE with different surrogate models on MCF2}
    \label{result_MaTDE_MCF2}
      \resizebox{\linewidth}{!}{
        \begin{tabular}{|c|c|c|c|c|c|c|c|}
\hline
Task & LLM\_MaTDE & MTGP\_MaTDE & RBFN\_MaTDE & EvidentialMLP\_MaTDE & FTTransformer\_MaTDE & SAINT\_MaTDE & MaTDE \\\hline
Task1 & \textbf{1.15E+03$\pm$1.27E+01} & 6.65E+03$\pm$6.14E+01(+) & 1.32E+03$\pm$1.85E+01(+) & 1.17E+03$\pm$3.62E+00($\approx$) & 1.22E+03$\pm$4.61E+00(+) & 9.37E+03$\pm$3.62E+03(+) & 1.16E+03$\pm$8.21E+00($\approx$) \\\hline
Task2 & 1.34E+03$\pm$6.15E+01 & 9.14E+03$\pm$1.99E+02(+) & 1.46E+03$\pm$4.98E+01(+) & 3.66E+03$\pm$3.56E+02(+) & 1.69E+03$\pm$4.12E+01(+) & 2.33E+03$\pm$6.19E+01(+) & \textbf{1.27E+03$\pm$5.76E+01($\approx$)} \\\hline
Task3 & \textbf{2.70E+03$\pm$4.25E+01} & 3.27E+04$\pm$1.96E+04(+) & 3.03E+03$\pm$4.36E+01(+) & 9.32E+03$\pm$6.53E+02(+) & 3.00E+03$\pm$2.16E+01(+) & 6.89E+04$\pm$1.99E+04(+) & 3.01E+03$\pm$1.50E+02(+) \\\hline
Task4 & 1.86E+03$\pm$6.19E+02 & 3.72E+04$\pm$1.83E+04(+) & \textbf{1.81E+03$\pm$6.04E+01($\approx$)} & 6.17E+03$\pm$1.07E+03(+) & 2.06E+03$\pm$4.85E+01($\approx$) & 8.77E+03$\pm$7.34E+03(+) & 1.86E+03$\pm$3.10E+02($\approx$) \\\hline
Task5 & 1.11E+03$\pm$3.51E+00 & 1.17E+03$\pm$1.62E-01(+) & \textbf{1.10E+03$\pm$6.26E-02(-)} & 1.19E+03$\pm$5.76E-01(+) & 1.22E+03$\pm$2.67E+00(+) & 1.26E+03$\pm$5.34E-01(+) & 1.12E+03$\pm$1.75E+00(+) \\\hline
Task6 & 2.04E+03$\pm$6.98E+00 & 2.22E+03$\pm$6.14E-01(+) & \textbf{2.01E+03$\pm$1.11E+00(-)} & 2.21E+03$\pm$8.02E-01(+) & 2.21E+03$\pm$2.99E+00(+) & 2.21E+03$\pm$2.77E+01(+) & 2.08E+03$\pm$1.22E+01(+) \\\hline
Task7 & 3.14E+02$\pm$9.73E+00 & 5.19E+02$\pm$7.29E+01(+) & \textbf{2.32E+02$\pm$2.56E+00(-)} & 4.25E+02$\pm$2.31E+00(+) & 2.42E+02$\pm$6.12E+00(-) & 4.19E+02$\pm$6.06E+00(+) & 3.41E+02$\pm$1.23E+01(+) \\\hline
Task8 & 2.27E+03$\pm$1.74E+01 & 2.49E+03$\pm$4.61E+01(+) & \textbf{2.15E+03$\pm$3.96E+00(-)} & 2.47E+03$\pm$2.93E+00(+) & 2.52E+03$\pm$1.79E+01(+) & 2.50E+03$\pm$8.38E+00(+) & 2.32E+03$\pm$4.55E+00(+) \\\hline
Task9 & \textbf{1.30E+03$\pm$3.80E-01} & 1.32E+03$\pm$6.71E-01(+) & 1.31E+03$\pm$1.29E-01(+) & 1.32E+03$\pm$1.59E+00(+) & 1.32E+03$\pm$1.05E+00(+) & 1.35E+03$\pm$2.35E+01(+) & 1.30E+03$\pm$8.62E-01(+) \\\hline
Task10 & \textbf{2.04E+02$\pm$6.25E-01} & 2.10E+02$\pm$7.59E-01(+) & 2.08E+02$\pm$3.03E-01(+) & 2.13E+02$\pm$8.96E-01(+) & 2.18E+02$\pm$6.84E+00(+) & 8.23E+02$\pm$2.51E+02(+) & 2.07E+02$\pm$7.38E-01(+) \\\hline
Task11 & \textbf{1.71E+03$\pm$9.10E-01} & 1.78E+03$\pm$6.25E+01(+) & 1.73E+03$\pm$4.63E+00(+) & 1.72E+03$\pm$3.14E+00(+) & 1.74E+03$\pm$4.32E+01(+) & 1.73E+03$\pm$6.01E+00(+) & 1.71E+03$\pm$9.09E-01($\approx$) \\\hline
Task12 & 6.12E+02$\pm$1.27E+00 & 6.48E+02$\pm$3.51E+01(+) & 6.13E+02$\pm$1.28E+00($\approx$) & 6.16E+02$\pm$1.92E+00(+) & 6.44E+02$\pm$1.18E+01(+) & 6.43E+02$\pm$1.14E+01(+) & \textbf{6.12E+02$\pm$7.42E-01($\approx$)} \\\hline
Task13 & 1.82E+03$\pm$2.43E+01 & 3.19E+03$\pm$4.21E+00(+) & \textbf{1.78E+03$\pm$8.54E-01(-)} & 2.56E+03$\pm$8.92E+00(+) & 2.83E+03$\pm$5.34E+01(+) & 3.36E+03$\pm$2.72E+02(+) & 1.89E+03$\pm$3.99E+01(+) \\\hline
Task14 & \textbf{2.83E+03$\pm$7.41E+01} & 4.45E+03$\pm$5.84E+00(+) & 3.05E+03$\pm$6.95E+00(+) & 4.03E+03$\pm$1.57E+01(+) & 4.38E+03$\pm$4.07E+02(+) & 4.92E+03$\pm$3.10E+02(+) & 3.12E+03$\pm$1.27E+02(+) \\\hline
Task15 & 1.38E+03$\pm$1.17E+02 & 2.69E+03$\pm$3.43E+02(+) & \textbf{9.58E+02$\pm$8.18E+00(-)} & 1.86E+03$\pm$3.28E+01(+) & 1.77E+03$\pm$4.90E+01(+) & 4.35E+03$\pm$9.82E+01(+) & 1.42E+03$\pm$1.16E+02($\approx$) \\\hline
Task16 & 1.91E+03$\pm$1.20E+02 & 4.11E+03$\pm$4.78E+02(+) & \textbf{1.39E+03$\pm$2.08E+01(-)} & 2.98E+03$\pm$2.56E+01(+) & 4.30E+03$\pm$2.27E+02(+) & 5.58E+03$\pm$1.74E+02(+) & 2.35E+03$\pm$1.60E+02(+) \\\hline
Task17 & \textbf{2.40E+03$\pm$2.97E-01} & 2.41E+03$\pm$8.14E-02(+) & 2.41E+03$\pm$4.61E-01(+) & 2.42E+03$\pm$2.61E-02(+) & 2.43E+03$\pm$2.13E+00(+) & 2.55E+03$\pm$1.33E+02(+) & 2.40E+03$\pm$5.56E-01(+) \\\hline
Task18 & 1.51E+03$\pm$2.84E-01 & 1.52E+03$\pm$9.97E-01(+) & 1.52E+03$\pm$2.37E+00(+) & 1.51E+03$\pm$3.79E-01(+) & 1.70E+03$\pm$2.81E+01(+) & 1.75E+03$\pm$1.05E+02(+) & \textbf{1.51E+03$\pm$1.57E+00($\approx$)} \\\hline
Task19 & \textbf{1.81E+03$\pm$2.78E+00} & 1.98E+03$\pm$8.48E+01(+) & 1.85E+03$\pm$3.87E+00(+) & 1.83E+03$\pm$2.11E+00(+) & 2.27E+03$\pm$4.92E+01(+) & 2.10E+03$\pm$5.91E+01(+) & 1.82E+03$\pm$3.27E+00(+) \\\hline
Task20 & \textbf{2.52E+03$\pm$1.73E+00} & 2.78E+03$\pm$1.58E+02(+) & 2.60E+03$\pm$1.30E+01(+) & 2.55E+03$\pm$3.70E+00(+) & 2.85E+03$\pm$1.12E+02(+) & 2.65E+03$\pm$6.07E+01(+) & 2.53E+03$\pm$2.96E+00(+) \\\hline
Task21 & 7.25E+02$\pm$1.98E+01 & 7.79E+02$\pm$4.75E-01(+) & 7.75E+02$\pm$5.57E+00(+) & 7.47E+02$\pm$1.62E+00($\approx$) & 7.76E+02$\pm$2.47E-01(+) & 7.81E+02$\pm$3.30E-01(+) & \textbf{7.08E+02$\pm$3.26E+00(-)} \\\hline
Task22 & 2.47E+03$\pm$2.95E+00 & 2.49E+03$\pm$7.44E-02(+) & 2.47E+03$\pm$3.62E+00($\approx$) & 2.48E+03$\pm$1.27E-01(+) & 2.49E+03$\pm$6.71E-03(+) & 2.49E+03$\pm$2.06E-01(+) & \textbf{2.45E+03$\pm$1.28E+01(-)} \\\hline
Task23 & 2.07E+03$\pm$1.83E+00 & 2.08E+03$\pm$2.12E+00(+) & \textbf{2.06E+03$\pm$6.15E+00(-)} & 2.08E+03$\pm$6.53E-01(+) & 2.09E+03$\pm$1.78E-02(+) & 2.09E+03$\pm$2.84E+00(+) & 2.07E+03$\pm$4.99E+00(-) \\\hline
Task24 & \textbf{1.17E+03$\pm$1.89E+00} & 1.19E+03$\pm$1.32E+00(+) & 1.18E+03$\pm$2.63E-01($\approx$) & 1.18E+03$\pm$4.84E-01(+) & 1.19E+03$\pm$1.90E-02(+) & 1.18E+03$\pm$2.50E+00($\approx$) & 1.17E+03$\pm$1.32E+00($\approx$) \\\hline
+/$\approx$/- & NA & 24/0/0 & 12/4/8 & 22/2/0 & 22/1/1 & 23/1/0 & 13/8/3 \\\hline
Average Rank & 1.71 & 5.62 & 2.58 & 4.25 & 5.42 & 6.21 & 2.21 \\\hline
Average Fitness & 1.64E+03 & 5.24E+03 & 1.63E+03 & 2.36E+03 & 1.96E+03 & 5.59E+03 & 1.68E+03 \\\hline
\end{tabular}
    }
\end{table}

\begin{table}[htbp]
    \centering
    \caption{
    Performance of BLKT-DE with different surrogate models on MCF2}
    \label{result_BLKT-DE_MCF2}
      \resizebox{\linewidth}{!}{
        \begin{tabular}{|c|c|c|c|c|c|c|c|}
\hline
Task & LLM\_BLKT-DE & MTGP\_BLKT-DE & RBFN\_BLKT-DE & EvidentialMLP\_BLKT-DE & FTTransformer\_BLKT-DE & SAINT\_BLKT-DE & BLKT-DE \\\hline
Task1 & 1.17E+03$\pm$2.25E+01 & 6.57E+03$\pm$5.09E+01(+) & 2.00E+03$\pm$3.31E+02(+) & 1.17E+03$\pm$6.39E+00($\approx$) & 1.35E+03$\pm$9.81E+01(+) & 1.44E+04$\pm$4.76E+00(+) & \textbf{1.16E+03$\pm$1.57E+01($\approx$)} \\\hline
Task2 & 1.33E+03$\pm$5.63E+01 & 9.25E+03$\pm$1.63E+02(+) & 1.99E+03$\pm$4.48E+02(+) & 3.58E+03$\pm$3.68E+02(+) & 1.72E+03$\pm$2.62E+01(+) & 2.92E+03$\pm$4.81E+00(+) & \textbf{1.28E+03$\pm$7.60E+01($\approx$)} \\\hline
Task3 & \textbf{2.83E+03$\pm$2.54E+02} & 3.77E+04$\pm$1.79E+04(+) & 6.21E+03$\pm$1.19E+02(+) & 7.39E+03$\pm$7.42E+02(+) & 4.51E+03$\pm$2.24E+03($\approx$) & 1.29E+05$\pm$2.92E+02(+) & 3.10E+03$\pm$2.02E+02($\approx$) \\\hline
Task4 & 2.09E+03$\pm$3.99E+02 & 3.76E+04$\pm$7.12E+03(+) & \textbf{2.06E+03$\pm$2.26E+01($\approx$)} & 7.28E+03$\pm$2.19E+03(+) & 2.94E+03$\pm$9.23E+02($\approx$) & 1.03E+05$\pm$6.46E+02(+) & 2.09E+03$\pm$1.86E+02($\approx$) \\\hline
Task5 & 1.11E+03$\pm$2.79E+00 & 1.17E+03$\pm$1.97E-01(+) & \textbf{1.10E+03$\pm$0.00E+00(-)} & 1.19E+03$\pm$8.13E-01(+) & 1.23E+03$\pm$4.91E+00(+) & 1.27E+03$\pm$2.89E-05(+) & 1.12E+03$\pm$4.60E+00($\approx$) \\\hline
Task6 & 2.02E+03$\pm$9.55E+00 & 2.23E+03$\pm$1.01E+01(+) & \textbf{2.00E+03$\pm$9.51E-03(-)} & 2.21E+03$\pm$2.55E+00(+) & 2.19E+03$\pm$2.73E+01(+) & 2.24E+03$\pm$7.39E+00(+) & 2.07E+03$\pm$8.63E+00(+) \\\hline
Task7 & 3.06E+02$\pm$1.48E+01 & 5.10E+02$\pm$5.90E+01(+) & 2.20E+02$\pm$2.70E-01(-) & 4.19E+02$\pm$9.50E+00(+) & \textbf{2.06E+02$\pm$7.31E+00(-)} & 4.07E+02$\pm$1.27E-01(+) & 3.44E+02$\pm$1.91E+01(+) \\\hline
Task8 & 2.25E+03$\pm$1.84E+01 & 2.50E+03$\pm$5.08E+01(+) & \textbf{2.12E+03$\pm$7.45E-01(-)} & 2.48E+03$\pm$9.77E+00(+) & 2.47E+03$\pm$2.16E+01(+) & 2.61E+03$\pm$1.31E+02(+) & 2.32E+03$\pm$2.71E+01(+) \\\hline
Task9 & 1.30E+03$\pm$1.64E+00 & 1.32E+03$\pm$3.67E-01(+) & 1.33E+03$\pm$1.24E+01(+) & 1.32E+03$\pm$9.37E-01(+) & 1.32E+03$\pm$2.40E+00(+) & 1.32E+03$\pm$1.65E-02(+) & \textbf{1.30E+03$\pm$6.43E-01($\approx$)} \\\hline
Task10 & \textbf{2.05E+02$\pm$8.65E-01} & 2.10E+02$\pm$1.39E+00(+) & 2.27E+02$\pm$8.44E+00(+) & 2.13E+02$\pm$1.09E+00(+) & 2.36E+02$\pm$4.92E+00(+) & 4.01E+02$\pm$8.77E+00(+) & 2.06E+02$\pm$1.11E+00($\approx$) \\\hline
Task11 & \textbf{1.71E+03$\pm$6.83E-01} & 1.77E+03$\pm$3.30E+01(+) & 1.75E+03$\pm$2.03E+01(+) & 1.72E+03$\pm$4.96E-01(+) & 1.78E+03$\pm$4.66E+01(+) & 1.73E+03$\pm$7.14E-01(+) & 1.71E+03$\pm$1.15E+00(+) \\\hline
Task12 & \textbf{6.12E+02$\pm$2.28E+00} & 6.54E+02$\pm$2.39E+01(+) & 6.15E+02$\pm$6.28E-01(+) & 6.17E+02$\pm$3.02E+00(+) & 6.68E+02$\pm$1.03E+01(+) & 6.84E+02$\pm$1.88E+00(+) & 6.12E+02$\pm$1.34E+00($\approx$) \\\hline
Task13 & 1.84E+03$\pm$3.69E+01 & 3.19E+03$\pm$4.87E+00(+) & \textbf{1.78E+03$\pm$6.58E-02(-)} & 2.56E+03$\pm$1.21E+01(+) & 2.85E+03$\pm$2.63E+02(+) & 3.52E+03$\pm$1.83E-01(+) & 1.85E+03$\pm$4.46E+01($\approx$) \\\hline
Task14 & \textbf{2.86E+03$\pm$3.65E+01} & 4.46E+03$\pm$2.14E+01(+) & 3.05E+03$\pm$2.21E+00(+) & 4.01E+03$\pm$3.12E+01(+) & 5.56E+03$\pm$1.97E+02(+) & 5.91E+03$\pm$3.85E+02(+) & 3.25E+03$\pm$2.16E+02(+) \\\hline
Task15 & 1.48E+03$\pm$5.63E+01 & 2.94E+03$\pm$3.64E+02(+) & \textbf{9.70E+02$\pm$2.46E+01(-)} & 1.86E+03$\pm$8.15E+01(+) & 2.81E+03$\pm$5.87E+02(+) & 5.03E+03$\pm$1.87E+01(+) & 1.63E+03$\pm$1.03E+02($\approx$) \\\hline
Task16 & 1.96E+03$\pm$1.01E+02 & 4.41E+03$\pm$2.11E+02(+) & \textbf{1.54E+03$\pm$5.06E+01(-)} & 2.94E+03$\pm$7.85E+01(+) & 5.11E+03$\pm$2.11E+02(+) & 6.34E+03$\pm$1.57E+01(+) & 2.29E+03$\pm$2.16E+02($\approx$) \\\hline
Task17 & \textbf{2.40E+03$\pm$2.30E-01} & 2.41E+03$\pm$7.39E-02(+) & 2.41E+03$\pm$1.00E-06(+) & 2.42E+03$\pm$8.07E-02(+) & 2.60E+03$\pm$2.04E+02(+) & 2.48E+03$\pm$5.02E+01(+) & 2.40E+03$\pm$4.14E-01(+) \\\hline
Task18 & 1.51E+03$\pm$1.40E+00 & 1.59E+03$\pm$9.12E+01(+) & 1.53E+03$\pm$8.68E+00(+) & 1.51E+03$\pm$1.91E-01($\approx$) & 1.76E+03$\pm$3.19E+01(+) & 1.92E+03$\pm$6.47E-02(+) & \textbf{1.51E+03$\pm$2.74E+00($\approx$)} \\\hline
Task19 & \textbf{1.81E+03$\pm$3.19E+00} & 1.97E+03$\pm$9.36E+01(+) & 1.88E+03$\pm$2.90E+00(+) & 1.82E+03$\pm$3.48E+00(+) & 2.49E+03$\pm$1.26E+02(+) & 2.30E+03$\pm$2.72E+00(+) & 1.82E+03$\pm$4.72E+00($\approx$) \\\hline
Task20 & \textbf{2.52E+03$\pm$2.57E+00} & 2.86E+03$\pm$2.35E+02(+) & 2.67E+03$\pm$1.80E+01(+) & 2.56E+03$\pm$9.26E+00(+) & 3.14E+03$\pm$1.29E+02(+) & 2.82E+03$\pm$8.81E-01(+) & 2.53E+03$\pm$5.74E+00(+) \\\hline
Task21 & 7.28E+02$\pm$1.06E+01 & 7.79E+02$\pm$5.89E-01(+) & 7.82E+02$\pm$0.00E+00(+) & 7.48E+02$\pm$3.23E+00(+) & 7.79E+02$\pm$4.58E+00(+) & 7.82E+02$\pm$1.06E-06(+) & \textbf{7.05E+02$\pm$2.81E+00(-)} \\\hline
Task22 & 2.47E+03$\pm$8.10E+00 & 2.49E+03$\pm$4.00E-02(+) & 2.48E+03$\pm$1.00E-02(+) & 2.48E+03$\pm$3.28E-01(+) & 2.49E+03$\pm$1.98E-01(+) & 2.49E+03$\pm$1.87E-03(+) & \textbf{2.45E+03$\pm$1.49E+01(-)} \\\hline
Task23 & 2.07E+03$\pm$1.79E+00 & 2.09E+03$\pm$1.23E+00(+) & 2.09E+03$\pm$1.76E-03(+) & 2.08E+03$\pm$7.58E-01(+) & 2.09E+03$\pm$8.53E-03(+) & 2.09E+03$\pm$8.04E-05(+) & \textbf{2.07E+03$\pm$3.54E+00($\approx$)} \\\hline
Task24 & \textbf{1.18E+03$\pm$2.02E+00} & 1.19E+03$\pm$1.20E+00(+) & 1.18E+03$\pm$4.28E+00($\approx$) & 1.18E+03$\pm$7.70E-01(+) & 1.19E+03$\pm$7.83E-03(+) & 1.19E+03$\pm$9.63E-03(+) & 1.18E+03$\pm$2.06E+00($\approx$) \\\hline
+/$\approx$/- & NA & 24/0/0 & 15/2/7 & 22/2/0 & 21/2/1 & 24/0/0 & 7/15/2 \\\hline
Average Rank & 1.75 & 5.21 & 3.21 & 4.08 & 5.25 & 6.42 & 2.08 \\\hline
Average Fitness & 1.66E+03 & 5.49E+03 & 1.83E+03 & 2.32E+03 & 2.23E+03 & 1.24E+04 & 1.71E+03 \\\hline
\end{tabular}

    }
\end{table}

\begin{table}[htbp]
    \centering 
    \caption{
    Performance of MaTDE with different surrogate models on MCF3}
    \label{result_MaTDE_MCF3}
      \resizebox{\linewidth}{!}{
        \begin{tabular}{|c|c|c|c|c|c|c|c|}
\hline
Task & LLM\_MaTDE & MTGP\_MaTDE & RBFN\_MaTDE & EvidentialMLP\_MaTDE & FTTransformer\_MaTDE & SAINT\_MaTDE & MaTDE \\\hline
Task1 & \textbf{1.20E+03$\pm$1.87E+00} & 1.36E+03$\pm$1.20E+01(+) & 1.21E+03$\pm$1.85E+00(+) & 1.30E+03$\pm$9.17E+00(+) & 1.44E+03$\pm$1.16E+01(+) & 4.07E+03$\pm$3.25E+02(+) & 1.21E+03$\pm$2.43E+00(+) \\\hline
Task2 & 3.28E+02$\pm$5.43E+00 & 3.59E+02$\pm$3.74E+00(+) & 3.55E+02$\pm$7.40E+00(+) & \textbf{3.27E+02$\pm$3.56E+00($\approx$)} & 1.18E+03$\pm$2.77E+02(+) & 1.79E+03$\pm$2.51E+01(+) & 3.76E+02$\pm$1.14E+01(+) \\\hline
Task3 & \textbf{1.27E+03$\pm$7.90E+01} & 2.80E+03$\pm$8.70E+02(+) & 1.91E+03$\pm$5.62E+01(+) & 1.49E+03$\pm$3.26E+01(+) & 4.63E+03$\pm$1.40E+02(+) & 2.48E+03$\pm$1.78E+02(+) & 1.37E+03$\pm$4.07E+01($\approx$) \\\hline
Task4 & \textbf{1.68E+03$\pm$8.72E+01} & 4.18E+03$\pm$1.33E+03(+) & 2.61E+03$\pm$8.94E+01(+) & 2.29E+03$\pm$3.82E+01(+) & 4.75E+03$\pm$2.59E+02(+) & 4.21E+03$\pm$4.86E+02(+) & 1.90E+03$\pm$5.45E+01(+) \\\hline
Task5 & \textbf{2.40E+03$\pm$3.89E-01} & 2.41E+03$\pm$5.79E-02(+) & 2.41E+03$\pm$1.56E+00(+) & 2.42E+03$\pm$7.36E-02(+) & 2.43E+03$\pm$2.23E+00(+) & 2.48E+03$\pm$6.73E+01(+) & 2.40E+03$\pm$3.85E-01(+) \\\hline
Task6 & \textbf{1.51E+03$\pm$1.07E+00} & 1.52E+03$\pm$1.07E+00(+) & 1.52E+03$\pm$3.20E+00(+) & 1.51E+03$\pm$2.38E-01(+) & 1.70E+03$\pm$2.09E+01(+) & 1.74E+03$\pm$1.02E+02(+) & 1.51E+03$\pm$1.83E+00($\approx$) \\\hline
Task7 & \textbf{1.81E+03$\pm$1.10E+00} & 2.05E+03$\pm$1.86E+02(+) & 1.85E+03$\pm$4.89E+00(+) & 1.83E+03$\pm$2.61E+00(+) & 2.26E+03$\pm$4.74E+01(+) & 2.11E+03$\pm$1.47E+01(+) & 1.82E+03$\pm$2.84E+00(+) \\\hline
Task8 & \textbf{2.52E+03$\pm$4.01E+00} & 2.84E+03$\pm$3.72E+02(+) & 2.60E+03$\pm$2.63E+01(+) & 2.55E+03$\pm$2.71E+00(+) & 2.90E+03$\pm$1.16E+02(+) & 2.69E+03$\pm$5.10E+01(+) & 2.53E+03$\pm$3.33E+00(+) \\\hline
Task9 & 1.42E+03$\pm$6.15E+00 & 1.46E+03$\pm$3.76E-01(+) & 1.45E+03$\pm$2.82E-01(+) & 1.44E+03$\pm$1.41E+00(+) & 1.44E+03$\pm$2.70E+00(+) & 1.48E+03$\pm$3.98E-01(+) & \textbf{1.40E+03$\pm$1.95E+00(-)} \\\hline
Task10 & 7.46E+02$\pm$9.90E+00 & 7.70E+02$\pm$1.38E+00(+) & 7.70E+02$\pm$1.83E-01(+) & 7.68E+02$\pm$1.22E-01(+) & 7.75E+02$\pm$1.23E+00(+) & 7.80E+02$\pm$3.73E+00(+) & \textbf{7.33E+02$\pm$4.85E+00($\approx$)} \\\hline
Task11 & 2.47E+03$\pm$3.32E+00 & 2.48E+03$\pm$3.56E+00(+) & 2.48E+03$\pm$9.43E-01(+) & 2.48E+03$\pm$6.77E-01(+) & 2.49E+03$\pm$8.00E-02(+) & 2.49E+03$\pm$1.43E-01(+) & \textbf{2.46E+03$\pm$8.60E+00(-)} \\\hline
Task12 & 6.71E+02$\pm$5.24E+00 & 6.84E+02$\pm$1.19E+00(+) & \textbf{6.67E+02$\pm$7.39E-01($\approx$)} & 6.72E+02$\pm$1.79E+00($\approx$) & 6.86E+02$\pm$1.27E-01(+) & 6.85E+02$\pm$1.11E+00(+) & 6.67E+02$\pm$3.79E+00($\approx$) \\\hline
Task13 & 4.04E+02$\pm$1.66E+00 & 4.19E+02$\pm$2.33E+00(+) & 4.11E+02$\pm$2.89E+00(+) & 4.08E+02$\pm$3.26E+00($\approx$) & 4.25E+02$\pm$2.65E+00(+) & 7.23E+02$\pm$4.78E+01(+) & \textbf{4.03E+02$\pm$8.56E-01($\approx$)} \\\hline
Task14 & \textbf{1.71E+03$\pm$5.19E-01} & 1.71E+03$\pm$2.72E+00($\approx$) & 1.71E+03$\pm$2.16E+00(+) & 1.71E+03$\pm$2.81E+00($\approx$) & 1.85E+03$\pm$1.30E+01(+) & 1.93E+03$\pm$3.64E+01(+) & 1.71E+03$\pm$1.21E+00(+) \\\hline
Task15 & \textbf{2.01E+03$\pm$1.34E+00} & 2.04E+03$\pm$2.46E+01(+) & 2.01E+03$\pm$1.58E+00(+) & 2.01E+03$\pm$1.31E+00(+) & 2.25E+03$\pm$5.71E+01(+) & 2.06E+03$\pm$1.87E+01(+) & 2.01E+03$\pm$1.17E+00(+) \\\hline
Task16 & \textbf{9.08E+02$\pm$8.35E-01} & 9.53E+02$\pm$1.95E+01(+) & 9.10E+02$\pm$1.26E+00(+) & 9.09E+02$\pm$1.30E+00($\approx$) & 1.05E+03$\pm$2.26E+01(+) & 1.04E+03$\pm$3.56E+01(+) & 9.14E+02$\pm$1.38E+00(+) \\\hline
Task17 & \textbf{2.03E+02$\pm$1.91E-01} & 5.79E+04$\pm$1.20E+03(+) & 6.26E+02$\pm$1.11E+00(+) & 5.22E+03$\pm$4.89E+01(+) & 1.11E+04$\pm$4.11E+01(+) & 1.06E+05$\pm$1.41E+04(+) & 2.16E+02$\pm$1.92E+01($\approx$) \\\hline
Task18 & \textbf{2.10E+03$\pm$1.46E-01} & 2.27E+04$\pm$8.89E+02(+) & 3.70E+03$\pm$2.22E+02(+) & 1.53E+04$\pm$8.95E+02(+) & 1.00E+05$\pm$2.26E+03(+) & 1.80E+05$\pm$9.48E+03(+) & 5.13E+03$\pm$9.62E+02(+) \\\hline
Task19 & \textbf{1.35E+03$\pm$4.20E+02} & 1.75E+05$\pm$7.52E+04(+) & 2.86E+04$\pm$1.69E+03(+) & 2.32E+03$\pm$6.90E+02(+) & 1.50E+05$\pm$8.14E+03(+) & 3.03E+05$\pm$2.29E+04(+) & 1.08E+04$\pm$2.90E+03(+) \\\hline
Task20 & \textbf{2.90E+03$\pm$3.08E+02} & 1.72E+05$\pm$8.40E+04(+) & 8.40E+04$\pm$5.38E+03(+) & 1.01E+04$\pm$2.57E+03(+) & 1.74E+05$\pm$9.45E+03(+) & 7.84E+04$\pm$1.69E+04(+) & 2.42E+04$\pm$2.75E+03(+) \\\hline
Task21 & 2.05E+03$\pm$2.18E+01 & 2.07E+03$\pm$1.33E+01($\approx$) & 2.08E+03$\pm$7.02E-02($\approx$) & 2.09E+03$\pm$3.70E+00(+) & 2.09E+03$\pm$1.67E+01(+) & 2.24E+03$\pm$5.14E+01(+) & \textbf{2.04E+03$\pm$5.27E+00($\approx$)} \\\hline
Task22 & \textbf{1.30E+03$\pm$1.42E+01} & 1.36E+03$\pm$2.50E+01(+) & 1.32E+03$\pm$2.45E+01($\approx$) & 1.35E+03$\pm$1.52E+01(+) & 1.77E+03$\pm$1.88E+01(+) & 1.82E+03$\pm$7.91E+01(+) & 1.33E+03$\pm$6.69E+00(+) \\\hline
Task23 & 1.51E+03$\pm$1.40E+01 & 1.83E+03$\pm$1.22E+02(+) & 1.50E+03$\pm$2.76E+01($\approx$) & \textbf{1.49E+03$\pm$3.66E+01($\approx$)} & 2.09E+03$\pm$2.91E+01(+) & 2.27E+03$\pm$3.51E+01(+) & 1.55E+03$\pm$3.30E+01($\approx$) \\\hline
Task24 & 2.09E+03$\pm$1.44E+01 & 2.52E+03$\pm$1.30E+02(+) & \textbf{2.02E+03$\pm$2.11E+01(-)} & 2.11E+03$\pm$4.43E+01($\approx$) & 2.91E+03$\pm$4.67E+01(+) & 2.50E+03$\pm$2.56E+02(+) & 2.19E+03$\pm$2.07E+01(+) \\\hline
+/$\approx$/- & NA & 22/2/0 & 19/4/1 & 17/7/0 & 24/0/0 & 24/0/0 & 14/8/2 \\\hline
Average Rank & 1.46 & 5.00 & 3.29 & 3.21 & 6.21 & 6.38 & 2.46 \\\hline
Average Fitness & 1.52E+03 & 1.93E+04 & 6.20E+03 & 2.67E+03 & 1.99E+04 & 2.96E+04 & 2.95E+03 \\\hline
\end{tabular}
    }
\end{table}

\begin{table}[htbp]
    \centering
    \caption{
    Performance of BLKT-DE with different surrogate models on MCF3}
    \label{result_BLKT-DE_MCF3}
      \resizebox{\linewidth}{!}{
        \begin{tabular}{|c|c|c|c|c|c|c|c|}
\hline
Task & LLM\_BLKT-DE & MTGP\_BLKT-DE & RBFN\_BLKT-DE & EvidentialMLP\_BLKT-DE & FTTransformer\_BLKT-DE & SAINT\_BLKT-DE & BLKT-DE \\\hline
Task1 & \textbf{1.20E+03$\pm$8.59E-01} & 1.36E+03$\pm$1.10E+01(+) & 1.22E+03$\pm$1.23E+00(+) & 1.29E+03$\pm$1.12E+01(+) & 1.54E+03$\pm$2.54E+02(+) & 5.47E+03$\pm$7.30E+02(+) & 1.21E+03$\pm$2.49E+00(+) \\\hline
Task2 & 3.27E+02$\pm$3.94E+00 & 3.54E+02$\pm$8.78E+00(+) & 3.74E+02$\pm$4.02E+00(+) & \textbf{3.22E+02$\pm$4.70E+00($\approx$)} & 3.43E+03$\pm$7.18E+02(+) & 1.99E+03$\pm$8.01E+02(+) & 3.75E+02$\pm$1.59E+01(+) \\\hline
Task3 & \textbf{1.27E+03$\pm$3.36E+01} & 2.94E+03$\pm$4.88E+02(+) & 2.49E+03$\pm$8.23E+01(+) & 1.51E+03$\pm$2.65E+01(+) & 5.92E+03$\pm$3.15E+02(+) & 2.72E+03$\pm$1.91E+00(+) & 1.38E+03$\pm$5.30E+01(+) \\\hline
Task4 & \textbf{1.78E+03$\pm$7.50E+01} & 3.83E+03$\pm$4.81E+02(+) & 2.66E+03$\pm$6.91E+01(+) & 2.37E+03$\pm$8.22E+01(+) & 5.88E+03$\pm$5.22E+02(+) & 5.06E+03$\pm$8.84E+00(+) & 1.89E+03$\pm$1.24E+02($\approx$) \\\hline
Task5 & \textbf{2.40E+03$\pm$2.82E-01} & 2.41E+03$\pm$4.40E-02(+) & 2.41E+03$\pm$5.86E-07(+) & 2.42E+03$\pm$5.11E-02(+) & 2.62E+03$\pm$1.96E+02(+) & 2.43E+03$\pm$6.70E-01(+) & 2.40E+03$\pm$6.43E-01(+) \\\hline
Task6 & \textbf{1.51E+03$\pm$1.17E+00} & 1.56E+03$\pm$7.62E+01(+) & 1.53E+03$\pm$9.34E+00(+) & 1.51E+03$\pm$6.79E-01($\approx$) & 1.79E+03$\pm$9.31E+00(+) & 1.67E+03$\pm$1.56E+01(+) & 1.51E+03$\pm$2.25E+00($\approx$) \\\hline
Task7 & \textbf{1.81E+03$\pm$1.90E+00} & 2.00E+03$\pm$1.66E+02(+) & 1.88E+03$\pm$2.12E+00(+) & 1.83E+03$\pm$2.44E+00(+) & 2.40E+03$\pm$5.86E+01(+) & 2.24E+03$\pm$1.03E+02(+) & 1.82E+03$\pm$5.92E+00(+) \\\hline
Task8 & \textbf{2.52E+03$\pm$3.30E+00} & 2.66E+03$\pm$8.42E+01(+) & 2.67E+03$\pm$3.92E+00(+) & 2.56E+03$\pm$7.62E+00(+) & 3.05E+03$\pm$1.13E+02(+) & 2.69E+03$\pm$2.91E+01(+) & 2.54E+03$\pm$3.93E+00(+) \\\hline
Task9 & 1.42E+03$\pm$4.80E+00 & 1.46E+03$\pm$1.91E-01(+) & 1.45E+03$\pm$4.76E-03(+) & 1.44E+03$\pm$2.80E+00(+) & 1.44E+03$\pm$1.21E+01(+) & 1.48E+03$\pm$1.39E+00(+) & \textbf{1.41E+03$\pm$1.86E+00(-)} \\\hline
Task10 & \textbf{7.33E+02$\pm$1.82E+01} & 7.74E+02$\pm$2.15E+00(+) & 7.70E+02$\pm$2.28E-01(+) & 7.66E+02$\pm$1.48E+00($\approx$) & 7.81E+02$\pm$3.07E+00(+) & 7.75E+02$\pm$5.80E+00(+) & 7.47E+02$\pm$3.82E+00($\approx$) \\\hline
Task11 & 2.47E+03$\pm$8.90E-01 & 2.48E+03$\pm$1.55E+00(+) & 2.48E+03$\pm$1.73E-01(+) & 2.48E+03$\pm$9.28E-01(+) & 2.49E+03$\pm$4.38E-03(+) & 2.49E+03$\pm$1.01E-01(+) & \textbf{2.46E+03$\pm$8.53E+00(-)} \\\hline
Task12 & 6.69E+02$\pm$6.18E+00 & 6.86E+02$\pm$4.96E-01(+) & \textbf{6.67E+02$\pm$9.35E-01($\approx$)} & 6.72E+02$\pm$2.72E+00($\approx$) & 6.87E+02$\pm$1.06E-02(+) & 6.86E+02$\pm$8.44E-02(+) & 6.74E+02$\pm$2.08E+00($\approx$) \\\hline
Task13 & \textbf{4.02E+02$\pm$6.13E-01} & 4.17E+02$\pm$2.58E+00(+) & 4.10E+02$\pm$4.55E-01(+) & 4.10E+02$\pm$1.78E+00(+) & 4.33E+02$\pm$1.23E+01(+) & 7.01E+02$\pm$7.47E+01(+) & 4.04E+02$\pm$1.66E+00($\approx$) \\\hline
Task14 & \textbf{1.70E+03$\pm$7.25E-01} & 1.71E+03$\pm$1.67E+00(+) & 1.71E+03$\pm$1.40E+00(+) & 1.71E+03$\pm$3.59E+00(+) & 1.98E+03$\pm$6.62E+01(+) & 2.00E+03$\pm$1.15E+02(+) & 1.71E+03$\pm$1.16E+00(+) \\\hline
Task15 & \textbf{2.01E+03$\pm$1.55E+00} & 2.07E+03$\pm$2.44E+01(+) & 2.01E+03$\pm$2.00E+00(+) & 2.01E+03$\pm$5.65E-01(+) & 2.46E+03$\pm$2.61E+02(+) & 2.18E+03$\pm$6.44E+01(+) & 2.01E+03$\pm$2.74E+00(+) \\\hline
Task16 & \textbf{9.08E+02$\pm$9.62E-01} & 9.66E+02$\pm$1.98E+01(+) & 9.10E+02$\pm$5.61E-01(+) & 9.11E+02$\pm$1.23E+00(+) & 1.22E+03$\pm$6.33E+01(+) & 1.14E+03$\pm$7.50E+01(+) & 9.17E+02$\pm$1.80E+00(+) \\\hline
Task17 & \textbf{2.03E+02$\pm$1.30E-01} & 5.71E+04$\pm$1.27E+03(+) & 6.27E+02$\pm$4.62E-01(+) & 5.13E+03$\pm$2.68E+02(+) & 3.06E+04$\pm$2.38E+04(+) & 1.05E+05$\pm$1.98E+03(+) & 2.12E+02$\pm$1.59E+01(+) \\\hline
Task18 & \textbf{2.10E+03$\pm$2.92E-01} & 3.72E+04$\pm$2.85E+04(+) & 3.65E+03$\pm$3.22E+01(+) & 1.35E+04$\pm$3.06E+03(+) & 1.05E+05$\pm$2.43E+03(+) & 1.91E+05$\pm$1.57E+04(+) & 5.44E+03$\pm$1.23E+03(+) \\\hline
Task19 & \textbf{1.56E+03$\pm$3.87E+02} & 1.36E+05$\pm$4.43E+04(+) & 3.10E+04$\pm$9.30E+02(+) & 3.04E+03$\pm$1.75E+03($\approx$) & 1.86E+05$\pm$3.48E+03(+) & 4.86E+05$\pm$2.68E+04(+) & 1.27E+04$\pm$6.04E+03(+) \\\hline
Task20 & \textbf{3.87E+03$\pm$7.87E+02} & 2.88E+05$\pm$1.80E+05(+) & 9.89E+04$\pm$5.14E+03(+) & 1.48E+04$\pm$2.21E+03(+) & 2.26E+05$\pm$1.36E+04(+) & 1.74E+05$\pm$3.19E+04(+) & 3.51E+04$\pm$8.47E+03(+) \\\hline
Task21 & 2.05E+03$\pm$1.04E+01 & 2.06E+03$\pm$1.22E+01($\approx$) & 2.08E+03$\pm$2.32E-01(+) & 2.09E+03$\pm$7.47E+00(+) & 2.12E+03$\pm$2.17E+01(+) & 2.26E+03$\pm$7.72E+00(+) & \textbf{2.04E+03$\pm$9.66E+00($\approx$)} \\\hline
Task22 & \textbf{1.30E+03$\pm$1.82E+01} & 1.37E+03$\pm$1.75E+01(+) & 1.60E+03$\pm$6.22E+00(+) & 1.35E+03$\pm$1.05E+01(+) & 1.92E+03$\pm$7.24E+01(+) & 1.74E+03$\pm$7.49E+01(+) & 1.34E+03$\pm$1.47E+01(+) \\\hline
Task23 & 1.51E+03$\pm$2.71E+01 & 2.02E+03$\pm$9.02E+01(+) & \textbf{1.48E+03$\pm$2.29E+01($\approx$)} & 1.53E+03$\pm$1.75E+01($\approx$) & 2.26E+03$\pm$3.49E+01(+) & 2.27E+03$\pm$1.33E+02(+) & 1.58E+03$\pm$1.99E+01(+) \\\hline
Task24 & 2.12E+03$\pm$2.57E+01 & 2.58E+03$\pm$1.53E+01(+) & \textbf{2.03E+03$\pm$1.71E+01(-)} & 2.12E+03$\pm$2.76E+01($\approx$) & 3.34E+03$\pm$1.27E+02(+) & 3.03E+03$\pm$9.52E+01(+) & 2.23E+03$\pm$6.13E+01($\approx$) \\\hline
+/$\approx$/- & NA & 23/1/0 & 21/2/1 & 17/7/0 & 24/0/0 & 24/0/0 & 15/7/2 \\\hline
Average Rank & 1.29 & 4.83 & 3.42 & 3.08 & 6.46 & 6.29 & 2.62 \\\hline
Average Fitness & 1.58E+03 & 2.31E+04 & 6.96E+03 & 2.82E+03 & 2.48E+04 & 4.18E+04 & 3.50E+03 \\\hline
\end{tabular}
    }
\end{table}




\subsection{In-depth Analysis}
\subsubsection{Transfer Evaluation Across Tasks and Dimensions}
\begin{table}[htbp]
\centering
\caption{
Per-task comparison between single-task and multi-task models using sMAE and the transfer contribution rate (TCR). Positive values in the “TCR (\%)” column indicate positive transfer.}
      \resizebox{0.7\linewidth}{!}{
\begin{tabular}{|p{0.3\textwidth}|p{0.1\textwidth}|p{0.1\textwidth}|p{0.1\textwidth}|p{0.15\textwidth}|c|c|}
\hline
Task & sMAE (Single) & sMAE (Multi) & TCR (\%) & Conclusion & $R^{2}$ (Single) & $R^{2}$ (Multi)\\
\hline
Attractive\_Sector & 0.193 & 0.098 & +49.3\% & Positive & -0.659 & 0.414\\\hline
Bent\_Cigar & 0.034 & 0.026 & +23.0\% & Positive & -0.102 & 0.244\\\hline
Buche\_Rastrigin & 0.165 & 0.107 & +35.5\% & Positive & -0.622 & 0.275\\\hline
Composite\_Grie\_Rosen & 0.103 & 0.069 & +33.0\% & Positive & -0.346 & -0.330\\\hline
Different\_Powers & 0.090 & 0.059 & +34.8\% & Positive & -0.202 & 0.276\\\hline
Discus & 0.093 & 0.088 & +5.1\% & Positive & -0.233 & -0.077\\\hline
Ellipsoidal & 0.095 & 0.066 & +30.6\% & Positive & -0.423 & 0.189\\\hline
Ellipsoidal\_high\_cond & 0.105 & 0.066 & +37.6\% & Positive & -0.358 & 0.454\\\hline
Gallagher\_101Peaks & 0.094 & 0.072 & +23.3\% & Positive & -0.388 & 0.073\\\hline
Gallagher\_21Peaks & 0.072 & 0.057 & +20.4\% & Positive & -0.371 & -0.030\\\hline
Katsuura & 0.220 & 0.169 & +23.3\% & Positive & -0.991 & -0.290\\\hline
Linear\_Slope & 0.186 & 0.045 & +75.6\% & Positive & -0.911 & 0.878\\\hline
Lunacek\_bi\_Rastrigin & 0.150 & 0.090 & +40.4\% & Positive & -0.271 & 0.497\\\hline
Rastrigin & 0.163 & 0.078 & +51.9\% & Positive & -0.665 & 0.456\\\hline
Rastrigin\_F15 & 0.090 & 0.053 & +41.4\% & Positive & -0.421 & 0.420\\\hline
Rosenbrock\_original & 0.164 & 0.072 & +56.0\% & Positive & -0.347 & 0.735\\\hline
Rosenbrock\_rotated & 0.123 & 0.085 & +31.2\% & Positive & -1.408 & -0.349\\\hline
Schaffers & 0.073 & 0.033 & +54.5\% & Positive & -0.378 & 0.631\\\hline
Schaffers\_high\_cond & 0.037 & 0.024 & +35.3\% & Positive & -1.133 & 0.220\\\hline
Schwefel & 0.178 & 0.050 & +71.7\% & Positive & -0.395 & 0.886\\\hline
Sharp\_Ridge & 0.132 & 0.058 & +55.7\% & Positive & -0.173 & 0.751\\\hline
Sphere & 0.159 & 0.093 & +41.5\% & Positive & -0.328 & -0.328\\\hline
Step\_Ellipsoidal & 0.206 & 0.105 & +49.2\% & Positive & -1.039 & 0.406\\\hline
Weierstrass & 0.173 & 0.147 & +14.7\% & Positive & -0.650 & -0.250\\
\hline
{Macro avg} & 0.129 & 0.075 & +39.0\% & PTR=100.0\%, NTR=0.0\% & -0.534 & 0.256\\
\hline
\end{tabular}
}

\label{tab:tcr-per-task}
\end{table}

\begin{table}[htbp]
\centering
\caption{
Dimension-wise macro sMAE and TCR comparing single-task and multi-task models. PTR/NTR denote the proportion of tasks per dimension showing positive/negative transfer.}

  \resizebox{0.7\linewidth}{!}{
\begin{tabular}{|p{0.15\textwidth}|p{0.15\textwidth}|p{0.15\textwidth}|p{0.15\textwidth}|p{0.15\textwidth}|p{0.15\textwidth}|p{0.15\textwidth}|}
\hline
Dimension & sMAE (Single) & sMAE (Multi) & TCR (\%) & Conclusion & $R^{2}$ (Single) & $R^{2}$ (Multi)\\
\hline
5  & 3.447 & 0.134 & +96.1\% & Positive & -28.052 & 0.778\\\hline
10 & 0.129 & 0.079 & +39.2\% & Positive & -3.989 & 0.157 \\\hline
15 & 0.136 & 0.068 & +50.2\% & Positive & -0.826 & 0.428\\\hline
20 & 0.129 & 0.075 & +41.6\% & Positive & -0.847& 0.256\\
\hline
{Macro avg} & 0.960 & 0.089 & +56.8\% & PTR=100.0\%, NTR=0.0\% & -33.714 & 1.619\\
\hline

\end{tabular}}

\label{tab:tcr-by-dim}
\end{table}
To probe whether the above gains arise from effective cross-task sharing, we assess transfer at both the task and dimension levels. We use the per-task transfer contribution rate (TCR) to compare the multi-task meta-surrogate against its single-task counterparts:
\begin{equation}
\mathrm{TCR}(t)\;=\;\frac{{Err}_{\text{single}}(t)-{Err}_{\text{multi}}(t)}{{Err}_{\text{single}}(t)},
\qquad \mathrm{TCR}(t)>0\ \Rightarrow\ \text{positive transfer}.
\end{equation}
In addition to reporting \(\mathrm{TCR}(k)\) per task, we summarize the prevalence of transfer effects via the fractions of tasks with positive and negative transfer:
\begin{subequations}\label{eq:ptr-ntr}
\begin{align}
\mathrm{PTR} &= \frac{\bigl|\{\,t:\mathrm{TCR}(t)>0\,\}\bigr|}{NT},\\
\mathrm{NTR} &= 1-\mathrm{PTR}.
\end{align}
\end{subequations}
where \(NT\) is the number of tasks. We further aggregate by dimensionality by macro-averaging \(Err\) over tasks sharing the same dimension and computing \(\mathrm{TCR}\) analogously.

On the 24 BBOB objectives, the multi-task meta-surrogate consistently improves upon single-task surrogates. Table~\ref{tab:tcr-per-task} shows $\mathrm{PTR}=1.00$ and $\mathrm{NTR}=0.00$, mean/median $\mathrm{TCR}=0.389/0.365$, and a macro sMAE decrease from $0.129$ to $0.075$, i.e., a $41.6\%$ relative reduction. The distribution of TCR spans $[+0.051,\,+0.756]$, indicating that improvements are not confined to a few favorable cases but persist across ill-conditioning, separability, and multimodality. $R^{2}$ aligns with sMAE: the multi-task model typically raises $R^{2}$ and, in multiple cases, flips negative values to positive (e.g., Linear\_Slope $-0.911\!\rightarrow\!0.878$, Rosenbrock\_original $-0.347\!\rightarrow\!0.735$, Schwefel $-0.395\!\rightarrow\!0.886$). A few functions (e.g., Composite\_\allowbreak Grie\_\allowbreak Rosen, Sphere) retain negative $R^{2}$ despite lower sMAE, which suggests residual model under-specification on highly oscillatory or plateau-prone regions; nonetheless, the error-normalized TCR remains positive, indicating that magnitude-sensitive mispredictions have been substantially reduced.

To examine whether improvements arise uniformly across dimensionalities rather than from a particular task subset, we aggregate at the dimension level by macro-averaging sMAE over tasks sharing the same dimension and then compute $\mathrm{TCR}_{\text{dim}}$ analogously. Table~\ref{tab:tcr-by-dim} reports uniformly positive transfer: $\mathrm{TCR}_{\text{dim}}=0.961$ (5D), $0.392$ (10D), $0.502$ (15D), and $0.416$ (20D), each with $\mathrm{PTR}=1.00$. The pronounced $5$D improvement reflects that conditioning on metadata (function identity and dimension) enables the shared encoder to amortize representation learning when per-task samples are modest; as dimensionality increases, gains remain substantial ($\approx39\%\text{--}50\%$ macro sMAE reduction), consistent with the hypothesis that the scientific-notation tokenization and priority-aware loss curtail magnitude errors and mitigate gradient interference during joint training. Together, the task-wise and dimension-wise analyses indicate that the proposed conditioning mechanism realizes broad, class-agnostic knowledge sharing while suppressing negative transfer under range-normalized error, with improved calibration evidenced by $R^{2}$.

From a modeling perspective, these outcomes are consistent with two effects. First, shared token embeddings align numerical patterns across tasks at the character level, creating transferable local features even when decision-space semantics differ; this explains uniform gains on structurally related families (e.g., Rastrigin variants). Second, emphasizing sign and exponent tokens in the loss concentrates learning capacity on magnitude-defining symbols, reducing catastrophic errors that dominate sMAE after normalization and directly improving $R^{2}$. 

\subsection{Ablation Study}
\subsubsection{Prompt-Design Ablation}

\begin{figure}[htbp]
  \centering
    \begin{promptmini}[colframe=green!60!black, colback=green!5]{prompt\_box1: Large}
    \label{prompt_Large}
    \scriptsize
    You are an expert many-task surrogate model based on a large language model (LLM).  
    Your task is to predict the fitness value \texttt{y} of an individual \texttt{x} for a given black box function, described by metadata \texttt{m}.
    
    Each sample consists of:
    \begin{itemize}
      \item \texttt{m}: textual metadata describing the target function and its dimensionality.
      \item \texttt{x}: a sequence of float numbers representing an individual solution, tokenized
            in a digit wise manner using a base 10 scientific encoding.
      \item \texttt{y}: the scalar fitness value, also encoded in base 10 scientific notation.
    \end{itemize}
    
    \textbf{Data:}\\
    \texttt{m: The ID of this black-box function is <function\_id>, the human-assigned name is <function\_name>, key feature 1 is <key feature 1> | key feature 2 is <key feature 2> | ... | the dimensionality of decision variables is dim=<D>.}\\
    \texttt{x: [\{x\_1\}, \{x\_2\}, \dots, \{x\_d\}] $\rightarrow$ encoded as: [\textlangle 10\^{}exp\textrangle\; d\_1 d\_2 \dots d\_k]}\\
    \texttt{y:} target fitness value (to be predicted)
    
    \textbf{Examples:}\\
    \texttt{m: The ID of this black-box function is 2, the human-assigned name is Sphere, key feature 1 is "", the dimensionality of decision variables is dimension=4.}\\
    \texttt{x: [-2.065349139,\,-2.570456278,\,3.38108745,\,-3.38108745]}\\
    \texttt{$\rightarrow$ Encoded as:}\\
    \texttt{[}\\
    \texttt{\hspace*{1em}- \textlangle 10\^{}0\textrangle\; 2 0 6 5 3 4 9 1 3 9,}\\
    \texttt{\hspace*{1em}- \textlangle 10\^{}0\textrangle\; 2 5 7 0 4 5 6 2 7 8,}\\
    \texttt{\hspace*{1em}+\, \textlangle 10\^{}0\textrangle\; 3 3 8 1 0 8 7 4 5 0,}\\
    \texttt{\hspace*{1em}+\, \textlangle 10\^{}0\textrangle\; 3 3 8 1 0 8 7 4 5 0}\\
    \texttt{]}\\
    \texttt{y: [+\,\textlangle 10\^{}3\textrangle1 7 4 0 0 5 0 8 4 3]}
    
    Given \texttt{m} and \texttt{x} (encoded), predict \texttt{y} (fitness value) as accurately as possible.
    
    Now, predict the fitness for the following sample:\\
    \textbf{Data}\\
    \texttt{m: "\{m\_input\}"}\\
    \texttt{x: \{x\_encoded\}}\\
    \texttt{y:}
    \end{promptmini}
  \caption{Prompt\_box1 (Large). Comprehensive instruction prompt containing full metadata, scientific-notation encoding details, and a worked example used for fine-tuning the MetaSurrogate model.}
  \label{fig:prompt_large}
\end{figure}

\begin{figure}[htbp]
  \centering
    \begin{promptmini}[colframe=green!60!black, colback=green!5]{prompt\_box2: Middle}
    \label{prompt_Middle}
    \scriptsize
    You are a many-task surrogate model. Given (1) m: metadata describing the function and its dimension.(2) pop: a vector of floats (solution), encoded in base-10 scientific tokens. y: the fitness value. Predict y given m and pop; 
    \\
    \texttt{m: The ID of this black-box function is "\{function\_id\}", the human-assigned name is "\{function\_name\}, key feature 1 is "BBOB" | key feature 2 is "instance=0"| ... | the dimensionality of decision variables is dim="\{D\}".}
    \end{promptmini}
  \caption{Prompt\_box2 (Middle). Medium-length prompt that preserves key metadata fields while omitting verbose numeric-encoding examples to balance brevity and informativeness.}
  \label{fig:prompt_middle}
\end{figure}

\begin{figure}[htbp]
      \centering
    \begin{promptmini}[colframe=green!60!black, colback=green!5]{prompt\_box3: Small}
    \label{prompt_Small}
    \scriptsize
    You are a many-task surrogate model, predict fitness given m and pop;
    \\
    \texttt{m: function name is "\{function\_name\}, function ID is "\{function\_id\}", key feature 1 is "BBOB" | key feature 2 is "instance=0"|...| the dimensionality is dim="\{D\}".}
    \end{promptmini}
  \caption{Prompt\_box3 (Small). Concise prompt template that distills the task description to its essentials.}
  \label{fig:prompt_small}
\end{figure}

\begin{figure}[htbp]
  \centering
    \begin{promptmini}[colframe=green!60!black, colback=green!5]{prompt\_box4: base}
    \label{prompt_base}
    \scriptsize
    \texttt{"BBOB, instance="0", "\{function\_name\}", dimension="\{D\}".}
    \end{promptmini}
  \caption{Prompt\_box4 (Base). Minimal baseline prompt containing only the BBOB identifier, instance index, function name, and dimensionality.}
  \label{fig:prompt_base}
\end{figure}
\begin{table}[htbp]
\caption{The performance of MetaSurrogate on the 20-dimensional BBOB test dataset under different metadata.}
\centering
\label{tab:promptdim20}
\resizebox{0.7\linewidth}{!}{
      \begin{tabular}{|c|c|c|c|c|||c|c|c|c|}
\hline
 & \multicolumn{4}{|c|||}{sMAE} & \multicolumn{4}{c|}{R$^2$} 
 \\ \hline
 & Large & Middle & Small & Base & Large & Middle & Small & Base \\
\hline
Attractive\_Sector      &  0.1175 &           0.0993 &  \textbf{0.0619} &           0.0757 &           0.2615 &           0.4449 &   \textbf{0.6948} &            0.6125 \\
\hline
Bent\_Cigar             &  0.0359 &           0.0406 &  \textbf{0.0222} &           0.0398 &          -0.0551 &           0.1184 &   \textbf{0.1636} &            0.1213 \\
\hline
Buche\_Rastrigin        &  0.1280 &           0.0678 &  \textbf{0.0422} &           0.0435 &          -0.0363 &           0.2376 &   \textbf{0.7201} &            0.6779 \\
\hline
Composite\_Grie\_rosen  &  0.0736 &           0.0919 &  \textbf{0.0588} &           0.0782 &           0.2341 &           0.3149 &            0.4546 &   \textbf{0.4959} \\
\hline
Different\_Powers       &  0.0873 &           0.0827 &  \textbf{0.0319} &           0.0821 &          -0.4069 &           0.2126 &   \textbf{0.3349} &            0.2411 \\
\hline
Discus                  &  0.0905 &           0.0941 &  \textbf{0.0770} &           0.0821 &          -0.0147 &           0.1006 &   \textbf{0.6172} &            0.3500 \\
\hline
Ellipsoidal             &  0.0976 &           0.0717 &  \textbf{0.0691} &           0.1144 &          -0.5318 &           0.3180 &   \textbf{0.3479} &           -0.1004 \\
\hline
Ellipsoidal\_high\_cond &  0.1384 &           0.0990 &  \textbf{0.0689} &           0.1029 &          -0.3229 &           0.4004 &   \textbf{0.4962} &            0.3283 \\
\hline
Gallagher\_101Peaks     &  0.0924 &           0.0691 &  \textbf{0.0679} &           0.0861 &          -0.3035 &          -0.0350 &  \textbf{-0.0062} &           -0.2998 \\
\hline
Gallagher\_21Peaks      &  0.1653 &           0.0613 &  \textbf{0.0527} &           0.0652 &        -482.9473 &          -0.0294 &   \textbf{0.0487} &           -0.1638 \\
\hline
Katsuura                &  0.1582 &           0.1798 &  \textbf{0.1526} &           0.1587 &          -0.1962 &          -0.2654 &           -0.0781 &  \textbf{-0.0708} \\
\hline
Linear\_Slope           &  0.0786 &  \textbf{0.0454} &           0.0454 &           0.0544 &           0.6779 &  \textbf{0.8729} &           -0.7389 &            0.8271 \\
\hline
Lunacek\_bi\_Rastrigin  &  0.1015 &           0.0769 &           0.0837 &  \textbf{0.0711} &           0.4935 &  \textbf{0.6635} &            0.5671 &            0.5900 \\
\hline
Rastrigin               &  0.0637 &  \textbf{0.0495} &           0.0520 &           0.0670 &           0.1556 &  \textbf{0.4089} &            0.3508 &            0.2062 \\
\hline
Rastrigin\_F15          &  0.0589 &           0.0481 &           0.0435 &  \textbf{0.0418} &           0.1121 &           0.3837 &            0.4415 &   \textbf{0.5153} \\
\hline
Rosenbrock\_original    &  0.0926 &           0.0792 &  \textbf{0.0425} &           0.0668 &           0.6412 &           0.7232 &   \textbf{0.9146} &            0.7935 \\
\hline
Rosenbrock\_rotated     &  0.0845 &           0.0875 &  \textbf{0.0546} &           0.0829 &  \textbf{0.3670} &           0.3370 &            0.3136 &            0.2996 \\
\hline
Schaffers               &  0.0726 &           0.0451 &  \textbf{0.0371} &           0.0533 &           0.0441 &  \textbf{0.6892} &            0.5266 &            0.4973 \\
\hline
Schaffers\_high\_cond   &  0.1050 &           0.0175 &           0.0518 &  \textbf{0.0161} &          -0.4239 &           0.1292 &   \textbf{0.3046} &            0.1561 \\
\hline
Schwefel                &  0.0676 &           0.0464 &  \textbf{0.0442} &           0.0523 &           0.8050 &  \textbf{0.8983} &            0.8887 &            0.8730 \\
\hline
Sharp\_Ridge            &  0.0852 &           0.0753 &  \textbf{0.0602} &           0.0611 &           0.5058 &           0.6956 &            0.7467 &   \textbf{0.7552} \\
\hline
Sphere                  &  0.0971 &           0.0741 &           0.0576 &  \textbf{0.0426} &           0.5020 &          -5.3437 &            0.8309 &   \textbf{0.8969} \\
\hline
Step\_Ellipsoidal       &  0.1076 &  \textbf{0.0761} &           0.0767 &           0.0926 &          -0.0006 &           0.5633 &   \textbf{0.6211} &            0.5885 \\
\hline
Weierstrass             &  0.1515 &           0.1391 &           0.1457 &  \textbf{0.1293} &          -0.3100 &          -0.1424 &  \textbf{-0.0993} &           -0.1847 \\
\hline
\end{tabular}
}
\end{table}

\begin{table}[t]
\caption{
Metadata importance via LOO, permutation, and Grad\(\times\)Input (GI) attribution. Deltas are relative to full-metadata inputs (higher = more important).}
\label{tab:summary}
\centering
\resizebox{0.7\linewidth}{!}{
\begin{tabular}{|l|r|r|r|r|r|}
\hline
Field & LOO $\Delta$sMAE & LOO $\Delta(1 - R^2)$ & Perm $\Delta$sMAE & Perm $\Delta(1 - R^2)$ & GI Attr.\ Share \\
\hline
dataset  & 0.020 & 0.070 & 0.014 & 0.050 & 0.201 \\\hline
function\_id     & 0.010 & 0.040 & 0.007 & 0.028 & 0.108 \\\hline
instance & 0.025 & 0.090 & 0.016 & 0.060 & 0.246 \\\hline
function\_name     & 0.040 & 0.130 & 0.025 & 0.085 & 0.417 \\\hline
dim      & 0.004 & 0.020 & 0.002 & 0.012 & 0.028 \\\hline
\end{tabular}}
\end{table}

\begin{figure}[htbp]
  \centering
  \adjustbox{width=0.45\linewidth, trim=0 {.33\height} 0 0, clip}{
    \includegraphics{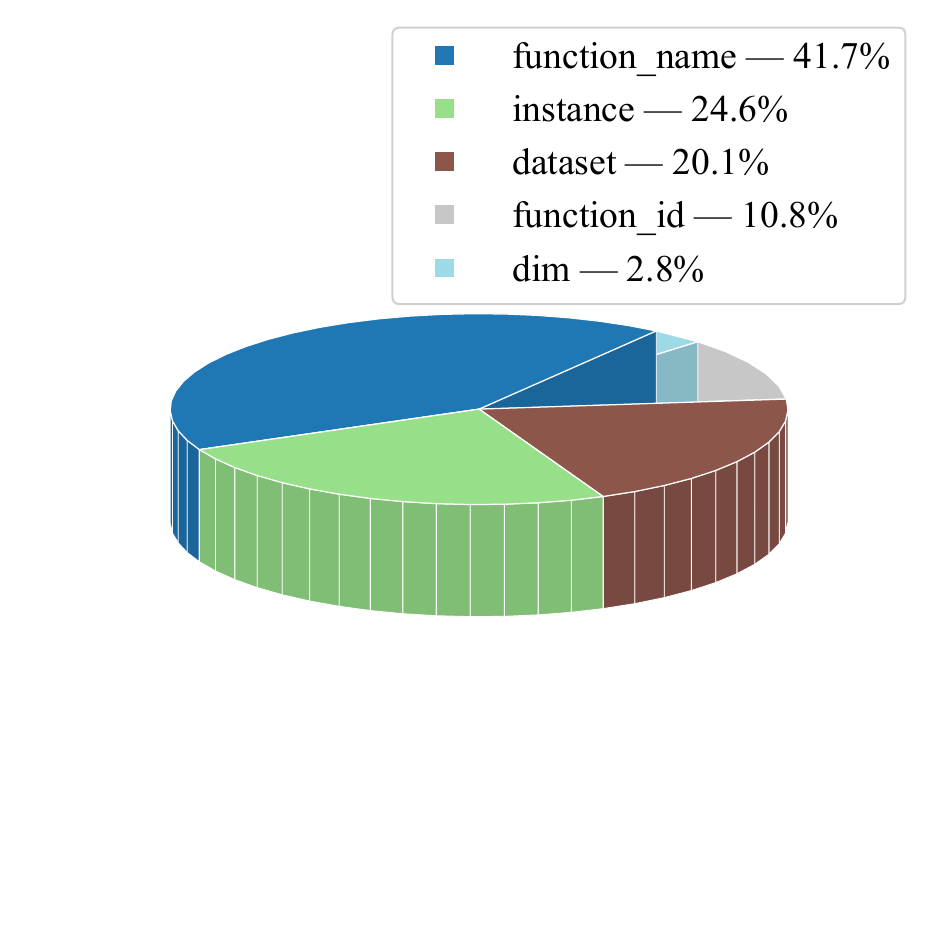}
  }
  \caption{
  GI-based attribution shares over metadata fields. Shares: function\_name $\approx 0.417$, instance $\approx 0.246$, dataset $\approx 0.201$, function\_id $\approx 0.108$, dim $\approx 0.028$.}
  \label{fig:pie_meta}
\end{figure}

We perform an ablation study on prompt design, varying verbosity and examining which metadata fields most benefit the conditional predictor \(p_\theta(y\!\mid\!m,x)\). We consider four instruction templates---\emph{Large}, \emph{Middle}, \emph{Small}, and \emph{Base}---whose formats are shown in Figs.~\ref{fig:prompt_large}--\ref{fig:prompt_base}. All training and decoding settings are identical. We report the scale-normalized mean absolute error (sMAE; lower is better) and the coefficient of determination \(R^2\) (higher is better).

On the 20D BBOB benchmark (Table~\ref{tab:promptdim20}), the \emph{Small} template offers the best trade-off: it attains the lowest sMAE on 16/24 functions and lifts the median \(R^2\) from \(0.11\) (\emph{Large}) to \(0.45\). The verbose \emph{Large} prompt underperforms consistently, suggesting that superfluous tokens diffuse the model’s limited attention and weaken the alignment between numerical tokens and task semantics. The ultra-compact \emph{Base} string performs competitively on smooth, separable landscapes (e.g., Sphere, Sharp\_Ridge) but exhibits high variance and occasional negative \(R^2\) on rugged multimodal functions. \emph{Middle} is a strong second, often matching or exceeding the \(R^2\) of \emph{Small} while remaining concise. Overall, concise yet self-contained instructions generalize best.

\textit{(1)Metadata analysis.}
Let \(m\) comprise fields $\{$ $\texttt{dataset}$, \allowbreak $\texttt{function\_id}$,\allowbreak $\texttt{instance}$, \allowbreak $\texttt{function\_name}$,\allowbreak $\texttt{dim}\}$. We quantify field importance using three complementary diagnostics: {(i) Leave-One-Out (LOO).} We mask one field at train using length-preserving placeholders to minimally disturb token positions; we then re-evaluate sMAE and \(R^2\). We report \(\Delta\mathrm{sMAE} = \mathrm{sMAE}_{\text{masked}} - \mathrm{sMAE}_{\text{full}}\) and \(\Delta(1-R^2) = (1-R^2)_{\text{masked}} - (1-R^2)_{\text{full}}\). Larger positive deltas indicate a more influential field. {(ii) Within-field permutation.} We shuffle test-set values within one field, preserving its marginal distribution but breaking alignment with labels, and compute the same deltas as above. {(iii) Encoder-side Grad\(\times\)Input (GI) attribution.} We obtain token-level GI scores on encoder inputs, map tokens to character offsets of the metadata string, and aggregate per-field importance. The per-example contributions sum to one over metadata tokens.

Table~\ref{tab:summary} summarizes LOO/permutation deltas together with GI shares; Fig.~\ref{fig:pie_meta} visualizes the attribution distribution. Consistent trends emerge across all three diagnostics: \texttt{function\_name} is the dominant driver (GI share \(\approx 0.417\)), followed by \texttt{instance} (\(\approx 0.246\)), then \texttt{dataset} (e.g., BBOB, CEC2010; \(\approx 0.201\)) and \texttt{function\_id} (\(\approx 0.108\)), with \texttt{dim} contributing the least (\(\approx 0.028\)). LOO deltas typically exceed permutation deltas, as hard removal is more destructive than breaking alignment while preserving marginals. Importantly, all five fields exhibit strictly positive LOO and permutation deltas, i.e., ablating any single component degrades both sMAE and \(R^2\); the set $\{$$\texttt{dataset}$,\allowbreak $\texttt{function\_id}$,\allowbreak $\texttt{instance}$,\allowbreak $\texttt{function\_name}$,\allowbreak $\texttt{dim}\}$ therefore constitutes a minimal, non–redundant basis for reliable conditioning.

These patterns indicate complementary—and non–substitutable—roles for the five fields. \texttt{function\_name} resolves objective semantics across suites and cannot be replaced by coarser identifiers; \texttt{instance} captures within–family landscape shifts (e.g., random shifts/rotations), whose removal yields systematic miscalibration. \texttt{dataset} (e.g., BBOB, CEC2010) supplies domain–level priors that reduce cross–suite aliasing and improve transfer; while partially overlapping with names/IDs, it is not substitutable and its ablation consistently harms accuracy. \texttt{function\_id} anchors an unambiguous mapping that mitigates synonymy/versioning issues and improves reproducibility under heterogeneous naming. Finally, \texttt{dim}, despite its smaller GI share, remains essential: the model can in principle infer dimensionality by parsing the numeric sequence, which explains the smaller marginal attribution. However, providing \texttt{dim} explicitly calibrates magnitude and complexity priors for the tokenized stream, and more importantly, directs the model to attend to the tail portion of long sequences. This mitigates the tendency of MetaSurrogate to down-weight later tokens under extended input lengths, thereby preserving calibration and generalization. {Taken together, the five fields are jointly necessary: each contributes distinct information and no single field can be dropped without a measurable loss in predictive quality.} Accordingly, we retain the \emph{Small} template because it is concise {and} preserves all five indispensable components.
\begin{table}[htbp]
\centering
\caption{
Length-controlled redundancy (LCR): median performance on 20D BBOB with \emph{Small} as baseline. Numeric-Share is the Grad$\times$Input share on numeric tokens; Tail-Attn-Mass is the attention mass on the last 25\% of tokens; Attn-Entropy is the average encoder attention entropy.
}
\label{tab:length-sweep}
\resizebox{0.65\linewidth}{!}{
\begin{tabular}{|l|c|c|c|c|c|}
\hline
Extra tokens & sMAE $\downarrow$ & $R^2$ $\uparrow$ & Numeric-Share $\uparrow$ & Tail-Attn-Mass $\uparrow$ & Attn-Entropy $\downarrow$\\
\hline
   +0 & 0.112 & 0.86 & 0.56 & 2.05 &2.15 \\\hline
 +64 & 0.142 & 0.74 &  0.49 & 0.18 & 2.18\\\hline
+128 & 0.129 & 0.75 & 0.42 & 0.27 & 2.27\\\hline
+192 & 0.134 & 0.73 & 0.36 & 0.35 & 2.33\\\hline
+256 & 0.160 & 0.65 & 0.29 & 0.24 & 2.44\\\hline
\end{tabular}}
\end{table}

\begin{table}[htbp]
\centering
\caption{
Field replication (FR): duplicate specific fields in the \emph{Small} template without changing values. Metrics are median over 20D BBOB.}
\label{tab:replication}
\resizebox{0.5\linewidth}{!}{
\begin{tabular}{|l|c|c|c|}
\hline
Variation & sMAE $\downarrow$ & $R^2$ $\uparrow$ & Numeric-Share $\uparrow$ \\\hline
    Baseline (Small) & 0.112 & 0.86 & 0.56 \\\hline
+ \texttt{function\_name} $\times 2$ & 0.17 & 0.83 & 0.09 \\\hline
+ \texttt{function\_name} $\times 3$ & 0.14 & 0.85 & 0.07 \\\hline
+ \texttt{dataset} $\times 3$ & 0.12 & 0.39 & 0.03 \\\hline
+ \texttt{instance} $\times 3$ & 0.13 & 0.36 & 0.11 \\\hline
+ \texttt{function\_id} $\times 3$ & 0.12 & 0.44 & 0.10 \\\hline
+ \texttt{dim} $\times 3$ & 0.17 & 0.42 & 0.10 \\\hline
All fields $\times 2$ & 0.19 & 0.33 & 0.06 \\\hline

\end{tabular}}
\end{table}

\textit{(2)Redundancy analysis.}
We examine why training-time redundancy reduces generalization using two controlled manipulations applied during training and evaluated under the same test protocol. The first is length-controlled redundancy (LCR): neutral, tokenization-stable fragments are appended after the \emph{Small} template, and four models are trained with +64, +128, +192, and +256 extra tokens. The second is field replication (FR): single or all metadata fields are duplicated without changing their values.

Under LCR (Table~\ref{tab:length-sweep}), \(R^2\) decreases from 0.86 to 0.65 as input length grows, while sMAE rises from 0.112 to 0.160. Three mechanistic indicators explain this trend. (1) \emph{Numeric-Share} (GI share on numeric tokens) drops from 0.56 to 0.29, suggesting that gradient budget shifts from structured numeric tokens (sign, exponent, leading mantissa) to redundant fragments, weakening magnitude calibration and digit-level alignment. (2) \emph{Tail-Attn-Mass} (average attention on the last 25\% of positions) declines sharply relative to baseline, indicating less favorable allocation in long contexts. (3) \emph{Attn-Entropy} increases (2.15$\to$2.44), consistent with attention diffusion.

FR (Table~\ref{tab:replication}) exhibits the same mechanism more directly. Replicating fields that strongly identify tasks induces shortcut learning: capacity is over-allocated to duplicated labels, at the expense of modeling input–output numeric structure. Doubling all fields yields \(R^2=0.33\) and \(\text{Numeric-Share}=0.06\); among single-field replications, \texttt{dataset} and \texttt{instance} are most detrimental. Relative decreases of 80\%–95\% in Numeric-Share under FR are common. In summary, training-time redundancy degrades generalization through (i) attention diffusion, (ii) competition for tail positions in long sequences, and (iii) absorption of representational/gradient budget by irrelevant fragments or duplicated labels. We therefore recommend the most concise prompt that still retains the necessary metadata fields.


\subsubsection{Effect of the Priority-Aware Weighted Cross-Entropy (PWCE)}
\label{subsec:ablation_pwce}
\begin{table}[htbp]
\caption{sMAE and coefficient of determination ($R^{2}$) of MetaSurrogate \textit{with} and \textit{without} the priority-aware weighted cross-entropy (PWCE) on 24 twenty-dimensional BBOB benchmark functions}
\centering
\label{tab:pwce_compare}
\resizebox{0.7\linewidth}{!}{
      \begin{tabular}{|l|p{0.2\textwidth}|p{0.2\textwidth}|p{0.2\textwidth}|p{0.2\textwidth}|}
\hline
 & \multicolumn{2}{|c|}{sMAE} & \multicolumn{2}{c|}{R$^2$} \\
 \hline
 & Meta-Surrogate & Mata-Surrogate-w/o-PWCE & MetaSurrogate & Mata-Surrogate-w/o-PWCE \\
\hline
Attractive\_Sector      &  \textbf{0.0619} &           0.0961 &   \textbf{0.6948} &            0.3196 \\
\hline
Bent\_Cigar             &  \textbf{0.0222} &           0.0414 &   \textbf{0.1636} &           -3.8633 \\
\hline
Buche\_Rastrigin        &  \textbf{0.0422} &           0.0551 &   \textbf{0.7201} &            0.5392 \\
\hline
Composite\_Grie\_rosen  &  \textbf{0.0588} &           0.0636 &   \textbf{0.4545} &            0.4215 \\
\hline
Different\_Powers       &  \textbf{0.0319} &           0.0365 &   \textbf{0.3349} &            0.1310 \\
\hline
Discus                  &  \textbf{0.0769} &           0.1552 &   \textbf{0.6172} &           -0.3885 \\
\hline
Ellipsoidal             &  \textbf{0.0691} &           0.0907 &   \textbf{0.3479} &           -0.0218 \\
\hline
Ellipsoidal\_high\_cond &  \textbf{0.0689} &           0.0772 &   \textbf{0.4962} &            0.3541 \\
\hline
Gallagher\_101Peaks     &  \textbf{0.0679} &           0.0748 &  \textbf{-0.0062} &           -0.1492 \\
\hline
Gallagher\_21Peaks      &  \textbf{0.0526} &           0.0541 &   \textbf{0.0487} &           -0.0058 \\
\hline
Katsuura                &  \textbf{0.1526} &           0.1575 &  \textbf{-0.0781} &           -0.1856 \\
\hline
Linear\_Slope           &  \textbf{0.0455} &           0.0913 &           -0.7389 &   \textbf{0.4914} \\
\hline
Lunacek\_bi\_Rastrigin  &  \textbf{0.0837} &           0.0887 &   \textbf{0.5671} &            0.4717 \\
\hline
Rastrigin               &  \textbf{0.0520} &           0.0652 &   \textbf{0.3508} &            0.0782 \\
\hline
Rastrigin\_F15          &  \textbf{0.0435} &           0.0560 &   \textbf{0.4415} &            0.2948 \\
\hline
Rosenbrock\_original    &  \textbf{0.0425} &           0.1027 &   \textbf{0.9147} &            0.4663 \\
\hline
Rosenbrock\_rotated     &  \textbf{0.0546} &           0.0616 &   \textbf{0.3136} &            0.2395 \\
\hline
Schaffers               &  \textbf{0.0371} &           0.0562 &   \textbf{0.5266} &           -0.2065 \\
\hline
Schaffers\_high\_cond   &  \textbf{0.0518} &           0.0617 &   \textbf{0.3046} &            0.1719 \\
\hline
Schwefel                &  \textbf{0.0442} &           0.0660 &   \textbf{0.8887} &            0.7408 \\
\hline
Sharp\_Ridge            &  \textbf{0.0602} &           0.1007 &   \textbf{0.7467} &            0.3112 \\
\hline
Sphere                  &  \textbf{0.0576} &           0.0985 &   \textbf{0.8309} &            0.4194 \\
\hline
Step\_Ellipsoidal       &  \textbf{0.0767} &           0.1224 &   \textbf{0.6211} &            0.1154 \\
\hline
Weierstrass             &           0.1457 &  \textbf{0.1452} &           -0.0993 &  \textbf{-0.0824} \\
\hline
\end{tabular}
}
\end{table}
\autoref{tab:pwce_compare} contrasts MetaSurrogate with its vanilla fine-tuning counterpart in which PWCE is replaced by the standard token-wise cross-entropy (\emph{MetaSurrogate-w/o-PWCE}). PWCE delivers a consistent gain on 23 out of 24 BBOB functions in terms of sMAE and on {22 out of 24} functions in terms of $R^{2}$. Averaged over all tasks, the sMAE drops from $0.0841$ to $0.0625$—--a 24.8\% relative reduction—--while the mean $R^{2}$ {increases} from $0.028$ to $0.394$, representing an order-of-magnitude improvement. 

The only clear outlier is \textit{Linear\_Slope}, whose $R^{2}$ deteriorates despite a substantial sMAE reduction. This deterministic, piecewise-linear landscape is dominated by the sign and exponent rather than the mantissa; PWCE’s linear decay of token importance (cf.~Section,\ref{subsec:finetune}) over-penalises early mantissa errors, leading to occasional misalignment around the slope discontinuity. Similar, but less pronounced, effects are observed on \textit{Weierstrass}. Nonetheless, the aggregate statistics confirm that prioritising structural tokens (sign, exponent, leading digit) is overwhelmingly beneficial across heterogeneous landscapes.

In summary, PWCE realises the intuition that magnitude-defining tokens are more critical than trailing digits, and it does so with negligible computational overhead. The ablation therefore validates PWCE as a sensible default loss for numerical-token generation in LLM-based surrogates.

\subsection{Parameter Sensitivity Analysis}
\subsubsection{Sensitivity to the PWCE Weight–Decay Parameter}
We examine the PWCE weight–decay parameter by sweeping $2\alpha\in$\{10,15,20,25,30\} while keeping all other settings fixed. The structural metrics are evaluated token–wise: Sign Err counts sign mismatches; Exp Err averages the absolute difference between the predicted and true exponents; 1st Mantissa Err averages the absolute difference of the first mantissa digit; Struct Exact Acc is the proportion where all three structural tokens match simultaneously. 

Table~\ref{tab:alpha-sensitivity} summarizes the results. Moderate structural weighting ($2\alpha\in[10,25]$) keeps Sign Err at 0 with small Exp Err and 1st Mantissa Err, indicating stable learning of polarity and order of magnitude. {Overemphasis} at $2\alpha=30$ increases Exp Err and 1st Mantissa Err and lowers Struct Exact Acc, suggesting that too much weight on early tokens hampers learning of fine mantissa digits. $R^2$ is the most discriminative indicator: it peaks at $2\alpha=20$, remains competitive at $2\alpha=25$, but deteriorates at $2\alpha=15$ and $30$. This pattern indicates that structural exactness alone does not guarantee the best numerical fit; some weight must remain for downstream mantissa digits. We therefore adopt $\alpha=10$ throughout the paper.

\begin{table}[htbp]
\centering
\small
\caption{
Sensitivity study for $2\alpha$. Exp Err and 1st Mantissa Err are mean absolute gaps; numerical metrics use the ground–truth $y$.}
\label{tab:alpha-sensitivity}
\resizebox{0.7\linewidth}{!}{
\begin{tabular}{|l|c|c|c|c|c|c|c|}
\hline
$2\alpha$ & $\lceil \alpha \rceil$ & Sign Err (\%) & Exp Err (\%) & 1st Mantissa Err (\%) & Struct Exact Acc (\%) & sMAE & $R^{2}$ \\
\hline
10 & 5  & 0.0 & 0.07 & 0.73 & 69.9 & 0.001 & 0.43 \\\hline
15 & 8  & 0.0 & 0.06 & 0.67 & 71.9 & 0.001 & -0.31 \\\hline
20 & 10 & 0.0 & 0.07 & 0.70 & 70.6 & 0.001 & \textbf{0.45} \\\hline
25 & 13 & 0.0 & 0.07 & 0.72 & 69.5 & 0.001 & 0.28 \\\hline
30 & 15 & 0.0 & 0.12 & 1.02 & 60.7 & 0.001 & 0.10 \\
\hline
\end{tabular}}
\end{table}


\subsubsection{Sensitivity of the Numeric-Precision Parameter $\gamma$}
  \begin{table}[htbp]\centering
\caption{
Sensitivity analysis of the numeric-precision $\gamma$ (D=20): sMAE (left) and $R^2$ (right).}
\label{tab:gamma-combined-single}
      \resizebox{0.7\linewidth}{!}{
\begin{tabular}{|l|c|c|c|c|c|c|c|c|}
\hline
 & \multicolumn{4}{c|}{sMAE (lower is better)} & \multicolumn{4}{c|}{$R^2$ (higher is better)} \\ \hline
Task & $\gamma=8$ & $\gamma=10$ & $\gamma=15$ & $\gamma=20$ & $\gamma=8$ & $\gamma=10$ & $\gamma=15$ & $\gamma=20$ \\ \hline
Attractive\_Sector & 0.235 & 0.236 & \textbf{0.062} & 0.235 & -0.379 & -0.371 & \textbf{0.695} & -0.374 \\ \hline
Bent\_Cigar & 0.214 & 0.206 & \textbf{0.022} & 0.311 & 0.184 & \textbf{0.191} & 0.164 & -0.104 \\ \hline
Buche\_Rastrigin & 0.176 & 0.289 & \textbf{0.042} & 0.822 & \textbf{0.768} & 0.285 & 0.720 & -3.984 \\ \hline
Composite\_Grie\_Rosen & 0.210 & 0.232 & \textbf{0.059} & 0.239 & \textbf{0.466} & 0.182 & 0.455 & 0.324 \\ \hline
Different\_Powers & 0.278 & 0.352 & \textbf{0.032} & 0.288 & -0.326 & -0.855 & \textbf{0.335} & -0.560 \\ \hline
Discus & 0.127 & 0.245 & \textbf{0.077} & 0.300 & 0.607 & -0.614 & \textbf{0.617} & -0.756 \\ \hline
Ellipsoidal & 0.430 & 0.519 & \textbf{0.069} & 0.396 & -1.071 & -1.664 & \textbf{0.348} & -0.719 \\ \hline
Ellipsoidal\_high\_cond & 0.283 & 0.247 & \textbf{0.069} & 0.312 & -0.089 & 0.244 & \textbf{0.496} & 0.023 \\ \hline
Gallagher\_101Peaks & 0.419 & 0.381 & \textbf{0.068} & 0.464 & -0.795 & -0.652 & \textbf{-0.006} & -1.086 \\ \hline
Gallagher\_21Peaks & 0.478 & 0.456 & \textbf{0.053} & 0.342 & -1.062 & -1.048 & \textbf{0.049} & -0.688 \\ \hline
Katsuura & 0.486 & 0.434 & \textbf{0.153} & 0.494 & -1.659 & -1.136 & \textbf{-0.078} & -1.583 \\ \hline
Linear\_Slope & 0.538 & 0.318 & \textbf{0.045} & 0.485 & -3.388 & \textbf{-0.240} & -0.739 & -0.990 \\ \hline
Lunacek\_bi\_Rastrigin & 0.265 & 0.284 & \textbf{0.084} & 0.195 & 0.383 & -0.233 & 0.567 & \textbf{0.621} \\ \hline
Rastrigin & 0.434 & 0.494 & \textbf{0.052} & 0.382 & -0.990 & -1.131 & \textbf{0.351} & -0.491 \\ \hline
Rastrigin\_F15 & 0.282 & 0.265 & \textbf{0.043} & 0.271 & -0.161 & 0.033 & \textbf{0.442} & -0.157 \\ \hline
Rosenbrock\_original & 0.309 & 0.291 & \textbf{0.043} & 0.232 & -0.572 & -0.292 & \textbf{0.915} & -0.306 \\ \hline
Rosenbrock\_rotated & 0.240 & 0.219 & \textbf{0.055} & 0.176 & 0.661 & 0.686 & 0.314 & \textbf{0.761} \\ \hline
Schaffers & 0.306 & 0.511 & \textbf{0.037} & 0.491 & -0.122 & -1.714 & \textbf{0.527} & -1.308 \\ \hline
Schaffers\_high\_cond & 0.255 & 0.289 & \textbf{0.052} & 0.297 & -0.020 & -0.180 & \textbf{0.305} & -0.167 \\ \hline
Schwefel & 0.112 & 0.122 & \textbf{0.044} & 0.280 & 0.777 & 0.721 & \textbf{0.889} & -0.365 \\ \hline
Sharp\_Ridge & 0.307 & 0.293 & \textbf{0.060} & 0.320 & 0.331 & 0.405 & \textbf{0.747} & 0.344 \\ \hline
Sphere & 0.358 & 0.358 & \textbf{0.058} & 0.286 & -0.914 & -0.909 & \textbf{0.831} & -0.592 \\ \hline
Step\_Ellipsoidal & 0.467 & 0.683 & \textbf{0.077} & 0.706 & -1.068 & -3.165 & \textbf{0.621} & -3.800 \\ \hline
Weierstrass & 0.318 & 0.319 & \textbf{0.146} & 0.318 & -0.450 & -0.452 & \textbf{-0.099} & -0.528 \\ \hline
\end{tabular}}
\end{table}
We vary only $\gamma$ and evaluate on the 20D slice covering all 24 BBOB functions. As reported in Table~\ref{tab:gamma-combined-single}, $\gamma=15$ attains the row-wise best sMAE on 24/24 tasks and yields the lowest column-mean sMAE. We observe a mild counter-trend on a few relatively smooth or ridge-like landscapes, where longer sequences bring a small $R^2$ gain (e.g., \textit{Lunacek\_bi\_Rastrigin} and \textit{Rosenbrock\_rotated} peak at $\gamma=20$). By contrast, on rugged multi-modal problems, $\gamma=20$ tends to hurt accuracy, consistent with attention dilution and a tighter input-length budget.

From a numerical standpoint, increasing $\gamma$ reduces quantization error: the worst-case relative rounding bound decays by roughly a factor of $10$ per extra digit. However, when sequence length becomes the dominant constraint (context window and attention spread), the benefits of larger $\gamma$ show diminishing returns. Balancing these effects, we adopt $\gamma=15$.

\subsection{Application Study: Planar Manipulator Control}

We evaluate the practical utility of the meta-surrogate on planar manipulator controls\footnote{The detailed problem definitions and code are available at \url{https://github.com/intLyc/MTO-Platform}.}. 

\begin{figure}[htbp]
    \centering
    \includegraphics[width=0.3\linewidth]{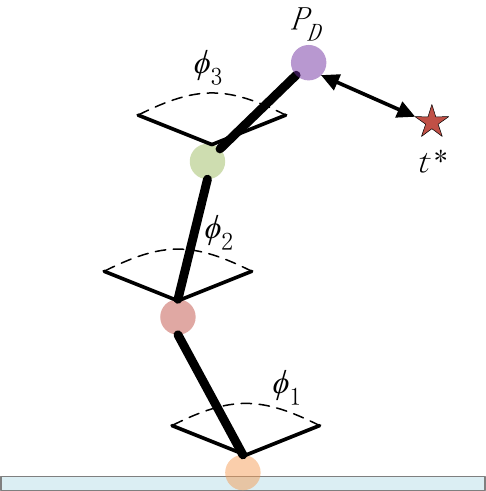}
    \caption{Illustration of a three-joint task in the planar manipulator control problem}
    \label{fig:real_problem}
\end{figure}

\autoref{fig:real_problem} illustrates a task instance from the planar manipulator control problem. In this setting, each task aims to optimize the joint angles \(\phi_{1},\phi_{2},\ldots,\phi_{d}\) so as to minimize the distance between the end-effector tip \(P_{D}\) and a prescribed target point \(t^\star\). The dimensionality \(d\) equals the number of joints (and links). Consequently, each task is a minimization problem whose decision variables are \(\phi_{1},\phi_{2},\ldots,\phi_{d}\) and whose objective is
\begin{equation}
J^{(t)}\!\left(\phi_{1},\phi_{2},\ldots,\phi_{d};\left[L^{t},\phi_{\max }^{t}\right]\right)
=\left\lVert P_{D}-t^\star\right\rVert .
\end{equation}
Here, \(L^{t}\) denotes the total length of all links in task \(t\), and \(\phi_{\max }^{t}\) denotes the sum of the maximum allowable joint angles (i.e., each link has length \(L^{t}/d\) and each joint has a maximum angle \(\phi_{\max }^{t}/d\)). By choosing different values of \(L^{t}\) and \(\phi_{\max }^{t}\), multiple tasks are generated via the centroidal Voronoi tessellation (CVT) method. For all tasks we set \(d=20\); each angle is constrained to \([0,1]\); and the target location is fixed at \(t^\star=[0.5,0.5]\). 
To isolate surrogate effects, we fix the evolutionary protocol across methods (each task has 500 fitness data points) and evaluate $NT\in\{$50, 100, 150, 200$\}$ concurrent tasks. Baselines include an RBFN-based surrogate and a direct-evaluation variant without a surrogate under the same budget.

Because absolute scales of $J^{(t)}$ differ across tasks, we adopt the \emph{mean standardized score} (MSS) as the primary metric. Let $y_i^{\star}$ be the final best (lowest) fitness on task $i$; let $\mu_i$ and $\sigma_i$ be the pooled mean and standard deviation across \emph{all} algorithms and runs on that task. We define
\begin{equation}
\label{eq:mss}
\mathrm{MSS} \;=\; \frac{1}{NT}\sum_{i=1}^{NT} \frac{y_i^{\star}-\mu_i}{\sigma_i},
\end{equation}
so smaller values indicate better performance (more negative means better-than-average relative to the pool). Statistical significance is assessed by the Wilcoxon rank-sum test at $\alpha_{\text{sig}}=0.05$; comparator cells are annotated with “$+/\approx/-$” to indicate significantly better / statistically indistinguishable / significantly worse relative to \textit{MetaSurrogate} within the same optimizer.

\begin{figure}[htbp]
      \centering
    \begin{promptmini}[colframe=green!60!black, colback=green!5]{prompt\_box3: Small}
    \label{prompt_Small}
    \scriptsize
    You are a many-task surrogate model, predict fitness given m and pop;
    \\
    \texttt{m: function name is "task11", function ID is "10", key feature 1 is "Planar\_Kinematic\_Arm\_Control" | key feature 2 is "instance=0"|...| the dimensionality is dim="20".}
    \end{promptmini}
  \caption{
  {
  Prompt template for the planar manipulator control task.}}
  \label{fig:prompt_small_app}
\end{figure}
\autoref{fig:prompt_small_app} presents the metadata template of our Planar Manipulator Control problem. Table~\ref{tab:mss-wide} summarizes results across both backends. Three observations follow. First, \textit{MetaSurrogate} dominates the direct-evaluation baseline, underscoring that in many-task regimes with tight per-task budgets, exploiting cross-task regularities via a high-capacity surrogate can outperform purely local search guided by exact but sparse evaluations. Second, relative to RBFN, \textit{MetaSurrogate} yields equal or better MSS with MaTDE (two significant wins, two ties) and strictly better MSS with BLKT-DE (three wins, one tie), suggesting benefits beyond the local smoothness assumptions favored by radial-basis models. Third, the gains strengthen with the number of tasks—especially under BLKT-DE—consistent with the hypothesis that text-conditioned metadata induces a transferable representation whose utility scales with task diversity. 

\begin{table}[htbp]
\centering
\caption{
Planar manipulator control: MSS over $NT$ tasks (lower is better). “$+/\approx/-$” compares each comparator to \textit{MetaSurrogate} within the same optimizer (Wilcoxon, $\alpha_{\text{sig}}=0.05$).}
\label{tab:mss-wide}
\resizebox{\linewidth}{!}{
\begin{tabular}{|c|c|c|c||c|c|c|}
\hline
Tasks & {MetaSurrogate\_MaTDE} & RBFN\_MaTDE & REAL\_MaTDE & {MetaSurrogate\_BLKT-DE} & RBFN\_BLKT-DE & REAL\_BLKT-DE \\
\hline
50  & \textbf{-0.5101} & -0.3099 ($\approx$) & 0.8200 ($+$) & \textbf{-0.4769} & -0.2271 ($\approx$) & 0.7040 ($+$) \\ \hline
100 & \textbf{-0.5374} & -0.2105 ($\approx$) & 0.7479 ($+$) & \textbf{-0.5941} & -0.1201 ($+$)        & 0.7141 ($+$) \\ \hline
150 & \textbf{-0.5580} & -0.1294 ($+$)        & 0.6873 ($+$) & \textbf{-0.6381} & -0.0253 ($+$)        & 0.6634 ($+$) \\ \hline
200 & \textbf{-0.5558} & -0.1421 ($+$)        & 0.6979 ($+$) & \textbf{-0.6231} & -0.0423 ($+$)        & 0.6654 ($+$) \\
\hline
\multicolumn{1}{|c|}{($+/\approx/-$)} &
\multicolumn{3}{c|}{RBFN\_MaTDE: $2/2/0$ \quad REAL\_MaTDE: $4/0/0$} &
\multicolumn{3}{c|}{RBFN\_BLKT-DE: $3/1/0$ \quad REAL\_BLKT-DE: $4/0/0$} \\
\hline
\end{tabular}}
\end{table}

Taken together, the study indicates that \textit{MetaSurrogate} generalizes beyond synthetic suites and serves as an effective, optimizer-agnostic fitness surrogate under realistic many-task conditions. The observed scaling trend with $NT$ further suggests that richer task diversity strengthens the learned cross-task representation, aligning with the design choice of conditioning on compact textual metadata.
\section{Potential Applications and Deployment Considerations}
\label{sec:applications_potential}

This section outlines when the proposed offline, many-task meta-surrogate is most actionable and how to deploy it responsibly. The approach is most suitable when (i) online evaluations are restricted or infeasible, so optimization must rely on pre-collected data; (ii) multiple related tasks co-exist but differ in input dimensionality and semantics; and (iii) concise textual metadata are available to encode task identity and context. Under these conditions, a single conditional predictor \(p_\theta(y\mid m,x)\) amortizes learning across tasks and provides calibrated, low-latency fitness estimates to guide evolutionary search.

Representative application domains include: (1) manufacturing and process optimization, {in which} ceramic formulations and high-temperature furnace/kiln operations quality or cost metrics depend on complex thermo-chemical processes, re-running trials is expensive or impractical, and historical logs are the primary data source; task identity arises from product families, instances/lots, and operating regimes, and encoding these descriptors as metadata \(m\) (e.g., family, instance, regime, dimension) enables transfer across families while respecting local regimes, providing fast screening for limited confirmatory tests \cite{Ceramic_Formula,fused_magnesium_furnaces,blast_furnace}; (2) design-space exploration under heavy simulators, notably electronic design automation and hardware accelerators, where multi-objective constraints and costly simulators make concurrent online sampling prohibitive, historical design–performance pairs across related IP blocks or technology nodes yield heterogeneous tasks, and with appropriate metadata (block type, node, constraint class, dimension) the meta-surrogate supports rapid candidate triage and budget-aware exploration \cite{Hardware_Accelerators}; (3) planning and public-safety analytics, where trauma system design relies on fixed historical event records as spatio-temporal references that cannot be resampled, tasks decompose by region or policy regime with metadata capturing geography and protocol variants, and the surrogate supplies stable, range-normalized predictions for scenario comparison and sensitivity analyses \cite{trauma_systems}; and (4) robotics and control, where related control tasks share kinematic structure but differ in dimensionality or constraints (e.g., planar manipulators with varying joint counts or limits), {therefore,} conditioning on metadata enables cross-task evaluation proxies without per-task retraining and complements evolutionary controllers in data-limited regimes.

For deployment, a minimal and robust workflow comprises {the following steps}: (1) metadata design—define a compact, machine-readable \(m\) capturing task identity and salient context (family/ID, instance, dimension, regime); (2) offline training—pool \((m,x,y)\) triples from historical tasks and fine-tune \(p_\theta(y\mid m,x)\) once; (3) serving—expose the surrogate as a stateless microservice and cache encoder states for common \(m\); (4) integration—replace a portion of real evaluations with surrogate calls inside the ETO loop and periodically validate with a small, budgeted set of ground-truth checks; (5) uncertainty gating—use token-level entropy or margin-based scores to select individuals for real evaluation when risk is high and to trigger {lightweight} recalibration; and (6) governance—monitor {distributional} shift in \(m\) and \(x\), document the supported task envelope, and disallow extrapolations that violate declared metadata ranges. In terms of scope and limitations, the surrogate inherits the coverage of the training corpus and the fidelity of metadata; it is most reliable within families seen during training and under modest distributional drift. For mission-critical use, conservative fallback policies, uncertainty-aware selection for ground-truthing, and routine recalibration when new regimes emerge are recommended. This paper focuses on single-objective offline many-task settings; extending to constrained and multi-objective cases is feasible but requires problem-specific decoding and calibration.

\section{Conclusion}
\label{sec:conclusion}
This paper {proposes} a novel LLM-based meta-surrogate framework aimed at addressing many-task optimization under data-driven conditions. 
By representing both problem metadata and decision variables {as} a unified token sequence, we {establish} a single model capable of cross-task fitness prediction without the need for separate training on each task. This token-based approach maintains fidelity across varying dimensions and problem complexities, significantly enhancing the flexibility and scalability of SAEAs. In particular, we design a {scientific notation encoding (SNE)} scheme to preserve crucial numerical information and introduced a priority-aware weighted cross-entropy (PWCE) to emphasize key numerical tokens during training. These design choices {contribute} to higher prediction accuracy and robust performance, as shown by our experiments.

We first demonstrated \emph{surrogate feasibility} (\textbf{RQ1}), showing that the meta-surrogate can effectively learn from tokenized numeric and metadata inputs, achieving results on par with or superior to traditional surrogates such as RBFNs and MTGPs. Notably, it requires only a single model to handle many tasks with diverse dimensions---an important advantage in real-world scenarios where problem dimensions or objectives vary significantly. 
We next {evaluate} the \emph{reliability \& calibration} properties of the model (\textbf{RQ2}). Token-level mechanistic analyses and uncertainty–error correlation studies reveal that the decoder attends primarily to sign- and exponent-level information and that mean token entropy correlates strongly with absolute prediction error, confirming that the meta-surrogate supplies trustworthy confidence estimates that can drive active sampling in online settings.
We then examined the \emph{emergent capability} (\textbf{RQ3}), revealing that the meta-surrogate exhibits promising cross-dimensional generalization properties, evidenced by its zero-shot performance on tasks with unseen dimensions. This emergent behavior underscores the potential of LLMs to transfer knowledge across tasks in ways that go beyond conventional neural surrogates. Finally, we validate \emph{optimization guidance performance} (\textbf{RQ4}) by integrating the meta-surrogate into evolutionary many-task algorithms. Comparative results on MCF benchmarks demonstrate marked gains over both original ETO algorithms and those aided by more conventional surrogates.

Although this study has validated the effectiveness and potential of the LLM-based meta-surrogate in many-task optimization, three practical limitations remain: (i) compared with lightweight GP/RBFN surrogates, the LLM microservice exhibits higher latency, memory usage, and energy consumption under concurrent queries; (ii) in compute-constrained DDEA environments, online fine-tuning incurs considerable overhead; and (iii) the current study is limited to single-objective, unconstrained, offline optimization tasks, without extensions to constrained, multiobjective, or multifidelity problems. To address these issues, future work will proceed in three directions: first, {to achieve} low-latency and high-throughput inference deployment through distillation, quantization, and adaptive precision, combined with semi-autoregressive decoding, early exit, cache reuse, and graph compilation optimizations; second, {to design} parameter-efficient fine-tuning mechanisms (e.g., LoRA/Adapters) triggered by uncertainty or distribution-shift detection, together with replay-regularized continual learning and hybrid uncertainty-gated ensembles incorporating lightweight surrogates, to enable stable online adaptation under strict compute and time budgets; and third, {to extend} the meta-surrogate to constrained, multiobjective, and multifidelity optimization via multi-head output structures, calibrated feasibility estimation, and constraint-aware acquisition strategies, thereby enhancing applicability and robustness across a broader spectrum of optimization tasks. 
Despite current limitations in data generalization, training cost, and handling of complex multi-constraint problems, the proposed method demonstrates significant potential to advance the development and deployment of LLM-based surrogate models {across} a wider range of applications and algorithms.


\bibliography{MTO}

\end{document}